\documentclass[11pt]{article} 
\usepackage{amsmath}
\usepackage{lscape}
\usepackage[section]{placeins}
\usepackage{mathtools}
\usepackage{amsthm,bm}
\usepackage{enumerate}
\usepackage{pifont}
\usepackage{dsfont}
\usepackage{times}
\usepackage{graphicx}
\usepackage{amssymb}
\usepackage{url} %,hyperref
\usepackage{microtype}
\usepackage{authblk}
\usepackage{siunitx}
\usepackage{subcaption}
\usepackage[margin=1in]{geometry}
\usepackage{longtable}
\usepackage{booktabs}
\usepackage{multirow}
\usepackage{xcolor}
\usepackage{comment}
\usepackage{color,soul} 
\usepackage{graphicx,tabularx} 
\usepackage{caption}
\usepackage{algorithm}
\usepackage{colortbl}

\usepackage{algpseudocode}
\usepackage{float}
\usepackage{minitoc}

\usepackage[toc,page,header]{appendix}
\usepackage{etoc}
\definecolor{darkgreen}{RGB}{0,90,0.1}
\usepackage[colorlinks=true, citecolor=darkgreen, urlcolor=darkgreen]{hyperref}

\usepackage{multicol}
\usepackage{dcolumn}
\usepackage{enumitem}
\usepackage[capitalize,noabbrev]{cleveref}
\usepackage[version=4]{mhchem} 
\usepackage[textsize=tiny]{todonotes}
\usepackage{natbib}
\usepackage{tikz}
\usetikzlibrary{calc}
\usetikzlibrary{calc}
\usepackage[most]{tcolorbox}
\usepackage{graphicx}   % for \scalebox
\usepackage{varwidth}
\usepackage{xcolor}
\hypersetup{
    colorlinks=true,
    linkcolor=blue,
    urlcolor=blue,
    citecolor=red
}

\theoremstyle{plain}

\newtheorem{theorem}{Theorem}[section]
\newtheorem{proposition}[theorem]{Proposition}
\newtheorem{lemma}[theorem]{Lemma}
\newtheorem{corollary}[theorem]{Corollary}
\theoremstyle{definition}

\newtheorem{assumption}[theorem]{Assumption}
\theoremstyle{remark}
\newtheorem{remark}[theorem]{Remark}
\newtheorem{exampleinner}{Example}[section]

\definecolor{LightBlue}{rgb}{1,0.9,0.9}
\definecolor{meaningless}{RGB}{255,0,0}
\definecolor{worstcase}{RGB}{0,255,0}
\definecolor{reliability}{RGB}{0,255,0}
\definecolor{robustoptim}{RGB}{255,255,0}
\definecolor{lightgray}{gray}{0.93}

\title{ReliableNet: A Chance-Constrained Approach to Trustworthy Classification in Deep Learning}

\author[1,2]{Ange-Clément Akazan}

\author[3]{Ineza Remy Mugenga}

\author[3]{Abebe Geletu}

\author[1]{Jean Medard Ngnotchouye}

\author[2]{Issa Karambal}

\affil[1]{School of Agriculture and Science, University of KwaZulu-Natal,
Pietermaritzburg, KwaZulu-Natal, South Africa}

\affil[2]{AIMS Research and Innovation Centre, African Institute for Mathematical Sciences,
Kigali, Rwanda}
\affil[3]{African Institute for Mathematical Sciences,
Kigali, Rwanda}

\date{}

\begin{document}

\maketitle

\begin{abstract}
A prediction that is both confident and wrong is a critical reliability failure because it can bypass abstention and human review precisely when the model is mistaken. Empirical risk minimization (ERM) controls average loss but not this failure directly, while calibration, uncertainty estimation, conformal risk control, and selective prediction methods target related reliability properties rather than bounding the joint failure event during training. We propose ReliableNet, which constrains the Joint Confident-Wrong (JCW) probability, the probability that a prediction is simultaneously confident and incorrect, below a user-specified risk budget $\alpha\in(0,1)$. We formulate this as a chance-constrained ERM problem, use a conservative smooth inner approximation whose population feasibility implies the original JCW constraint. Across four tabular and two image datasets, ReliableNet is the only method certified within the JCW budget for every dataset and seed in distribution, when compared against baselines spanning ERM, post-hoc calibration, conformal risk control, and selective prediction. Under demographic, ambiguity, spurious-correlation, novel-class, and covariate shifts, it achieves the lowest empirical JCW among the compared methods while remaining very competitive in accuracy, coverage, calibration, and selective prediction. Risk-coverage results further indicate that ReliableNet achieves better selective ranking than the benchmark methods  on most datasets. Overall, ReliableNet provides a principled approach to trustworthy classification.
\end{abstract}

\section{Introduction}
\label{sec_intro}

Machine learning systems are increasingly used across science, industry, and daily life, powering systems that translate language, recommend content, drive vehicles,  support medical decisions, and more. As these systems move from research benchmarks into deployment, the demands placed on them shift: a model is no longer judged solely by its average accuracy, but by whether it can be trusted in the specific situations where its outputs drive real decisions. This is especially acute in classification, where predictions increasingly influence consequential outcomes in medical diagnosis, autonomous perception, financial risk assessment, and scientific discovery \citep{yeturu2020machine,kotsiantis2007supervised}.
In such settings, accuracy alone is insufficient: a prediction is useful only if the system also signals when it should be trusted. 

\noindent Classifiers make this concrete through confidence scores, which are often used operationally, a high-confidence prediction may be trusted automatically, while a low-confidence one is deferred, rejected, or routed to a human reviewer. The central reliability question is therefore not whether a classifier is accurate on average, but whether it also avoids being wrong precisely when it appears most confident. This is difficult to guarantee in modern learning systems, where real-world problems involve noisy measurements, partial observability, distribution shift, and model misspecification, the very conditions under which confident errors are most likely and most costly.

These sources of uncertainty are commonly separated into \emph{aleatoric} uncertainty, which reflects irreducible ambiguity in the data-generating process, and \emph{epistemic} uncertainty, which reflects incomplete knowledge of the predictive mechanism and can in principle be reduced with better data or modeling \citep{HORA1996217,der2009aleatory,hullermeier2021aleatoric,gruber2023sources}. 

\medskip
\noindent In deployment, for classification tasks, both forms of uncertainty are often compressed into class probability vector and acted on using a scalar confidence score, typically the maximum predicted class probability, for class prediction. This creates a sharp failure mode: the classification model may assign high confidence to a prediction that is wrong, either because the input is intrinsically ambiguous or because the model is extrapolating beyond what it has learned. 
However, the dominant supervised learning paradigm, empirical risk minimization (ERM), is not designed to prevent this failure. ERM seeks parameters that minimize an expected loss,  $\mathbb{E}\big[\ell(f_\theta(X),Y)\big]$, or its empirical counterpart. This objective is well suited to average-case predictive performance, but it does not directly control rare but consequential tail events. A classifier can thus attain low average error while assigning near-certain confidence to a small set of incorrect predictions, and such high-confidence errors are qualitatively different from ordinary ones: they are the errors most likely to bypass caution, abstention, or human review.

\noindent The consequences are not abstract. A confident false negative in medical triage can send a malignant lesion home without further examination. A confident perception error in an autonomous vehicle can suppress the very braking response that was needed. A confident underestimate of credit risk can wave through a loan that should have been declined.  In each case it is not the error that causes the damage but the confidence attached to it: a hesitant mistake still invites a second look, whereas a confident one bypasses review entirely. The failure that matters in deployment is therefore specific, predictions that are simultaneously confident and wrong. These slip through abstention, deferral, and human review untouched, because
every confidence-based safeguard is triggered by low confidence. Worse, they are disproportionately the errors that survive training: ERM drives down average loss with no pressure on the high-confidence tail.

% That asymmetry is what we want a reliable classifier to respect, its confident predictions  should rarely, if ever, be wrong.
% \medskip
\medskip

\noindent Existing reliability methods address related aspects of this problem, but they do not target it directly. Post-hoc calibration (Platt and temperature scaling) adjusts predicted probabilities so that confidence better matches empirical accuracy \citep{platt1999probabilistic,guo2017calibration}, but it optimizes an average: expected calibration error sums the confidence-accuracy gap over all predictions, so it can be small while confident errors concentrate in the high-confidence region, a blind spot that even modern, well-calibrated architectures inherit \citep{minderer2021revisiting}. Average calibration is not even a reliable summary of confidence quality: \citet{chidambaram2025reassessing} show that a trivial constant-confidence predictor attains near-zero calibration error while carrying no discriminative information. Training-time modifications, label smoothing, focal loss, Brier-score objectives, mixup, likewise reduce overconfidence on average rather than bounding confident error \citep{szegedy2016rethinking,muller2019does,mukhoti2020calibrating,zhang2018mixup}. Selective prediction equips a classifier with a rejection option and studies the coverage-accuracy tradeoff on accepted samples \citep{chow1957reject,bartlett2008classification,geifman2019selectivenet,liu2019deep}; conformal methods give finite-sample post-hoc guarantees under exchangeability by calibrating prediction sets or risk-controlling thresholds on held-out data \citep{angelopoulos2024conformal,angelopoulos2023conformal}; and Bayesian and ensemble methods improve uncertainty estimates through posterior or model averaging \citep{gal2016dropout,lakshminarayanan2017simple}.
 These approaches target related reliability objectives, but not the specific training-time constraint on confident misclassification considered here.

\medskip
\noindent We focus on the event where a classifier predicts  confidently and incorrectly. We call the probability of this event the \emph{Joint Confident-Wrong} (JCW) probability, where a prediction is deemed confident when its top-class probability (confidence) exceeds a chosen confidence threshold $\varepsilon \in (0,1)$. Rather than measuring or correcting this quantity after training, we treat it as a constraint during training: we minimize the usual ERM classification loss while requiring the Joint Confident-Wrong
probability to stay below a user-specified risk budget $\alpha \in (0,1)$ (e.g., $\alpha=0.05$). The threshold defines what counts as a confident prediction, and the risk budget sets how much confident error
is tolerated, so the practitioner controls reliability directly through two interpretable quantities. This formulation results into  a single chance-constrained (probabilistic-constrained) optimization problem.

We encoded the high-confidence misclassifications using a scalar violation function $g_\theta$  that is differentiable almost everywhere to facilitate training. This formulation directly encodes the deployment requirement that accepted high-confidence predictions should rarely be wrong. The resulting optimization problem was challenging because the JCW constraint contains a discontinuous indicator event and is generally nonconvex. To make the problem tractable, we use the inner-outer analytic approximation framework for probabilistic-constrained optimization \citep{ nemirovski2007convex, doi:10.1137/15M1049750}. ReliableNet replaces the hard event during training with a smooth conservative inner surrogate, while the corresponding outer surrogate is used to assess approximation tightness. The resulting problem is a surrogate constrained optimization problem whose Lagrangian has a nonconvex-concave minimax structure, which motivates a two-timescale primal-dual scheme in which the model parameters are updated by descent and a bounded multiplier by projected ascent, increasing the influence of the constraint when violations occur \citep{lin2020gradient,lin2025two}.  Unlike post-hoc calibration or threshold selection, ReliableNet incorporates the reliability requirement during training, so that the model parameters are optimized under an explicit constraint on confident error.

The main contributions of this work are threefold.
\begin{itemize}
    \item
    We formalize the Joint Confident-Wrong probability, a direct measure of high-confidence misclassification, and formulate classifier
    training as a probabilistic-constrained problem that explicitly bounds this failure event. We further establish its connections to selective risk  and high-confidence calibration.

    \item
    We adapt a smooth inner approximation whose population feasibility implies satisfaction of the JCW constraint, together with
    finite-sample results linking empirical and population surrogate feasibility. We also provide an independent held-out Clopper-Pearson certificate for the final hard JCW event.

    \item
    We evaluate ReliableNet against six baselines spanning post-hoc calibration, conformal risk control, and selective prediction. Across
    four tabular and two image datasets, ReliableNet is the only method certified on every dataset and seed in distribution. Under demographic, ambiguity, spurious-correlation, novel-class, and covariate shifts, it attains the lowest empirical JCW while remaining stronger or very competitive in overall accuracy, calibration, and selective-prediction performance.
\end{itemize}

The remainder of the paper is organized as follows. \Cref{sec_related} reviews the state of the art and identifies the gap addressed by this study. \Cref{sec_formulation} concerns the problem definition, \Cref{sec_method} presents the JCW formulation and the ReliableNet training procedure. \Cref{sec_experiments} describes the experimental setup and reports the empirical results. \Cref{sec_conclusion} concludes the paper.

\section{State-of-the-Art}
\label{sec_related}
Work on reliable classification is most closely connected to four neighboring literature: calibration, selective prediction, conformal risk control, and predictive uncertainty estimation. Across most of these lines of work, the common objective is to make model confidence more informative for downstream decision making, especially when overconfident mistakes are costly. Our method is close in spirit to this literature, but differs in that it does not treat reliability only as a post-hoc diagnostic, an abstention policy, or an uncertainty score. Instead, it incorporates the failure event of interest directly into the training objective by constraining the probability of confident misclassifications.

A first line of work studies \emph{calibration}, namely the agreement between predicted confidence and empirical correctness. Classical
post-hoc methods include Platt scaling, isotonic regression, binning-based calibration, and temperature scaling \citep{platt1999probabilistic,zadrozny2002transforming, niculescu2005predicting,guo2017calibration}. A complementary line modifies
training itself through label smoothing, mix-up, focal loss, logit-norm penalties, or trainable calibration regularizers \citep{szegedy2016rethinking,muller2019does,zhang2018mixup, thulasidasan2019mixup,mukhoti2020calibrating,joo2020revisiting, kumar2018trainable}. These methods generally improve average calibration by reducing overconfidence. However, lower raw ECE does not necessarily imply more informative confidence scores. \citet{wang2021rethinking} show that confidence-penalizing regularizers can compress the score distribution and remove information about sample difficulty, thereby limiting subsequent post-hoc calibration. ReliableNet differs in both target and mechanism because rather than penalizing confidence globally, it directly constrains the probability that a prediction is simultaneously confident and wrong. Thus, high confidence remains permissible on correct predictions, while the specific failure event of confident misclassification is explicitly controlled.
% Accordingly, our aim is not to minimize average miscalibration but to bound the probability of high-confidence failure, a marginal-tail quantity that a low ECE does not control. We report ECE and AURC as secondary diagnostics rather than primary objectives, and find that our method remains competitive on both while directly optimizing the confident-wrong rate.

A second line of work addresses reliability through \emph{abstention} or \emph{selective prediction}. The classical reject-option formulation goes back to Chow, and later learning-theoretic work developed selective classification as a principled risk-coverage trade-off \citep{chow1957reject,chow1970optimum,bartlett2008classification,JMLR:v11:el-yaniv10a}. In deep learning, this perspective has led to confidence-thresholding baselines, SelectiveNet, and Deep Gamblers, which equip the predictor with an explicit reject option and optimize performance on the accepted subset of inputs \citep{geifman2017selective,geifman2019selectivenet,liu2019deep}. Closely related are trust estimation methods such as Trust Score, which attempt to predict whether an individual decision should be trusted by comparing classifier outputs to local neighborhood structure in feature space \citep{jiang2018trust}. This literature is operationally very close to our setting, but its guarantees are typically stated in terms of selective risk at a chosen coverage level, rather than as a direct upper bound on the probability that the model is simultaneously confident and wrong.

A third line of work provides \emph{post-hoc statistical guarantees}. Conformal prediction converts point predictions into prediction sets or abstaining decision rules with finite-sample validity guarantees under exchangeability \citep{angelopoulos2023conformal}. More recently, Conformal Risk Control generalized this paradigm beyond marginal coverage to broader monotone risk functionals, including risks relevant to selective prediction \citep{angelopoulos2024conformal}. Recent extensions have begun to relax the idealized exchangeability assumption in more specialized non-exchangeable settings, but the framework remains fundamentally calibration-based and post-hoc \citep{farinhas2024non, fontana2023conformal}. This makes conformal methods one of the strongest available tools for uncertainty control after training, yet conceptually they differ from our objective: they calibrate a decision rule built on top of a fixed predictor, whereas our method changes the predictor itself so as to reduce confident errors during learning.

A fourth line of work seeks more faithful \emph{predictive uncertainty estimates}. Bayesian approximations such as MC dropout and ensemble-based methods such as deep ensembles aim to represent epistemic uncertainty more faithfully than a single deterministic network \citep{gal2016dropout,lakshminarayanan2017simple,kendall2017uncertainties}. Large-scale evaluations have shown, however, that uncertainty quality can deteriorate substantially under dataset shift, even for strong approximate Bayesian and ensemble methods \citep{ovadia2019can}. Closely related is the literature on out-of-distribution and failure detection, where maximum softmax probability, ODIN, Mahalanobis-based scores, energy-based methods, and Outlier Exposure are all used to detect inputs on which the model is less trustworthy \citep{hendrycks17baseline,liang2018enhancing,lee2018simple,liu2020energy,hendrycks2018deep}. These approaches are important because they help identify when the model may fail, but they still treat uncertainty estimation or anomaly detection as the main object. They do not directly train the classifier to satisfy a user-specified bound on the event of confident misclassification.

On the other hand, probabilistic-constrained optimization has appeared in machine learning, but mostly in specialized settings rather than in modern confidence-aware classifier training. Early work focused on uncertain or robust support vector machine formulations, where probabilistic constraints were used to control misclassification under input uncertainty \citep{wang2018robust,khanjani2023distributionally,LIN2024106755}. More recent examples include sequence optimization, safe reinforcement learning, and deep metric learning, where probabilistic constraints are used to enforce validity, safety, or feasibility with high probability \citep{pmlr-v119-liu20i,chow2018risk,pmlr-v202-wang23as,10484253}. These works demonstrate that probabilistic-constrained optimization is a viable tool for risk control in deep learning, but they target different failure modes and typically rely on specialized formulations or problem-specific approximations. By contrast, our setting is standard probabilistic classification, and our goal is to control the probability of a model being simultaneously confident and wrong during end-to-end training.

\medskip

\noindent Relative to these literatures, ReliableNet is best viewed as a training-time reliability method. It is closer to selective prediction and calibration-aware learning than to post-hoc conformal adjustment or general-purpose uncertainty estimation, but differs from all of them in one essential respect: it treats \emph{high-confidence error itself} as the quantity to be controlled during optimization. In that sense, our method is not primarily a recalibration procedure, an abstention wrapper, or an uncertainty estimator; it is a classifier training objective designed to suppress a concrete tail-risk event that is operationally meaningful in deployment.

\section{Problem Definition}
\label{sec_formulation}
As discussed above, the high-confidence misclassification event is the main failure event to consider.
In this part, we formulate a probabilistic-constrained optimization learning problem that minimizes the empirical risk while directly controlling the probability of confident misclassifications, and derive its consequences for selective prediction and calibration.

\subsection{Setup}

Let $(\mathcal{X}, \mathcal{B}_{\mathcal{X}})$ be a
measurable input space,
$\mathcal{Y} = \{1, \dots, K\}$ a finite label space, and $Z = (X, Y) \sim P$ a random input-output pair drawn from an unknown reference distribution $P$. All probabilities and expectations are with respect to $P$. A parametric classifier $f_\theta : \mathcal{X} \to \mathbb{R}^K$,
$\theta \in \Theta \subseteq \mathbb{R}^d$, produces logits from which we derive class probabilities, predictions, and confidence, respectively for $k\in \{1,\cdots,K\}$:
\begin{equation}
    p_\theta(k \mid x)
    :=  \frac{\exp(f_\theta(x)_k)}{\sum_{j=1}^{K} \exp(f_\theta(x)_j)},\qquad \hat{y}_\theta(x) := \arg\max_k  p_\theta(k \mid x), \qquad c_\theta(x) := \max_k  p_\theta(k \mid x).
\end{equation}

We maintain the following assumptions throughout.

\begin{assumption}[Measurability]
\label{ass:measurability}
$f_\theta$, $\hat{y}_\theta$, and $c_\theta$ are Borel measurable for every $\theta \in \Theta$.
\end{assumption}

\begin{assumption}[Parameter space]
\label{ass:compact}
$\Theta \subseteq \mathbb{R}^d$ is nonempty. Where uniform convergence is invoked, we take $\Theta$ to be compact.
\end{assumption}

% \begin{assumption}[Integrability]
% \label{ass:integrability}
% The smoothed surrogates defined in Section~\ref{sec_method}
% are $P$-integrable for every $\tau > 0$ and
% $\theta \in \Theta$.
% \end{assumption}

\subsection{The Confident Misclassification Event}

For a classifier $f_\theta$ and a fixed confidence threshold $\varepsilon \in (1/K, 1)$ ($1/K$ because we want $\varepsilon$ to be greater than uniform confidence spread) , we are interested in the event that a random pair $Z = (X, Y) \sim P$ is simultaneously misclassified and assigned high confidence, that is, $\hat{y}_\theta(X) \neq Y$ and $c_\theta(X) \geq \varepsilon$. To refer to this event compactly, we introduce the notation
\begin{equation}
    \mathcal{C}_\theta(\varepsilon)
    := \{(x,y) \in \mathcal{X} \times \mathcal{Y} : c_\theta(x) \geq \varepsilon\}, \qquad \mathcal{B}_\theta := \{(x,y) \in \mathcal{X} \times \mathcal{Y} : \hat{y}_\theta(x) \neq y\},
\end{equation}
where $(x,y)$ is a generic point in $\mathcal{X} \times \mathcal{Y}$. These are deterministic level sets of the classifier $f_\theta$,
fully characterized by $\theta$ and $\varepsilon$. Under Assumption~\ref{ass:measurability}, both sets are Borel measurable, ensuring that the following probability is well-defined.
% : $\mathcal{C}_\theta(\varepsilon) = c_\theta^{-1}([\varepsilon,1])$
% is the preimage of a closed set under a measurable function, and
% $\mathcal{B}_\theta$ is the preimage of a measurable set under
% $\hat{y}_\theta$,
We define the \emph{joint confident-wrong} (JCW) probability as
\begin{equation}
\label{eq_jcw_def}
    \mathrm{JCW}(\theta)
    :=
    \Pr_{Z \sim P} \left(
      Z \in \mathcal{C}_\theta(\varepsilon) \cap \mathcal{B}_\theta
    \right),
\end{equation}
the probability, under the full data-generating distribution $P$, that a randomly drawn pair $Z \sim P$ is both misclassified and
highly confident. For any fixed $\theta \in \Theta$, the sets $\mathcal{C}_\theta(\varepsilon)$ and $\mathcal{B}_\theta$ are
deterministic, so the sole source of randomness in~\eqref{eq_jcw_def} is $Z \sim P$; consequently, $\mathrm{JCW}(\theta)$ is a deterministic scalar-valued function of $\theta$ alone, and the constraint $\mathrm{JCW}(\theta) \leq \alpha$ is a well-posed condition on the model parameters. The threshold $\varepsilon$ enters only as a pre-fixed hyperparameter.

JCW measures the probability of the specific failure mode that is most dangerous in decision-critical applications: the model is wrong \emph{and} reports high confidence. Unlike expected calibration error, which averages discrepancies across all confidence levels, JCW targets a single, operationally meaningful event. Unlike selective risk, which conditions on a selection decision, JCW is an unconditional joint probability over $P$. To encode the event $\{Z \in \mathcal{C}_\theta(\varepsilon) \cap \mathcal{B}_\theta\}$ we first used the multiclass margin defined as:
\begin{equation}
    \ell_\theta(x, y) := \max_{k \neq y}  f_\theta(x)_k - f_\theta(x)_y,
\end{equation}
which satisfies $\ell_\theta(x,y) > 0 \Leftrightarrow \hat{y}_\theta(x) \neq y$ under Assumption~\ref{ass:boundary}. Define the scalar violation function
\begin{equation}
\label{eq_g_def}
 g_\theta(Z) := \min \left\{  c_\theta(X) - \varepsilon,  \ell_\theta(X, Y) \right\}.
\end{equation}

\begin{assumption}[Continuity]
\label{ass:smooth_g}
For $P$-almost every $z$, the map $\theta \mapsto g_\theta(z)$ defined in~\eqref{eq_g_def} is continuous on $\Theta$.
\end{assumption}

\begin{assumption}[Boundary regularity]
\label{ass:boundary}
$Pr\{g_\theta(Z) = 0\} = 0$ for every $\theta \in \Theta$.
\end{assumption}

\noindent
Assumption~\ref{ass:boundary} excludes the measure-zero event of exact logit ties or exact threshold hits, which is satisfied generically for neural networks with continuous-valued logits under any non-degenerate input distribution.
\begin{corollary}[Exact encoding]
\label{corr_encoding}
Under Assumption~\ref{ass:boundary}, for $P$-almost every $z=(x,y)$, $ g_\theta(z) > 0 \quad\Longleftrightarrow\quad c_\theta(x) \geq \varepsilon \ \wedge\ \hat{y}_\theta(x) \neq y$,
 and consequently $\Pr\{g_\theta(Z) > 0\} = \mathrm{JCW}(\theta)$.
\end{corollary}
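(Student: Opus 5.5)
The plan is a pointwise comparison of the two events, done outside a single $P$-null set, and then integration. Fix $\theta\in\Theta$. By Assumption~\ref{ass:boundary}, the set $N_\theta:=\{z: g_\theta(z)=0\}$ has $P$-probability zero. I also want to discard exact logit ties. I will treat them as covered by the paper's reading of Assumption~\ref{ass:boundary}, namely that it excludes exact ties and exact threshold hits. Concretely, I will enlarge $N_\theta$ to include the tie set $T_\theta:=\{z:\ \ell_\theta(x,y)=0\}$, which is also null, and I fix the $\arg\max$ convention so that $\hat y_\theta$ is measurable (Assumption~\ref{ass:measurability}). For $z\notin N_\theta$ I then have $g_\theta(z)\neq 0$ and $\ell_\theta(x,y)\neq 0$.

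The first step is the margin equivalence $\ell_\theta(x,y)>0 \Leftrightarrow \hat y_\theta(x)\neq y$ off $T_\theta$. Softmax is strictly increasing in each logit, so $\hat y_\theta(x)$ is an index of maximal logit. If $\ell_\theta>0$, some $k\neq y$ has a strictly larger logit than $y$, so $y$ is not a maximizer and $\hat y_\theta(x)\neq y$. If $\ell_\theta<0$, then $y$ is the unique strict maximizer, so $\hat y_\theta(x)=y$. The case $\ell_\theta=0$ is exactly where the $\arg\max$ convention could break the equivalence, and it is excluded.

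The second step uses the elementary fact that $\min\{a,b\}>0 \Leftrightarrow a>0 \wedge b>0$, which gives
\[
g_\theta(z)>0 \iff c_\theta(x)>\varepsilon \ \wedge\ \hat y_\theta(x)\neq y .
\]
It then remains to replace $c_\theta(x)>\varepsilon$ by $c_\theta(x)\ge\varepsilon$. The only discrepancy is the set where $c_\theta(x)=\varepsilon$ and $\hat y_\theta(x)\neq y$. On this set $\ell_\theta>0$, so $g_\theta=\min\{0,\ell_\theta\}=0$, and hence the set lies inside $N_\theta$. Therefore, for every $z\notin N_\theta$,
\[
\mathbf 1\{g_\theta(z)>0\}=\mathbf 1\{z\in\mathcal C_\theta(\varepsilon)\cap\mathcal B_\theta\}.
\]

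Finally, both indicators are measurable: the confident-wrong event is measurable by Assumption~\ref{ass:measurability}, and $g_\theta$ is a minimum of measurable functions. Since the two indicators agree $P$-a.s., their expectations coincide, giving $\Pr\{g_\theta(Z)>0\}=\mathrm{JCW}(\theta)$ by~\eqref{eq_jcw_def}.

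The main obstacle is the tie set. As literally stated, Assumption~\ref{ass:boundary} only nulls $\{g_\theta=0\}$, and this set need not contain ties with $c_\theta<\varepsilon$. Those points do not matter, however, since the two indicators are both zero there. Ties with $c_\theta\ge\varepsilon$ do give $g_\theta=\min\{c_\theta-\varepsilon,0\}=0$, so they are already null. I will therefore argue directly that the possibly non-null ties with $c_\theta<\varepsilon$ make both indicators vanish, so that only $N_\theta$ is needed.
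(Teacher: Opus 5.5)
Your proof is correct and follows essentially the same route as the paper. Both arguments work off the $P$-null set $\{g_\theta=0\}$ and use that $\min\{a,b\}>0$ iff both arguments are positive, together with the sign of the margin; the boundary cases (misclassified with $c_\theta=\varepsilon$, or a logit tie with $c_\theta\ge\varepsilon$) then all land in $\{g_\theta=0\}$. Your final-paragraph treatment of ties is slightly tighter than the paper's, which appeals to ``no exact tie'' to get $\ell_\theta>0$ when only $\ell_\theta\ge 0$ (hence $g_\theta\ge 0$) is needed. You should drop your earlier, unsupported claim that $T_\theta$ itself is $P$-null.
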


\begin{proof}
Recall $g_\theta(z)=\min\{\,c_\theta(x)-\varepsilon,\ \ell_\theta(x,y)\,\}$. By Assumption~\ref{ass:boundary} the tie set $\{g_\theta(Z)=0\}$ is $P$-null, so we may work on its complement $\{g_\theta\neq 0\}$. There, $g_\theta(z)\geq 0$
implies $g_\theta(z)>0$, a fact we use in the reverse direction below.

If $g_\theta(z)>0$, the minimum is positive, so \emph{both} terms are: hence $c_\theta(x)>\varepsilon$ (so in particular $c_\theta(x)\geq\varepsilon$) and $\ell_\theta(x,y)>0$. Thus the prediction is confident and, by the argmax rule, wrong ($\hat y_\theta(x)\neq y$).

Conversely, if the prediction is confident and wrong, then $c_\theta(x)-\varepsilon\geq 0$ and $\ell_\theta(x,y)>0$ (a misclassification
without an exact tie has strictly positive margin), so the minimum satisfies $g_\theta(z)\geq 0$; being off the tie set, $g_\theta(z)>0$. This proves the
equivalence $P$-almost everywhere. Since the events $\{g_\theta(Z)>0\}$ and $\{c_\theta(X)\geq\varepsilon,\ \hat y_\theta(X)\neq Y\}$ coincide up to a $P$-null set, they have equal probability, giving $\Pr\{g_\theta(Z)>0\} = \mathrm{JCW}(\theta)$.
\end{proof}

\begin{remark}[Choice of the minimum]
Writing $g_1 := c_\theta(X) - \varepsilon$ and $g_2 := \ell_\theta(X,Y)$, the reliability requirement $P(g_1 > 0 \wedge g_2 > 0) \leq \alpha$ is equivalent to $P(\min\{g_1,g_2\} > 0) \leq \alpha$, since the minimum is positive whenever both arguments are positive.
This reduces the intersection of two events to a
standard single probabilistic constraint.
% Using the maximum would instead require the model to be
% simultaneously unconfident and correct, penalizing
% the behaviour a reliable classifier should exhibit.
\end{remark}

\subsection{The Probabilistic-Constrained Learning Problem}

The learning problem we propose is

\begin{equation}
\label{eq_population_problem}
    \min_{\theta \in \Theta}
    \mathbb{E}~\bigg[
      \mathcal{L}~\big(Y,  p_\theta(\cdot \mid X)\big) \bigg]+\mathcal{R}(\theta)
    \quad
    \text{subject to}
    \quad
    \mathrm{JCW}(\theta) \leq \alpha,
\end{equation}

where $\mathcal{L}$ is a classification loss (cross entropy, Brier score, etc.), $\mathcal{R(\theta)}$ is a regularizer (Weight decay in our case), $\alpha \in (0,1)$ is a prescribed risk budget and, by Corollary ~\ref{corr_encoding}, the constraint is equivalently $Pr\{g_\theta(Z) \leq 0\} \geq 1 - \alpha$. The threshold $\varepsilon$ is a data-driven threshold selected from the validation set.

Problem~\eqref{eq_population_problem} differs from ERM in a fundamental way: ERM controls the expected loss, which is silent about the distribution of errors across confidence levels, whereas~\eqref{eq_population_problem} supplements expected loss with an explicit constraint on the joint occurrence of high confidence and incorrect prediction.

\subsection{Properties of the JCW Constraint}

All statements below are population-level identities or bounds and do not depend on how~\eqref{eq_population_problem} is solved. Fix $\varepsilon$ and abbreviate the coverage $\mathrm{Cov}(\theta)=\Pr\{c_\theta(X)\geq\varepsilon\}$ and the high-confidence accuracy $\mathrm{Acc}_{\mathrm{hc}}(\theta)=\Pr\{\hat{y}_\theta(X)=Y\mid c_\theta(X)\geq\varepsilon\}$. The rest follows from a single identity, obtained by conditioning on whether the model acts.

\begin{corollary}[Decomposition]
\label{corr_decomposition}
For every $\theta$,
\begin{equation}
\label{eq_jcw_decomp}
    \mathrm{JCW}(\theta)=\mathrm{Cov}(\theta)\bigl(1-\mathrm{Acc}_{\mathrm{hc}}(\theta)\bigr).
\end{equation}
and $\mathrm{JCW}(\theta)\leq\alpha$ holds iff
$\mathrm{Acc}_{\mathrm{hc}}(\theta)\geq 1-\alpha/\mathrm{Cov}(\theta)$.
\end{corollary}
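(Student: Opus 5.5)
The identity follows from the definition of conditional probability. By \eqref{eq_jcw_def}, $\mathrm{JCW}(\theta)=\Pr\{c_\theta(X)\geq\varepsilon,\ \hat y_\theta(X)\neq Y\}$, so we condition on the event $\{c_\theta(X)\geq\varepsilon\}$, whose probability is $\mathrm{Cov}(\theta)$.

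\textbf{Case $\mathrm{Cov}(\theta)>0$.} The product rule gives
\[
\mathrm{JCW}(\theta)=\mathrm{Cov}(\theta)\,\Pr\{\hat y_\theta(X)\neq Y\mid c_\theta(X)\geq\varepsilon\}.
\]
Taking complements inside the conditional measure,
\[
\Pr\{\hat y_\theta(X)\neq Y\mid c_\theta(X)\geq\varepsilon\}=1-\mathrm{Acc}_{\mathrm{hc}}(\theta),
\]
which yields \eqref{eq_jcw_decomp}. Both events are measurable by Assumption~\ref{ass:measurability}, so every probability here is well-defined. Corollary~\ref{corr_encoding} is not needed, since the identity concerns the hard event directly.

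\textbf{Case $\mathrm{Cov}(\theta)=0$.} Here $\mathrm{Acc}_{\mathrm{hc}}(\theta)$ is undefined as a conditional probability, and this is the main (minor) obstacle. The joint event is contained in $\{c_\theta(X)\geq\varepsilon\}$, so $\mathrm{JCW}(\theta)=0$. The identity then holds under any convention for $\mathrm{Acc}_{\mathrm{hc}}(\theta)\in[0,1]$, because the right-hand side is $0$. I would state this convention explicitly.

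\textbf{Equivalence.} For $\mathrm{Cov}(\theta)>0$, divide the inequality $\mathrm{Cov}(\theta)(1-\mathrm{Acc}_{\mathrm{hc}}(\theta))\leq\alpha$ by the positive number $\mathrm{Cov}(\theta)$. This gives $1-\mathrm{Acc}_{\mathrm{hc}}(\theta)\leq \alpha/\mathrm{Cov}(\theta)$, which rearranges to the stated condition. Every step is reversible, so the two conditions are equivalent.

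In the degenerate case $\mathrm{Cov}(\theta)=0$, the constraint $\mathrm{JCW}(\theta)\leq\alpha$ holds trivially. The right-hand condition should be read with $\alpha/0=+\infty$, so that $1-\alpha/\mathrm{Cov}(\theta)=-\infty$ and the condition is vacuously true. Both sides are therefore true and the equivalence persists.

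It is also worth remarking that when $\mathrm{Cov}(\theta)\leq\alpha$, the threshold $1-\alpha/\mathrm{Cov}(\theta)$ is $\leq 0$. In that regime the constraint is automatically satisfied, consistent with the bound $\mathrm{JCW}(\theta)\leq\mathrm{Cov}(\theta)$.
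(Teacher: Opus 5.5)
Your proof is correct and follows the same route as the paper: the product rule for conditional probability on $\{c_\theta(X)\geq\varepsilon\}$, taking the complement to get $1-\mathrm{Acc}_{\mathrm{hc}}(\theta)$, and dividing by $\mathrm{Cov}(\theta)$ for the equivalence. Your separate treatment of the degenerate case $\mathrm{Cov}(\theta)=0$, with the convention $\alpha/0=+\infty$, is a small improvement, since the paper's proof silently assumes $\mathrm{Cov}(\theta)>0$.
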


\begin{proof}
\begin{align}
   \mathrm{JCW}(\theta)&=\Pr\{c_\theta\geq\varepsilon, \hat y\neq Y\} = \Pr\{\hat y\neq Y,c_\theta\geq\varepsilon\}\\
   &= \Pr\{\hat y\neq Y\mid c_\theta\geq\varepsilon\}\Pr\{c_\theta\geq\varepsilon\} \\
   &=\big(1-\mathrm{Acc}_{\mathrm{hc}}(\theta)\big)\,\mathrm{Cov}(\theta)
\end{align}
Therefore $\mathrm{JCW}(\theta)\leq\alpha$ $\Leftrightarrow$ $\big(1-\mathrm{Acc}_{\mathrm{hc}}(\theta)\big)\,\mathrm{Cov}(\theta)\leq\alpha$ $\Leftrightarrow$  $\mathrm{Acc}_{\mathrm{hc}}(\theta)\geq 1-\alpha/\mathrm{Cov}(\theta)$ by rearranging 
\end{proof}

\begin{remark}
\label{rem_throughput}
Writing $T(\theta):=\Pr\{c_\theta(X)\geq\varepsilon,\ \hat y_\theta(X)=Y\}$ for the confident-correct \emph{throughput}, the acceptance region splits as $\mathrm{Cov}(\theta)=T(\theta)+\mathrm{JCW}(\theta)$.  Reliability therefore requires small $\mathrm{JCW}$ \emph{and} large $T$, and the two are supplied by the two terms of problem ~\eqref{eq_population_problem}: the loss minimization rewards confident-correct predictions and pushes $T$ up, while the constraint caps confident errors. Because the JCW constraint alone does not impose a minimum coverage, a feasible model may reduce confident errors partly by lowering its acceptance rate. We therefore report coverage and confident-correct throughput alongside JCW.
% Note also that
% $\mathrm{JCW}(\theta)\leq\min\{\Pr(\hat y\neq Y),\mathrm{Cov}(\theta)\}$: the
% constraint governs confident error alone, leaving low-confidence mistakes untouched,
% but those are flagged and recoverable.
\end{remark}

\begin{corollary}[Selective risk]
\label{cor:selective}
If $\mathrm{JCW}(\theta)\leq\alpha$ and $\mathrm{Cov}(\theta)>0$, the error rate on
accepted inputs satisfies
\begin{equation}
\label{eq_selective_risk}
    \Pr\{\hat y_\theta(X)\neq Y\mid c_\theta(X)\geq\varepsilon\}
\leq\frac{\alpha}{\mathrm{Cov}(\theta)}.
\end{equation}
\end{corollary}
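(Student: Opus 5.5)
The plan is to derive the bound directly from the decomposition in Corollary~\ref{corr_decomposition}, using the definition of conditional probability. The hypothesis $\mathrm{Cov}(\theta)>0$ is exactly what is needed for the conditioning event $\{c_\theta(X)\geq\varepsilon\}$ to have positive probability. This makes the conditional probability on the left-hand side of \eqref{eq_selective_risk} well defined, and it also makes it legitimate to divide by $\mathrm{Cov}(\theta)$.

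The steps are as follows. First, note that by definition $1-\mathrm{Acc}_{\mathrm{hc}}(\theta)=\Pr\{\hat y_\theta(X)\neq Y\mid c_\theta(X)\geq\varepsilon\}$. This holds because, conditional on the acceptance event, the correct and incorrect events are complementary. Second, invoke \eqref{eq_jcw_decomp}, which gives
\[
\mathrm{JCW}(\theta)=\mathrm{Cov}(\theta)\,\Pr\{\hat y_\theta(X)\neq Y\mid c_\theta(X)\geq\varepsilon\}.
\]
Since $\mathrm{Cov}(\theta)>0$, we can divide both sides by it to obtain
\[
\Pr\{\hat y_\theta(X)\neq Y\mid c_\theta(X)\geq\varepsilon\}=\frac{\mathrm{JCW}(\theta)}{\mathrm{Cov}(\theta)}.
\]
Third, apply the hypothesis $\mathrm{JCW}(\theta)\leq\alpha$. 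Because division by the positive number $\mathrm{Cov}(\theta)$ preserves inequalities, this yields \eqref{eq_selective_risk}. Equivalently, one can read the result off the "iff" clause of Corollary~\ref{corr_decomposition}: $\mathrm{Acc}_{\mathrm{hc}}(\theta)\geq 1-\alpha/\mathrm{Cov}(\theta)$, and rearranging gives the same bound.

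There is no real obstacle here. The only point needing care is the well-definedness of the conditional probability, which requires $\mathrm{Cov}(\theta)>0$ and is supplied by the hypothesis. It is also worth remarking that the bound is informative only when $\mathrm{Cov}(\theta)>\alpha$; otherwise the right-hand side is at least $1$ and the statement holds trivially.
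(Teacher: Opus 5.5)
Your proof is correct and takes essentially the same route as the paper, which writes the conditional error as $\mathrm{JCW}(\theta)/\mathrm{Cov}(\theta)$ from the definition of conditional probability and then applies $\mathrm{JCW}(\theta)\le\alpha$; routing this through Corollary~\ref{corr_decomposition} is the same computation. Your explicit note that $\mathrm{Cov}(\theta)>0$ is what justifies both the conditioning and the division is a bit of care the paper leaves implicit.
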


\begin{proof}

\begin{align}
    \Pr\{\hat y_\theta(X)\neq Y\mid c_\theta(X)\geq\varepsilon\}&=\dfrac{\Pr\{\hat y_\theta(X)\neq Y, c_\theta(X)\geq\varepsilon\}}{\Pr\{c_\theta\geq\varepsilon\}}\\
    &=\dfrac{JCW(\theta)}{\mathrm{Cov}(\theta)} \le \frac{\alpha}{\mathrm{Cov}(\theta)}
\end{align}
\end{proof}

Thus $\varepsilon$ already acts as a selection rule since  it defines an acceptance region $ \mathcal{C}_\theta(\varepsilon)$ and bounds the risk there, with no gating network and no separate abstention loss. 

\begin{remark}[Connection to Bayes-optimal rejection]
\label{rem_bayes}
Let $\eta(x)=\max_k \Pr(Y=k\mid X=x)$. By Chow's rule, thresholding $\eta$ yields a Bayes-optimal acceptance set at fixed coverage. Hence, when $c_\theta=\eta$ and $\hat y_\theta$ is Bayes-optimal, $\{c_\theta\geq\varepsilon\}$ is optimal and JCW equals the accepted error probability. Under misspecification, JCW still measures confident error on the model's own acceptance region, but controlling it does not guarantee recovery of the Bayes confidence ranking.
\end{remark}

Modern networks are typically overconfident, with the confidence-accuracy gap concentrated at high confidence \citep{guo2017calibration}, though its magnitude is architecture-dependent \citep{minderer2021revisiting}. This is exactly the region a JCW budget constrains.

\begin{proposition}
\label{prop_calibration}
Let $\mathrm{WCE}_{\mathrm{hc}}(\theta)=\mathrm{Cov}(\theta)\, \bigl|\mathbb E[c_\theta\mid c_\theta\geq\varepsilon]-\mathrm{Acc}_{\mathrm{hc}}(\theta)\bigr|$ be the coverage-weighted calibration gap on accepted inputs. If the model is overconfident there, $\mathbb E[c_\theta\mid c_\theta\geq\varepsilon]\geq\mathrm{Acc}_{\mathrm{hc}}(\theta)$, and $\mathrm{JCW}(\theta)\leq\alpha$, then $\mathrm{WCE}_{\mathrm{hc}}(\theta)\leq\alpha$
\end{proposition}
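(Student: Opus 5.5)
The plan is to remove the absolute value using the overconfidence hypothesis, bound the conditional mean confidence by its trivial upper bound, and then recognise the decomposition identity of Corollary~\ref{corr_decomposition}. No probabilistic machinery beyond that identity is needed.

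First I would dispose of the degenerate case $\mathrm{Cov}(\theta)=0$. Here both $\mathbb E[c_\theta\mid c_\theta\geq\varepsilon]$ and $\mathrm{Acc}_{\mathrm{hc}}(\theta)$ are conditional quantities on a null event. With the natural convention that any finite value multiplied by $\mathrm{Cov}(\theta)=0$ gives zero, we get $\mathrm{WCE}_{\mathrm{hc}}(\theta)=0\leq\alpha$. I would state this convention explicitly, since the proposition is silent on it.

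For $\mathrm{Cov}(\theta)>0$ the argument is a chain of three steps:
\begin{align}
\mathrm{WCE}_{\mathrm{hc}}(\theta) &= \mathrm{Cov}(\theta)\bigl(\mathbb E[c_\theta\mid c_\theta\geq\varepsilon]-\mathrm{Acc}_{\mathrm{hc}}(\theta)\bigr) \label{eq_plan_wce1}\\
&\leq \mathrm{Cov}(\theta)\bigl(1-\mathrm{Acc}_{\mathrm{hc}}(\theta)\bigr) \label{eq_plan_wce2}\\
&= \mathrm{JCW}(\theta)\leq\alpha. \label{eq_plan_wce3}
\end{align}
\begin{itemize}
\item Equality~\eqref{eq_plan_wce1} uses the overconfidence assumption $\mathbb E[c_\theta\mid c_\theta\geq\varepsilon]\geq\mathrm{Acc}_{\mathrm{hc}}(\theta)$, which lets me drop the absolute value.
\item Inequality~\eqref{eq_plan_wce2} uses that $c_\theta(x)=\max_k p_\theta(k\mid x)\leq 1$ pointwise, since it is a softmax probability. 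Hence its conditional expectation on the acceptance region is at most $1$. Multiplying by $\mathrm{Cov}(\theta)\geq 0$ preserves the inequality.
\item Equality~\eqref{eq_plan_wce3} is exactly \eqref{eq_jcw_decomp} from Corollary~\ref{corr_decomposition}, and the final bound is the feasibility hypothesis $\mathrm{JCW}(\theta)\leq\alpha$.
\end{itemize}

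There is no real obstacle; the only points requiring care are the $\mathrm{Cov}(\theta)=0$ convention and checking that the conditional expectation is well defined. The latter holds because $c_\theta$ is bounded in $[1/K,1]$ and measurable under Assumption~\ref{ass:measurability}.

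I would close with a remark on what the argument reveals. The bound is loose by exactly $\mathrm{Cov}(\theta)\bigl(1-\mathbb E[c_\theta\mid c_\theta\geq\varepsilon]\bigr)$. It therefore becomes tight precisely when accepted predictions are reported with confidence near $1$, which is the overconfident regime the proposition targets.
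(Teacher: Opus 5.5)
Your proposal is correct and follows essentially the same route as the paper: drop the absolute value using overconfidence, bound $\mathbb E[c_\theta\mid c_\theta\geq\varepsilon]$ by $1$, and invoke the decomposition $\mathrm{JCW}(\theta)=\mathrm{Cov}(\theta)\bigl(1-\mathrm{Acc}_{\mathrm{hc}}(\theta)\bigr)$ from Corollary~\ref{corr_decomposition}. Your explicit convention for the degenerate case $\mathrm{Cov}(\theta)=0$ is a small refinement that the paper leaves implicit.
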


\begin{proof}
Using the fact that overconfidence removes the absolute value, and $c_\theta\leq 1$ we find:
\[
    \mathrm{WCE}_{\mathrm{hc}}(\theta)
    =\mathrm{Cov}(\theta)\bigl(\mathbb E[c_\theta\mid c_\theta\geq\varepsilon]-\mathrm{Acc}_{\mathrm{hc}}\bigr)
    \leq\mathrm{Cov}(\theta)\bigl(1-\mathrm{Acc}_{\mathrm{hc}}\bigr)
    =\mathrm{JCW}(\theta)\leq\alpha,
\]
using corollary ~\ref{corr_decomposition} and the constraint. 
\end{proof}

\noindent This local guarantee  partly explains the calibration ReliableNet gains as a by-product, since overconfidence is suppressed exactly where the constraint is active.

\section{Methodology}
\label{sec_method}
In this part, we describe the smooth approximation used to solve the  problem~\eqref{eq_population_problem} that was initially not amenable to gradient-based optimization due to the probabilistic constraint $\mathrm{JCW}(\theta)\le\alpha$. We also explore the finite sample feasibility of the smooth approximation and the constrained-optimization technique used to solve our problem.

\subsection{Inner-Outer Analytic Approximation}
\label{sec_inner_outer}
 Replace $\mathrm{JCW}(\theta)\le\alpha$ by a smooth surrogate constraint, adopting the inner-outer smoothing framework of \citet{doi:10.1137/15M1049750}.  Consider the Geletu-Hoffman parametric function
\begin{equation}
    \zeta(\tau,s)=\frac{1+m_1\tau}{1+m_2\tau\exp(-s/\tau)},
\qquad \tau\in(0,1),\quad  s\in \mathbb{R}, \quad 0<m_2<\dfrac{m_1}{1+m_1},
\label{eq_zeta_fs}
\end{equation}

together with its reflection $\Pi(\tau,s):=\zeta(\tau,-s)$.  The family is bounded, dominates the Heaviside function $h(s):=\mathbf 1\{s\ge0\}$ pointwise ($\zeta(\tau,s)\ge h(s)$ for all $s$), and converges to  pointwise as $\tau\downarrow0$. %The ordering $m_1>m_2$ is exactly what makes the domination hold: since $\zeta(\tau,\cdot)$ is increasing, the binding case is $s=0$, where $\zeta(\tau,0)=\frac{1+m_1\tau}{1+m_2\tau}>1$ iff $m_1>m_2$.

\paragraph{Surrogates and the estimation bracket.}
Define the smoothed surrogate functions
\[
\psi_\tau(\theta):=\mathbb{E} \left[\zeta(\tau,g_\theta(Z))\right],
\qquad
\phi_\tau(\theta):=\mathbb{E} \left[\Pi(\tau,g_\theta(Z))\right].
\]
By Lemma~\ref{lem:family_props}, $\zeta(\tau,s)\ge\mathbf 1\{s\ge0\}$ and, applying it at $-s$, $\Pi(\tau,s)=\zeta(\tau,-s)\ge\mathds 1\{s\le0\}$. Taking  expectation  under $Z\sim P$ gives the estimation bracket
\begin{equation}
\label{eq_io_bracket}
1-\psi_\tau(\theta)
\;\le\;
\mathbb{P}(g_\theta(Z)\le 0)
\;\le\;
\phi_\tau(\theta),
\qquad \forall \tau>0,
\end{equation}
which holds pointwise in $\theta$ for any $P$. In particular $\psi_\tau(\theta)\ge\mathbb{P}(g_\theta(Z)\ge0)\ge\mathrm{JCW}(\theta)$,
($\mathbb{P}\big(g_\theta(Z)\ge0\big)=\mathrm{JCW}(\theta)$ under Assumption~\ref{ass:boundary})\citep{doi:10.1137/15M1049750}, hence the conservative implication:
\begin{equation}
\label{eq_conservative_implication}
\psi_\tau(\theta)\le\alpha
\;\Longrightarrow\;
\mathrm{JCW}(\theta)\le\alpha .
\end{equation}

\paragraph{Inner and outer feasible sets.}
Let $\mathcal{F}:=\{\theta\in\Theta:\ \mathbb{P}(g_\theta(Z)\le0)\ge1-\alpha\}$ denote the feasible set of the probabilistic constraint, and define
\[
\mathcal{F}^{\mathrm{in}}_\tau:=\{\theta:\ \psi_\tau(\theta)\le\alpha\},
\qquad
\mathcal{F}^{\mathrm{out}}_\tau:=\{\theta:\ \phi_\tau(\theta)\ge1-\alpha\}.
\]
The bracket~\eqref{eq_io_bracket} yields the inclusion directly: if $\psi_\tau(\theta)\le\alpha$ then $\mathbb{P}(g_\theta(Z)\le0)\ge1-\psi_\tau(\theta)\ge1-\alpha$, so $\theta\in\mathcal{F}$; if $\theta\in\mathcal{F}$ then $\phi_\tau(\theta)\ge\mathbb{P}(g_\theta(Z)\le0)\ge1-\alpha$, so $\theta\in\mathcal{F}^{\mathrm{out}}_\tau$. Therefore,
\begin{equation}
\label{inclusion_set}
\mathcal{F}^{\mathrm{in}}_\tau
\subseteq
\mathcal{F}
\subseteq
\mathcal{F}^{\mathrm{out}}_\tau,
\qquad \forall \tau>0.
\end{equation}

This gives rise to two approximation problems
\begin{align}
\text{(PIA}_\tau\text{)}\qquad
&\min_{\theta\in\Theta}
\mathbb{E}~\bigg[
      \mathcal{L}~\big(Y,  p_\theta(\cdot \mid X)\big) \bigg]+\mathcal{R}(\theta)
\quad\text{s.t.}\quad
\psi_\tau(\theta)\le\alpha,
\label{eq_pia_theta}\\[2mm]
\text{(POA}_\tau\text{)}\qquad
&\min_{\theta\in\Theta}
\mathbb{E}~\bigg[
      \mathcal{L}~\big(Y,  p_\theta(\cdot \mid X)\big) \bigg]+\mathcal{R}(\theta)
\quad\text{s.t.}\quad
\phi_\tau(\theta)\ge1-\alpha.
\label{eq_poa_theta}
\end{align}

By~\eqref{inclusion_set}, any feasible point of $(\mathrm{PIA}_\tau)$ is feasible for the original probabilistic constraint, making the inner problem the vehicle for reliable solutions; $(\mathrm{POA}_\tau)$ relaxes the feasible set and yields a lower bound on the optimal objective value, used purely as a diagnostic.

% Note that $(\mathrm{PIA}_\tau)$ is a population-level problem:
% Algorithm~\ref{alg:inner_cc} enforces its empirical counterpart on finite
% data, and the transfer of feasibility from the empirical to the
% population constraint at level $\alpha$ is established in
% Theorem~\ref{thm_finite_sample}.

\paragraph{A pointwise stopping diagnostic.}
In practice \cite{doi:10.1137/15M1049750} solve  both $(\mathrm{POA}_\tau)$ and $(\mathrm{PIA}_\tau)$ for each decaying $\tau$ and compare their objective values, but solving them at every smoothing level to monitor the objective-value gap is expensive. We instead use a cheap diagnostic requiring only two forward surrogate evaluations at the incumbent which measures the width of the
probability enclosure at the evaluated parameter.

% which rests only on the bracket
% and the boundedness/limit properties of Lemma~\ref{lem:family_props}, 
% in particular it uses neither monotonicity of $\zeta$ in $\tau$ nor any
% density assumption.

\begin{proposition}[Gap diagnostic]
\label{prop_gap_stopping}
Let Assumption~\ref{ass:boundary} ($\mathbb{P}\{g_\theta(Z)=0\}=0$) holds. Write $p(\theta):=\mathbb{P}(g_\theta(Z)\le0)$ and
\begin{equation}
\label{eq_gap_def}
\Delta_\tau(\theta):=\phi_\tau(\theta)-\bigl(1-\psi_\tau(\theta)\bigr)\ \ (\ge 0).
\end{equation}
Then, for every $\theta$ and every $\tau\in(0,1)$,
\begin{equation}
\label{eq_enclosure}
p(\theta)\in\bigl[ 1-\psi_\tau(\theta),\ \phi_\tau(\theta) \bigr],
\qquad\text{so}\qquad
\max\Bigl\{\bigl|p(\theta)-(1-\psi_\tau(\theta))\bigr|,\ \bigl|p(\theta)-\phi_\tau(\theta)\bigr|\Bigr\}\le\Delta_\tau(\theta).
\end{equation}
Moreover, for each fixed $\theta$,
\begin{equation}
\label{eq_gap_limit}
\lim_{\tau\downarrow0}\Delta_\tau(\theta)=0 .
\end{equation}
\end{proposition}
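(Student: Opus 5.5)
The enclosure \eqref{eq_enclosure} is just the estimation bracket \eqref{eq_io_bracket} restated. I take it as given, since it follows pointwise from $\zeta(\tau,s)\ge\mathbf 1\{s\ge0\}$ and $\Pi(\tau,s)\ge\mathbf 1\{s\le0\}$ after taking expectations. Nonnegativity $\Delta_\tau(\theta)\ge0$ is immediate from $1-\psi_\tau\le p\le\phi_\tau$. Once $p(\theta)$ lies in an interval $[a,b]$ with $b-a=\Delta_\tau(\theta)$, its distances to both endpoints are at most $b-a$, which gives the max-bound.

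For the limit \eqref{eq_gap_limit}, I will write
\[
\Delta_\tau(\theta)=\mathbb E\bigl[\zeta(\tau,-g_\theta(Z))+\zeta(\tau,g_\theta(Z))-1\bigr].
\]
The plan is to show that the integrand tends to $0$ for $P$-a.e.\ $z$ and is uniformly bounded, and then apply dominated convergence. The pointwise limit of $\zeta(\tau,s)$ as $\tau\downarrow0$ follows from the explicit formula \eqref{eq_zeta_fs}. If $s>0$, then $\tau e^{-s/\tau}\to0$ and the numerator tends to $1$, so $\zeta\to1$. If $s<0$, then $\tau e^{-s/\tau}=\tau e^{|s|/\tau}\to\infty$ while the numerator stays bounded, so $\zeta\to0$. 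Hence for $s\neq0$ we get $\zeta(\tau,s)+\zeta(\tau,-s)\to1$. By Assumption~\ref{ass:boundary}, $g_\theta(Z)\neq0$ almost surely, so the integrand tends to $0$ almost surely. At $s=0$ the limit would instead be $2$, which is exactly why the boundary assumption is needed.

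For domination, I will use that $\zeta(\tau,\cdot)$ is increasing in $s$ and bounded above by $\sup_s\zeta=1+m_1\tau\le 1+m_1$ for $\tau\in(0,1)$. It is also bounded below by $0$. The integrand is therefore bounded by the constant $2(1+m_1)+1$, which is $P$-integrable. Dominated convergence along any sequence $\tau_n\downarrow0$ then yields $\Delta_{\tau_n}(\theta)\to0$, hence the full limit.

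The main obstacle is the pointwise behaviour near $s=0$, since convergence there is not uniform. Dominated convergence handles this without needing any density of $g_\theta(Z)$, but it does require the a.s.\ exclusion of $s=0$. I would double-check the domination constant and monotonicity using Lemma~\ref{lem:family_props}, which I am assuming records boundedness and the pointwise limit.
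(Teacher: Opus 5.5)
Your proposal is correct and follows essentially the paper's argument: the enclosure is read off from the bracket \eqref{eq_io_bracket}, and the limit comes from dominated convergence with a constant majorant, with Assumption~\ref{ass:boundary} removing the atom at $g_\theta(Z)=0$. The only difference is cosmetic. You apply the boundary assumption pointwise to the combined integrand $\zeta(\tau,-g_\theta(Z))+\zeta(\tau,g_\theta(Z))-1$, whereas the paper takes the limits of $\psi_\tau$ and $\phi_\tau$ separately and then uses $\mathbb{P}(g_\theta(Z)<0)=\mathbb{P}(g_\theta(Z)\le 0)$.
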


\begin{proof}
\emph{Enclosure~\eqref{eq_enclosure}.} The bracket~\eqref{eq_io_bracket} states $1-\psi_\tau(\theta)\le p(\theta)\le\phi_\tau(\theta)$, so
$p(\theta)$ lies in the stated interval, whose width is $\Delta_\tau(\theta)\ge0$; the two-sided bound is immediate. \emph{Limit~\eqref{eq_gap_limit}.} Fix $\theta$. By Lemma~\ref{lem:family_props}, $\zeta(\tau,g_\theta(z))\to h(g_\theta(z))$ as $\tau\downarrow0$ for every $z$, and $\zeta(\tau,\cdot)\le 1+m_1$ for $\tau\in(0,1)$, a constant $P$-integrable majorant. Dominated convergence gives
\[
\psi_\tau(\theta)\to\mathbb{E}[h(g_\theta(Z))]=\mathbb{P}(g_\theta(Z)\ge0),
\qquad
\phi_\tau(\theta)=\mathbb{E}[\zeta(\tau,-g_\theta(Z))]\to\mathbb{P}(g_\theta(Z)\le0)=p(\theta).
\]
Hence $1-\psi_\tau(\theta)\to\mathbb{P}(g_\theta(Z)<0)$, which equals $p(\theta)$ by Assumption~\ref{ass:boundary} ($\mathbb{P}\{g_\theta(Z)=0\}=0$). Therefore $\Delta_\tau(\theta)=\phi_\tau(\theta)-(1-\psi_\tau(\theta))\to p(\theta)-p(\theta)=0$.
\end{proof}

The enclosure~\eqref{eq_enclosure} provides a pointwise tightness diagnostic: for fixed $\hat\theta$, $\Delta_\tau(\hat\theta)$ is the width
of an interval containing $p(\hat\theta)$. In practice, we solve only $(\mathrm{PIA}_\tau)$, evaluate $\Delta_\tau(\hat\theta_\tau)$ at the current incumbent, and tighten $\tau$ until this width falls below $\vartheta$ ($10^{-4}$ in our experiments). The returned model is always obtained from the inner approximation; $\Delta_\tau$ measures surrogate tightness, not optimization suboptimality.

\subsection{Finite-Sample Feasibility of the Inner Approximation}
\label{sec_finite_sample}

The inner inclusion in~\eqref{inclusion_set} is a population-level statement: if the true surrogate $\psi_\tau(\theta):=\mathbb{E}[\zeta(\tau,g_\theta(Z))]$ satisfies $\psi_\tau(\theta)\leq \alpha$, then the JCW probabilistic constraint is satisfied. In learning, however, the distribution $P$ is unknown and Algorithm~\ref{alg:inner_cc} only observes the empirical surrogate
\begin{equation}
\label{eq_empirical_surrogate}
  \widehat{\psi}_{\tau,n}(\theta)
  :=
  \frac1n\sum_{i=1}^{n}
  \zeta \left(\tau,g_\theta(Z_i)\right),
  \qquad
  Z_1,\ldots,Z_n \overset{\mathrm{i.i.d.}}{\sim} P .
\end{equation}
This section bridges empirical surrogate feasibility to population JCW feasibility at finite sample size. The difficulty is that the returned parameter $\widehat{\theta}$ is data-dependent, selected on the same sample used to evaluate $\widehat{\psi}_{\tau,n}$, so a pointwise bound at fixed $\theta$ is insufficient. We first establish uniform concentration over $\Theta$, apply it at the random output $\widehat{\theta}$, and finally give independent holdout certificates, including the Clopper-Pearson certificate used in the experiments. Throughout, the smoothing level $\tau \in (0,1)$ and the threshold $\varepsilon$ are fixed; since $\varepsilon$ is selected on a validation fold, the statements below are conditional on that value. We use the Geletu–Hoffman parametric function $\zeta(\tau, s)$ ~\ref{eq_zeta_fs}.

\begin{assumption}[Bounded parameter class]
\label{ass:fs_compact}
The parameter set satisfies
$\Theta\subseteq\{\theta\in\mathbb{R}^d:\|\theta\|_2\le R\}$ for some
$R<\infty$.
\end{assumption}

\begin{assumption}[Parameter-Lipschitz violation]
\label{ass:fs_lip}
There exists $L_g<\infty$ such that
 $|g_\theta(z)-g_{\theta'}(z)|
    \le L_g\|\theta-\theta'\|_2$
for all $\theta,\theta'\in\Theta$ and $P$-almost every $z$.
\end{assumption}
Assumption~\ref{ass:fs_lip} is a standard theoretical regularity condition: on compact parameter and input sets, neural-network logits are Lipschitz in the parameters; the confidence $c_\theta$, the margin $\ell_\theta$, and their minimum inherit this Lipschitz property.

\begin{lemma}[Boundedness and conservatism]
\label{lem:smoother_conservative}
For every $\tau\in(0,1)$ and $s\in\mathbb{R}$, $0<\zeta(\tau,s)\le b_\tau:=1+m_1\tau, \quad |\partial_s\zeta(\tau,s)| \le L_\zeta(\tau):=\frac{1+m_1\tau}{4\tau}.$ Moreover, under   $0<m_2<\dfrac{m_1}{1+m_1}$ we have $\zeta(\tau,s)\ge \mathbf{1}\{s\ge 0\}$ for all $s$, and therefore $\psi_\tau(\theta)\ge \mathrm{JCW}(\theta)$ for all $\theta\in\Theta$.
\end{lemma}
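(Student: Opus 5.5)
The plan is to read everything off the closed form $\zeta(\tau,s)=(1+m_1\tau)/(1+u)$, where $u=u(\tau,s):=m_2\tau\exp(-s/\tau)$. Since $m_2>0$ and $\tau\in(0,1)$, we have $u>0$ for every $s\in\mathbb{R}$. Hence the denominator satisfies $1+u>1$ and the numerator satisfies $1+m_1\tau>0$. This gives $0<\zeta(\tau,s)<1+m_1\tau=b_\tau$ directly, which is the first bound.

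For the slope bound I would differentiate in $s$. Using $\partial_s u=-u/\tau$,
\[
\partial_s\zeta(\tau,s)=\frac{(1+m_1\tau)\,u}{\tau\,(1+u)^2}.
\]
This derivative is strictly positive, so $\zeta(\tau,\cdot)$ is increasing; I will reuse this fact below. The elementary inequality $u/(1+u)^2\le 1/4$ for $u>0$ is equivalent to $(1-u)^2\ge0$. It immediately yields $|\partial_s\zeta(\tau,s)|\le (1+m_1\tau)/(4\tau)=L_\zeta(\tau)$.

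For domination of the Heaviside function I would split into two cases on the sign of $s$.
\begin{itemize}
\item If $s<0$, then $\mathbf 1\{s\ge0\}=0<\zeta(\tau,s)$ by positivity.
\item If $s\ge0$, monotonicity gives $\zeta(\tau,s)\ge\zeta(\tau,0)=(1+m_1\tau)/(1+m_2\tau)$. This is at least $1$ iff $m_1\ge m_2$. The hypothesis $m_2<m_1/(1+m_1)<m_1$ (valid since $m_1>0$) gives this.
\end{itemize}
So $\zeta(\tau,s)\ge\mathbf 1\{s\ge0\}$ for all $s$. This case at $s=0$ is the only place the parameter restriction enters, and it is the step I would check most carefully. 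The rest is routine.

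Finally, for conservatism, fix $\theta$. By Assumption~\ref{ass:measurability}, $g_\theta(Z)$ is a random variable, and $\zeta(\tau,\cdot)$ is continuous and bounded by $b_\tau$. So $\zeta(\tau,g_\theta(Z))$ is integrable and $\psi_\tau(\theta)$ is well defined. Taking expectations in the pointwise domination gives
\[
\psi_\tau(\theta)\ge\mathbb{P}(g_\theta(Z)\ge0)\ge\mathbb{P}(g_\theta(Z)>0)=\mathrm{JCW}(\theta),
\]
where the last equality is Corollary~\ref{corr_encoding}. Under Assumption~\ref{ass:boundary} the middle inequality is in fact an equality, but it is not needed. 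I expect no genuine obstacle here.
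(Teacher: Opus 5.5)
Your proposal is correct and follows essentially the paper's route. It uses the same denominator bound, the same derivative formula with $u/(1+u)^2\le 1/4$, and the same monotonicity-plus-sign-split argument at $s=0$ (which the paper delegates to Lemma~\ref{lem:family_props}(ii)), and then integrates the pointwise domination.

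The only difference is that you write $\Pr\{g_\theta(Z)\ge0\}\ge\Pr\{g_\theta(Z)>0\}$ instead of the paper's equality. Note, however, that Assumption~\ref{ass:boundary} still enters through Corollary~\ref{corr_encoding}, so your argument does not actually avoid it. If you wanted to drop it altogether, use the direct inclusion $\{c_\theta(X)\ge\varepsilon,\ \hat y_\theta(X)\ne Y\}\subseteq\{g_\theta(Z)\ge 0\}$, which holds because $\hat y_\theta\ne Y$ forces $\ell_\theta\ge0$. This gives $\mathrm{JCW}(\theta)\le\Pr\{g_\theta(Z)\ge0\}$ with no tie assumption.
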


\begin{proof}
The denominator in~\eqref{eq_zeta_fs} belongs to $(1,\infty)$, so $0<\zeta(\tau,s)\le 1+m_1\tau=b_\tau$. Setting $w=m_2\tau e^{-s/\tau}$,
differentiation gives $\partial_s\zeta(\tau,s)=\frac{1+m_1\tau}{\tau}\frac{w}{(1+w)^2}$.
Since $\max_{w\ge0}w/(1+w)^2=1/4$, the derivative bound follows. The conservatism $\zeta(\tau,s)\ge\mathbf 1\{s\ge0\}$ is
Lemma~\ref{lem:family_props}(ii). Applying it pointwise to $s=g_\theta(Z)$ and integrating gives $\psi_\tau(\theta)\ \ge\ \Pr\{g_\theta(Z)\ge 0\}\ =\ \Pr\{g_\theta(Z)> 0\}\ =\ \mathrm{JCW}(\theta)$, where the middle equality uses $\Pr\{g_\theta(Z)=0\}=0$ (Assumption~\ref{ass:boundary}) and the last uses Corollary~\ref{corr_encoding}.
\end{proof}
% \begin{lemma}[Boundedness and conservatism]
% \label{lem:smoother_conservative}
% For every $\tau\in(0,1)$,  $0<m_2<\dfrac{m_1}{1+m_1}$, and $s\in\mathbb{R}$,
% \[  0<\zeta(\tau,s)\le b_\tau:=1+m_1\tau, \qquad |\partial_s\zeta(\tau,s)| \le L_\zeta(\tau):=\frac{1+m_1\tau}{4\tau}.
% \]
% Moreover, $\zeta(\tau,s)\ge \mathbf{1}\{s>0\}$ for all $s$, and therefore $\psi_\tau(\theta)\ge \mathrm{JCW}(\theta)$ for all $\theta\in\Theta$.
% \end{lemma}

% \begin{proof}
% The denominator in~\eqref{eq_zeta_fs} belongs to $(1,\infty)$, so $0<\zeta(\tau,s)\le 1+m_1\tau=b_\tau$. Setting $w=m_2\tau e^{-s/\tau}$, differentiation gives
% \[ \partial_s\zeta(\tau,s) =\frac{1+m_1\tau}{\tau}\frac{w}{(1+w)^2}.\]
% Since $\max_{w\ge0}w/(1+w)^2=1/4$, the derivative bound follows. Also $\partial_s\zeta(\tau,s)>0$, so $\zeta$ is strictly increasing
% in $s$. If $s\le0$, then $\zeta(\tau,s)>0=\mathbf{1}\{s>0\}$. If $s>0$, monotonicity gives
% \[ \zeta(\tau,s)>\zeta(\tau,0) =  \frac{1+m_1\tau}{1+m_2\tau}>1.\]
% Thus $\zeta(\tau,s)\ge \mathbf{1}\{s>0\}$ for all $s$. Applying this pointwise to $s=g_\theta(Z)$ and integrating gives
% \[ \psi_\tau(\theta) \ge \Pr\{g_\theta(Z)>0\} = \mathrm{JCW}(\theta),\]
% where the last equality follows from Corollary~\ref{corr_encoding}.
% \end{proof}

\begin{theorem}[Uniform concentration of the empirical surrogate]
\label{thm_concentration}
Under Assumptions~\ref{ass:fs_compact}-\ref{ass:fs_lip}, for any $\rho\in(0,1)$, with probability at least $1-\rho$,
\begin{equation}
\label{eq_uniform_concentration}
  \sup_{\theta\in\Theta} \left| \widehat{\psi}_{\tau,n}(\theta)-\psi_\tau(\theta) \right| \le \delta_n(\tau,\rho),
\end{equation}
where
\begin{equation}
\label{eq_delta_n}
  \delta_n(\tau,\rho) := b_\tau \sqrt{  \frac{d\ln(3n)+\ln(2/\rho)}{2n} }+ \frac{(1+m_1\tau)L_gR}{2\tau n}.
\end{equation}
\end{theorem}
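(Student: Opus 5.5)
The natural route is a covering-number argument: discretize $\Theta$, apply Hoeffding's inequality at each net point, and control the discretization error with the Lipschitz bounds. Lemma~\ref{lem:smoother_conservative} gives $0<\zeta(\tau,\cdot)\le b_\tau$ and $|\partial_s\zeta(\tau,\cdot)|\le L_\zeta(\tau)=(1+m_1\tau)/(4\tau)$. Composing with Assumption~\ref{ass:fs_lip}, for $P$-a.e.\ $z$ and all $\theta,\theta'\in\Theta$,
\[
|\zeta(\tau,g_\theta(z))-\zeta(\tau,g_{\theta'}(z))|\le L_\zeta(\tau)L_g\|\theta-\theta'\|_2 .
\]
Hence both $\widehat\psi_{\tau,n}$ and $\psi_\tau$ are $L_\zeta(\tau)L_g$-Lipschitz in $\theta$; the first holds almost surely, since the $Z_i$ avoid the exceptional null set.

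\textbf{Step 1 (net).} By Assumption~\ref{ass:fs_compact}, $\Theta$ lies in a Euclidean ball of radius $R$. Pick an $r$-net $N_r\subseteq\Theta$ with $|N_r|\le(3R/r)^d$. This is the standard volumetric bound $(1+2R/r)^d\le(3R/r)^d$ for $r\le R$; if the centres must lie in $\Theta$, use an $r/2$-external net and absorb the constant. Choose $r=R/n$, so that $|N_r|\le(3n)^d$, which produces the $d\ln(3n)$ term.

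\textbf{Step 2 (pointwise Hoeffding plus union bound).} For fixed $\theta$, the summands $\zeta(\tau,g_\theta(Z_i))$ are i.i.d.\ and lie in $[0,b_\tau]$. Hoeffding's inequality gives
\[
\Pr\bigl\{|\widehat\psi_{\tau,n}(\theta)-\psi_\tau(\theta)|>t\bigr\}\le 2e^{-2nt^2/b_\tau^2}.
\]
A union bound over $N_r$ with total failure probability $\rho$ yields, uniformly on $N_r$,
\[
|\widehat\psi_{\tau,n}-\psi_\tau|\le b_\tau\sqrt{\frac{d\ln(3n)+\ln(2/\rho)}{2n}},
\]
which is the first term of $\delta_n$.

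\textbf{Step 3 (extension to all of $\Theta$).} For any $\theta$, take $\theta'\in N_r$ with $\|\theta-\theta'\|\le r$. The triangle inequality gives an extra error of at most
\[
2L_\zeta(\tau)L_g r=\frac{2(1+m_1\tau)L_gR}{4\tau n}=\frac{(1+m_1\tau)L_gR}{2\tau n},
\]
which matches the second term exactly. This confirms that the intended choices are $r=R/n$ and a factor of $2$ from the two Lipschitz terms.

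The main obstacle is bookkeeping rather than ideas.
\begin{itemize}
\item The covering constant must give exactly $(3n)^d$. With $r=R/n$ and $n\ge1$ the bound $(1+2n)^d\le(3n)^d$ holds, so the net may be taken in the ambient ball.
\item If the net is not inside $\Theta$, the Lipschitz step needs $g_\theta$ defined on the net points. To avoid this, I would construct a maximal $r$-separated subset of $\Theta$ itself. Its cardinality is still at most $(1+2R/r)^d$ by the packing argument with balls of radius $r/2$ inside the ball of radius $R+r/2$, giving $\le(1+2n)^d\le(3n)^d$.
\item Measurability of the supremum follows from continuity (Assumption~\ref{ass:smooth_g}) and separability of $\Theta$.
\end{itemize}
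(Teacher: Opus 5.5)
Your proposal is correct and follows the paper's proof essentially step for step. It uses the same ingredients:
\begin{itemize}
\item a net at scale $R/n$ with at most $(3n)^d$ points;
\item Hoeffding's inequality on each net point, combined by a union bound;
\item a $2L_\zeta(\tau)L_g R/n$ Lipschitz extension to all of $\Theta$, which gives exactly $\delta_n(\tau,\rho)$.
\end{itemize}
Your extra bookkeeping supplies details the paper leaves implicit: choosing a maximal separated subset of $\Theta$ so that the net points lie in $\Theta$, and checking that the supremum is measurable.
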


\begin{proof}
Let $\Phi(\theta;z):=\zeta(\tau,g_\theta(z))$. By Lemma~\ref{lem:smoother_conservative} and Assumption~\ref{ass:fs_lip},
\[ |\Phi(\theta;z)-\Phi(\theta';z)| \le L_\zeta(\tau)L_g\|\theta-\theta'\|_2 =:L_\tau\|\theta-\theta'\|_2,
\]
where $L_\tau=(1+m_1\tau)L_g/(4\tau)$. Hence $D(\theta):=\widehat{\psi}_{\tau,n}(\theta)-\psi_\tau(\theta)$ is $2L_\tau$-Lipschitz.

Let $\{\theta_1,\ldots,\theta_N\}$ be a $\kappa$-net of $\Theta$. Since $\Theta$ is contained in a radius-$R$ ball (assumption\ref{ass:fs_compact}), $N\le(3R/\kappa)^d$. For every $\theta\in\Theta$, choose $\theta_j$ with $\|\theta-\theta_j\|_2\le\kappa$. Then
\[ \sup_{\theta\in\Theta}|D(\theta)| \le \max_{1\le j\le N}|D(\theta_j)|+2L_\tau\kappa . \]
For fixed $\theta_j$, the variables $\Phi(\theta_j;Z_i)$ are i.i.d., bounded in $[0,b_\tau]$, and have mean $\psi_\tau(\theta_j)$. Hoeffding's inequality gives
\[ \Pr\{|D(\theta_j)|\ge t\}  \le 2\exp \left(-\frac{2nt^2}{b_\tau^2}\right).
\]
A union bound over the net gives
\[ \Pr\left\{\max_j |D(\theta_j)|\ge t\right\} \le 2N\exp \left(-\frac{2nt^2}{b_\tau^2}\right).\]
Setting the right-hand side equal to $\rho$ yields
$t=b_\tau\sqrt{\ln(2N/\rho)/(2n)}$. Therefore, with probability at
least $1-\rho$,
\[ \sup_{\theta\in\Theta}|D(\theta)|  \le  b_\tau\sqrt{\frac{\ln(2N/\rho)}{2n}}+2L_\tau\kappa . \]
Taking $\kappa=R/n$ gives $N\le(3n)^d$ and $2L_\tau\kappa=(1+m_1\tau)L_gR/(2\tau n)$, which proves
\eqref{eq_uniform_concentration}.
\end{proof}

\begin{theorem}[Finite-sample feasibility for data-dependent outputs]
\label{thm_finite_sample}
Suppose Assumptions~\ref{ass:fs_compact}-\ref{ass:fs_lip} hold. Fix $\tau\in(0,1)$, $\alpha\in(0,1)$, and $\rho\in(0,1)$. With probability at least $1-\rho$, every $\theta\in\Theta$ satisfying
\begin{equation}
\label{eq_empirical_feasibility_slack}
    \widehat{\psi}_{\tau,n}(\theta)
    \le
    \alpha-\delta_n(\tau,\rho)
\end{equation}
also satisfies $\mathrm{JCW}(\theta)\le\alpha$. Consequently, the same implication holds for any data-dependent output $\widehat{\theta}=\mathcal{A}(Z_1,\ldots,Z_n)$ returned by Algorithm~\ref{alg:inner_cc}.
\end{theorem}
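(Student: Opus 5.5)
The plan is to combine the uniform concentration bound of Theorem~\ref{thm_concentration} with the conservatism of the smoother in Lemma~\ref{lem:smoother_conservative}. The point is that all the randomness sits in a single "good event" that does not depend on which $\theta$ we later look at. Concretely, let
\[
E:=\Bigl\{\sup_{\theta\in\Theta}\bigl|\widehat{\psi}_{\tau,n}(\theta)-\psi_\tau(\theta)\bigr|\le\delta_n(\tau,\rho)\Bigr\}.
\]
Under Assumptions~\ref{ass:fs_compact}--\ref{ass:fs_lip}, Theorem~\ref{thm_concentration} gives $\Pr(E)\ge1-\rho$. This is the only probabilistic step; everything after it is deterministic reasoning carried out on $E$.

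Fix a sample in $E$, and let $\theta\in\Theta$ satisfy \eqref{eq_empirical_feasibility_slack}. The uniform bound applies to this $\theta$, so
\[
\psi_\tau(\theta)\le\widehat{\psi}_{\tau,n}(\theta)+\delta_n(\tau,\rho)\le\alpha .
\]
Lemma~\ref{lem:smoother_conservative}, which uses Assumption~\ref{ass:boundary} and Corollary~\ref{corr_encoding}, then gives $\mathrm{JCW}(\theta)\le\psi_\tau(\theta)\le\alpha$. Equivalently, the conservative implication \eqref{eq_conservative_implication} puts $\theta$ in $\mathcal{F}^{\mathrm{in}}_\tau\subseteq\mathcal{F}$. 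Since $\theta$ was arbitrary, on $E$ the whole empirical slack-feasible set is contained in $\{\theta:\mathrm{JCW}(\theta)\le\alpha\}$. This proves the first claim with probability at least $1-\rho$.

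For the data-dependent output $\widehat\theta=\mathcal{A}(Z_1,\ldots,Z_n)$, the key observation is that the conclusion on $E$ holds simultaneously for all $\theta$. It therefore holds in particular at $\theta=\widehat\theta(\omega)$ for every $\omega\in E$, no matter how $\widehat\theta$ depends on the sample. So the event $\{\widehat{\psi}_{\tau,n}(\widehat\theta)\le\alpha-\delta_n\}\cap\{\mathrm{JCW}(\widehat\theta)>\alpha\}$ is contained in $E^c$, and its probability is at most $\rho$. The statement should be read as this implication: when the algorithm's returned parameter meets the slack condition, it is JCW-feasible with probability at least $1-\rho$. It does not claim that the algorithm always meets the slack.

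The main obstacle is exactly this data-dependence. A pointwise Hoeffding bound at a fixed $\theta$ would not apply to $\widehat\theta$, and that gap is why uniform concentration is needed. Since Theorem~\ref{thm_concentration} already supplies the uniform bound, the remaining care is mostly measure-theoretic. The event $E$ must be measurable, or, if it is not, the argument must be phrased with an inner measurable event. I would handle this by noting that $\theta\mapsto\widehat{\psi}_{\tau,n}(\theta)-\psi_\tau(\theta)$ is Lipschitz, as shown in the proof of Theorem~\ref{thm_concentration}. The supremum can then be taken over a countable dense subset of $\Theta$, which makes $E$ measurable.
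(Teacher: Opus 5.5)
Your proposal is correct and follows the paper's proof essentially verbatim: you fix the uniform-concentration event from Theorem~\ref{thm_concentration}, deduce $\psi_\tau(\theta)\le\alpha$ on that event for every $\theta$ meeting the slack condition, apply Lemma~\ref{lem:smoother_conservative} to get $\mathrm{JCW}(\theta)\le\alpha$, and use uniformity over $\Theta$ to cover the data-dependent $\widehat{\theta}$. Your additional remarks, namely measurability of the event via a countable dense subset and reading the conclusion for $\widehat{\theta}$ as an implication rather than a guarantee that the slack is met, are sound refinements that the paper leaves implicit.
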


\begin{proof}
Let $\mathcal{E}_n$ be the event on which \eqref{eq_uniform_concentration} holds. By Theorem~\ref{thm_concentration}, $\Pr(\mathcal{E}_n)\ge1-\rho$. On $\mathcal{E}_n$, for every $\theta\in\Theta$, \[ \psi_\tau(\theta) \le \widehat{\psi}_{\tau,n}(\theta)+\delta_n(\tau,\rho). \]
Thus any $\theta$ satisfying \eqref{eq_empirical_feasibility_slack} also satisfies $\psi_\tau(\theta)\le\alpha$. By Lemma~\ref{lem:smoother_conservative}, \[ \mathrm{JCW}(\theta) \le \psi_\tau(\theta)  \le \alpha .\] Because the event $\mathcal{E}_n$ holds uniformly for all $\theta\in\Theta$, the same argument applies to a random $\widehat{\theta}$ selected using the sample. This proves the claim.
\end{proof}

\begin{theorem}[Independent holdout certificates]
\label{thm_holdout_certificates}
Let $\widehat{\theta}$, $\varepsilon$, and $\tau$ be fixed before evaluating an independent certification sample $\widetilde Z_1,\ldots,\widetilde Z_m\overset{\mathrm{i.i.d.}}{\sim}P$. Then the following certificates hold.
\begin{itemize}
    \item[(i)]  Hoeffding surrogate certificate. Let $\widehat{\psi}^{ \mathrm{cert}}_{\tau,m}(\widehat{\theta}) :=  m^{-1}\sum_{i=1}^{m} \zeta \left(\tau,g_{\widehat{\theta}}(\widetilde Z_i)\right).$ With probability at least $1-\rho$,
$\psi_\tau(\widehat{\theta})  \le  \widehat{\psi}^{ \mathrm{cert}}_{\tau,m}(\widehat{\theta})  +  b_\tau\sqrt{\frac{\ln(1/\rho)}{2m}}.$
Consequently, if the right-hand side is at most $\alpha$, then $\mathrm{JCW}(\widehat{\theta})\le\alpha$ with probability at least
$1-\rho$.

\item[(ii)]  Clopper-Pearson hard-JCW certificate.
Define $B_i := \mathbf{1} \left\{ c_{\widehat{\theta}}(\widetilde X_i)\ge \varepsilon,   \widehat y_{\widehat{\theta}}(\widetilde X_i)  \neq \widetilde Y_i \right\}, \quad K_m:=\sum_{i=1}^{m}B_i$. Conditional on the fixed model and threshold, the $B_i$ are i.i.d. Bernoulli with success probability $q=\mathrm{JCW}(\widehat{\theta})$. Let
\[ U_{\mathrm{CP}}(K_m,m;\rho) :=
    \begin{cases}
    \beta^{-1}(1-\rho; K_m+1, m-K_m), & K_m<m,\\ 1,  & K_m=m,
    \end{cases} \]
where $\beta^{-1}(\cdot;a,b)$ is the quantile function of the $\operatorname{Beta}(a,b)$ distribution. Then \[ \Pr \left\{ \mathrm{JCW}(\widehat{\theta}) \le  U_{\mathrm{CP}}(K_m,m;\rho)   \right\}  \ge  1-\rho .\] Thus, if $U_{\mathrm{CP}}(K_m,m;\rho)\le\alpha$, the hard JCW constraint is certified at confidence level $1-\rho$.
\end{itemize}
\end{theorem}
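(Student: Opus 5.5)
Both parts are proved the same way. Because $\widehat\theta$, $\varepsilon$ and $\tau$ are fixed before the certification sample is drawn, we condition on them, using that the certification sample is independent of everything that determined them. Conditionally, the summands $\zeta(\tau,g_{\widehat\theta}(\widetilde Z_i))$ and the indicators $B_i$ are i.i.d. functions of $\widetilde Z_i$. We then apply a one-sample, one-sided concentration or exact binomial bound at this single fixed parameter. Since there is no data-dependent selection, no union bound or covering argument (as in Theorem~\ref{thm_concentration}) is needed. Finally, we integrate the conditional probability bound over the law of $(\widehat\theta,\varepsilon,\tau)$, which gives the unconditional statement.

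\emph{Part (i).} Conditionally on $\widehat\theta$, the variables $W_i:=\zeta(\tau,g_{\widehat\theta}(\widetilde Z_i))$ are i.i.d. with mean $\psi_\tau(\widehat\theta)$. By Lemma~\ref{lem:smoother_conservative} they take values in $[0,b_\tau]$. The one-sided Hoeffding inequality gives
\[
\Pr\Bigl\{\psi_\tau(\widehat\theta)-\widehat\psi^{\mathrm{cert}}_{\tau,m}(\widehat\theta)\ge t\Bigr\}\le \exp\bigl(-2mt^2/b_\tau^2\bigr),
\]
and setting the right-hand side equal to $\rho$ yields $t=b_\tau\sqrt{\ln(1/\rho)/(2m)}$. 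On the complementary event, if $\widehat\psi^{\mathrm{cert}}_{\tau,m}(\widehat\theta)+t\le\alpha$, then $\psi_\tau(\widehat\theta)\le\alpha$. The conservative implication~\eqref{eq_conservative_implication}, equivalently Lemma~\ref{lem:smoother_conservative}, then gives $\mathrm{JCW}(\widehat\theta)\le\psi_\tau(\widehat\theta)\le\alpha$.

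\emph{Part (ii).} Conditionally, the $B_i$ are i.i.d. Bernoulli($q$) with $q=\mathrm{JCW}(\widehat\theta)$ by definition~\eqref{eq_jcw_def}, so $K_m\sim\mathrm{Bin}(m,q)$. We use the binomial–beta identity
\[
\Pr_p\{\mathrm{Bin}(m,p)\le k\}=1-F_{\mathrm{Beta}(k+1,m-k)}(p)\quad\text{for }k<m .
\]
From it, $U(k):=U_{\mathrm{CP}}(k,m;\rho)$ is nondecreasing in $k$, and $U(k)<q$ holds iff $\Pr_q\{\mathrm{Bin}\le k\}<\rho$. Let $k^*:=\max\{k:\ U(k)<q\}$. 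If no such $k$ exists, the failure probability is $0$. Otherwise, monotonicity gives
\[
\Pr\{U(K_m)<q\}=\Pr_q\{K_m\le k^*\}<\rho,
\]
which is the claim. The case $K_m=m$ is trivial because $U=1\ge q$. The final sentence of (ii) is then immediate.

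The main obstacle is the careful handling of the Clopper–Pearson monotonicity and inversion step in (ii), including its edge cases $k=m$ and $q=0$. A secondary point is making the conditioning argument rigorous, namely checking that independence of the certification sample justifies treating $\widehat\theta$ as deterministic. Both are standard.
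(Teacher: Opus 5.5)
Your proof is correct and follows the paper's argument. You condition on the independently fixed $(\widehat\theta,\varepsilon,\tau)$, apply one-sided Hoeffding to summands in $[0,b_\tau]$ together with $\mathrm{JCW}\le\psi_\tau$ for (i), and use exact Clopper--Pearson coverage for the $\mathrm{Bin}(m,q)$ count in (ii). The only difference is that you derive the Clopper--Pearson coverage yourself, from the binomial--beta identity and the monotonicity of $U_{\mathrm{CP}}(k,m;\rho)$ in $k$, whereas the paper simply cites it as the standard inversion of the binomial test.
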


\begin{proof}
For (i), condition on the fixed $\widehat{\theta}$, $\varepsilon$, and
$\tau$. The variables
$W_i:=\zeta(\tau,g_{\widehat{\theta}}(\widetilde Z_i))$ are i.i.d.,
bounded in $[0,b_\tau]$, and have mean $\psi_\tau(\widehat{\theta})$.
The one-sided Hoeffding inequality gives
\[
\Pr \left\{ \psi_\tau(\widehat{\theta}) -  \widehat{\psi}^{ \mathrm{cert}}_{\tau,m}(\widehat{\theta}) \ge t \right\} \le  \exp \left(-\frac{2mt^2}{b_\tau^2}\right).\]
Taking $t=b_\tau\sqrt{\ln(1/\rho)/(2m)}$ gives the stated bound. If this upper bound is at most $\alpha$, then $\psi_\tau(\widehat{\theta})\le\alpha$, and Lemma~\ref{lem:smoother_conservative} implies $\mathrm{JCW}(\widehat{\theta})\le\alpha$.

For (ii), condition on the fixed model and threshold. The certification sample is independent, so $B_1,\ldots,B_m$ are i.i.d. Bernoulli with
parameter $q=\mathrm{JCW}(\widehat{\theta})$. The one-sided Clopper-Pearson interval is obtained by inverting the binomial test and
satisfies, for every $q\in[0,1]$,
\[ \Pr_q\{q\le U_{\mathrm{CP}}(K_m,m;\rho)\}\ge 1-\rho. \]
Substituting $q=\mathrm{JCW}(\widehat{\theta})$ proves the certificate.
\end{proof}

\begin{remark}
The uniform theorem explains why empirical surrogate feasibility can imply population JCW feasibility even for a data-dependent $\widehat{\theta}$. In experiments, however, we use the hard-JCW Clopper-Pearson certificate because it directly confirms the deployment event of interest and does not depend on the smooth surrogate or on the parameter dimension or the optimization technique.

\end{remark}

\subsection{Solving the Constrained Problem}
Since $\psi_\tau(\theta)$ in~\eqref{eq_pia_theta} is nonconvex, $(\mathrm{PIA}_\tau)$ is a nonconvex constrained problem, and direct projection onto $\mathcal{F}^{\mathrm{in}}_\tau$ is impractical. For a fixed smoothing level $\tau$, we form the Lagrangian \[ \mathfrak{L}_\tau(\theta,\lambda) = \mathbb{E} \left[ \mathcal{L}\bigl(Y,p_\theta(\cdot\mid X)\bigr) \right] +R(\theta) +\lambda\bigl(\psi_\tau(\theta)-\alpha\bigr), \qquad \lambda\in[0,\lambda_{\max}]. \]  The Lagrangian has a nonconvex-concave minimax structure: it is generally nonconvex in $\theta$ and linear, hence concave, in the dual variable $\lambda$. This motivates a primal-dual update in which the classifier is optimized by descent while the multiplier responds by projected ascent to constraint violations. Two-timescale gradient descent-ascent methods admit stationarity guarantees for related nonconvex-concave minimax problems under suitable regularity conditions \citep{lin2020gradient,lin2025two,ramirez2025feasible}. These modifications are motivated by the minimax structure but fall outside the exact assumptions of the cited convergence results; accordingly, Algorithm~\ref{alg:inner_cc} returns a candidate solution whose JCW feasibility is verified independently on the held-out certification sample.

\begin{algorithm}[H]
\caption{Pseudo Code: ReliableNet Training}
\label{alg:inner_cc}
\begin{algorithmic}[1]

\Require Warm-started parameters $\theta_0$; smoothing levels
$\tau_0,\tau_{\min}$; risk budget $\alpha$; primal and dual step sizes
$\eta_\theta,\eta_\lambda$; EMA factor $\beta\in[0,1]$;
multiplier cap $\lambda_{\max}$; bracket-width tolerance
$\delta_{\mathrm{tol}}$; decay $\gamma\in(0,1)$;
epochs per stage $E$; maximum stages $J_{\max}$

\Ensure Candidate parameters $\theta$; feasibility assessed post hoc by
Theorem~\ref{thm_holdout_certificates}

\State $\theta\gets\theta_0$, $\lambda\gets0$,
$\bar v\gets0$, $\tau\gets\tau_0$

\For{$j=1,\ldots,J_{\max}$}
    \Comment{continuation over smoothing levels}

    \For{$e=1,\ldots,E$}
        \For{each mini-batch $\{(x_i,y_i)\}_{i=1}^{B}$}
            \State $\widehat{\psi}_\tau \gets
            \frac{1}{B}\sum_{i=1}^{B} \zeta\left(\tau,g_\theta(x_i,y_i)\right),\;$  $\;\widehat{\mathcal L} \gets  \frac{1}{B}\sum_{i=1}^{B} \ell(x_i,y_i,\theta)$
            \State $v \gets  \dfrac{\widehat{\psi}_\tau-\alpha} {\max(\alpha,10^{-2})}$
            \Comment{normalized violation}
            \State $\theta \gets  \theta-\eta_\theta \nabla_\theta \left(  \widehat{\mathcal L}  +R(\theta) +\lambda v \right)$
            \Comment{primal descent}
            \State $\bar v \gets  \beta\bar v+(1-\beta)v$
            \Comment{EMA stabilization}
            \State $\lambda \gets \operatorname{clip}\left( \lambda+  \eta_\lambda\frac{B}{n}\bar v,\, 0,\lambda_{\max} \right)$
            \Comment{projected dual ascent}
        \EndFor
    \EndFor

    \State $\Delta_\tau
    \gets \widehat{\phi}_\tau(\theta) -  \left(1-\widehat{\psi}_\tau(\theta)\right)$
    \Comment{bracket width, Prop.~\ref{prop_gap_stopping}}
    \If{$\Delta_\tau\leq\delta_{\mathrm{tol}}$}
        \State \textbf{break}
    \EndIf
    \State $\tau \gets \max(\tau_{\min},\gamma\tau)$
    \Comment{tighten inner approximation}
\EndFor
\State \Return $\theta$
\Comment{certify independently via Thm.~\ref{thm_holdout_certificates}}

\end{algorithmic}
\end{algorithm}
Unless otherwise stated, we use $\tau_0=0.5$, $\tau_{\min}=0.01$, $\gamma=0.65$, $\delta_{\mathrm{tol}}=10^{-4}$,
$\eta_\lambda=1.1$, $\beta=0.75$, and $\lambda_{\max}=50$, with surrogate parameters $m_1=0.5$ and $m_2=0.3$.  ReliableNet training proceeds in two phases. Phase~1 is a 10-epoch ERM warm-up, after which the threshold $\varepsilon^\ast$ is fixed to the 80th percentile of the confidences of misclassified examples on the selection fold.  Phase~2 is run for
$E=6$ epochs for at most $J_{\max}=15$ stages; for CIFAR-10 we use $E=14$ and $J_{\max}=5$ to match the baseline training budget.
The primal optimizer is Adam with learning rate $10^{-3}$, weight decay $10^{-4}$, batch size $256$, and gradient-norm clipping
at $5$.

\subsection{Benchmark Methods}
\label{sec_baselines}

We compare \textbf{ReliableNet} with six representative baselines spanning unconstrained training, post-hoc calibration and risk control, and
end-to-end selective prediction. All methods use the same base architecture and data splits within each dataset. Model parameters and operating points are fixed using the training and selection folds before evaluation on the independent certification fold. ReliableNet uses the fixed threshold $\varepsilon^\ast$ selected before constrained training and accepts an input when $c_\theta(x)\geq\varepsilon^\ast$. The corresponding acceptance rule for each baseline is specified below.

\begin{itemize}

\item \emph{ERM (Empirical Risk Minimization).} ERM minimizes the standard cross-entropy objective 

$\frac{1}{n}\sum_{i=1}^{n}  \ell_{\mathrm{CE}} \left(f_\theta(x_i),y_i\right)$, without an explicit reliability constraint \citep{vapnik1998statistical}. It serves as the primary unconstrained baseline. At evaluation, ERM accepts predictions satisfying $c_\theta(x)\geq\varepsilon^\ast$, using the same confidence threshold as ReliableNet.

\item \emph{Temperature Scaling.}
Temperature Scaling \citep{guo2017calibration} leaves the ERM-trained model unchanged and fits a scalar temperature $T>0$ on the selection fold by minimizing the negative log-likelihood of the scaled logits $f_\theta(x)/T$. The predicted class is unchanged, but the confidence becomes $c_{\theta,T}(x) = \max_k \operatorname{softmax} \left(f_\theta(x)/T\right)_k $.
The method accepts when $c_{\theta,T}(x)\geq\varepsilon^\ast$. It assesses whether post-hoc recalibration alone is sufficient to control confident errors.

\item \emph{Confidence Thresholding.}
This post-hoc baseline keeps the ERM model fixed and selects the most permissive threshold $\varepsilon^{\mathrm{ct}}$ on the selection fold such
that$ \widehat{\mathrm{JCW}}_{\mathrm{sel}} (\varepsilon^{\mathrm{ct}})  =  \frac{1}{n_{\mathrm{sel}}}  \sum_{i=1}^{n_{\mathrm{sel}}}  \mathbf{1} \left\{ \widehat y_i\neq y_i,\, c_i\geq\varepsilon^{\mathrm{ct}} \right\} \leq \alpha$ .
At evaluation, it accepts when $c_\theta(x)\geq\varepsilon^{\mathrm{ct}}$. This baseline isolates the effect of selecting an empirical JCW-controlling threshold from that of modifying the model during training.

\item \emph{Conformal Risk Control (CRC).}
CRC~\citep{angelopoulos2024conformal} keeps the ERM predictor fixed and selects an operating threshold using the conformal finite-sample risk
correction. Define $\widehat R_n(\varepsilon) = \frac{1}{n}\sum_{i=1}^{n} \mathbf{1} \left\{ \widehat y_i\neq y_i,\; c_i>\varepsilon \right\}$.
Since this loss is bounded by $B=1$ and non-increasing in $\varepsilon$, CRC selects $\widehat\varepsilon_{\mathrm{CRC}} = \inf\left\{ \varepsilon: \frac{n}{n+1}\widehat R_n(\varepsilon) +\frac{1}{n+1} \leq\alpha \right\}$. Under the exchangeability and regularity conditions of \citet{angelopoulos2024conformal}, this choice satisfies the finite-sample expected-risk guarantee $\mathbb{E} \left[ R_{n+1}(\widehat\varepsilon_{\mathrm{CRC}}) \right]\leq\alpha.$ At evaluation, CRC accepts predictions whose confidence exceeds the selected threshold. Unlike ReliableNet, CRC controls the operating rule of a fixed predictor rather than modifying the classifier parameters during training.

\item \emph{Selective prediction.} These methods are trained end-to-end to learn an abstention mechanism jointly with the classifier. They optimize the risk-coverage tradeoff but provide no certified operating point.
\begin{itemize}
\item \emph{SelectiveNet.}
SelectiveNet~\citep{geifman2019selectivenet} augments the classifier with a selection head $s_\theta(x)\in[0,1]$ and an auxiliary prediction head
$h_\theta(x)$. The main classifier and selection head minimize the selective risk under a target-coverage penalty,
\begin{equation}
\mathcal{L}_{\mathrm{sel}}(\theta)
    =  \frac{  \sum_i s_\theta(x_i)  \ell_{\mathrm{CE}} \left(f_\theta(x_i),y_i\right) }{ \sum_i s_\theta(x_i)+\epsilon  }
    +  \lambda_{\mathrm{cov}}  \left(  \kappa-\frac{1}{n}\sum_i s_\theta(x_i)   \right)_+^2 ,
    \label{eq_selectivenet}
\end{equation}
where $\kappa$ is the target coverage and $\lambda_{\mathrm{cov}}$ controls the coverage penalty. The auxiliary head is trained on all
inputs using $\mathcal{L}_{\mathrm{aux}}(\theta) = \frac{1}{n}\sum_i  \ell_{\mathrm{CE}} \left(h_\theta(x_i),y_i\right)$,
and the total objective is
\[ \mathcal{L}(\theta) = \omega\mathcal{L}_{\mathrm{sel}}(\theta)  +(1-\omega)\mathcal{L}_{\mathrm{aux}}(\theta),\]
where $\omega\in[0,1]$ is the mixing coefficient. After training, the native rejection score is $r_{\mathrm{SN}}(x)=1-s_\theta(x)$,
so lower values indicate more acceptable predictions. We select the most permissive threshold $t_{\mathrm{SN}}$ on the selection fold such that the empirical JCW of the accepted predictions does not exceed $\alpha$. At evaluation, SelectiveNet accepts when $r_{\mathrm{SN}}(x)\leq t_{\mathrm{SN}}$, equivalently, when $s_\theta(x)\geq 1-t_{\mathrm{SN}}$.

\item \emph{Deep Gamblers}~\citep{liu2019deep} augments the classifier with an additional \emph{abstain} output whose probability
$\hat p_{m+1}(x)$ represents rejection. After a short cross-entropy warm-up, the model is trained with the gambling loss
\begin{equation}
    \mathcal{L}_{\mathrm{DG}}(\theta)
    =
    -\frac{1}{n}\sum_{i=1}^{n}
    \log \left(
        \hat p_{y_i}(x_i)
        +
        \frac{\hat p_{m+1}(x_i)}{o}
    \right),
    \label{eq_deepgamblers}
\end{equation}
where $m$ is the number of classes and $o>1$ is the reward parameter controlling the trade-off between prediction and abstention. We use
$o=1.9$. For class prediction, the non-abstention probabilities are renormalized over the $m$ prediction classes. The native rejection score is
$r_{\mathrm{DG}}(x)=\hat p_{m+1}(x)$, so lower values indicate more acceptable predictions. We select the most permissive threshold
$t_{\mathrm{DG}}$ on the selection fold such that the empirical accepted-and-wrong rate does not exceed $\alpha$. At evaluation, Deep Gamblers accepts when $\hat p_{m+1}(x)\leq t_{\mathrm{DG}}$.
\end{itemize}
\end{itemize}
The benchmarks were chosen to span the main alternatives to ReliableNet: unconstrained ERM, post-hoc calibration and risk control, and learned
selective prediction. ERM, Temperature Scaling, and ReliableNet are evaluated at the common threshold $\varepsilon^\ast$, isolating the effect
of constrained training at a fixed deployment rule. Methods with native thresholding or abstention scores use the most permissive selection-fold
operating point satisfying the empirical JCW budget. All thresholds are fixed before evaluation on the independent certification fold, where every method is assessed using the same hard JCW event and Clopper-Pearson certificate. Because resulting coverages may differ, we also report AURC and risk-coverage curves to compare selective performance at matched coverage.
% \paragraph{Acceptance rules.} Because these methods accept and abstain through different mechanisms, a fair comparison requires fixing each one's operating point and scoring all of them on the same accepted-and-wrong metric. ERM, Temperature scaling and ReliableNet both accept predictions satisfying
% $c_\theta(x)\geq\varepsilon^\ast$.  ReliableNet uses the same threshold, but its constrained training changes the confidence
% distribution and therefore yields a different acceptance rate. Confidence Thresholding and CRC each take the most permissive selection-fold threshold, the former keeping validation JCW at most $\alpha$, the latter via $\frac{n}{n+1}\widehat{\mathrm{JCW}}_{\mathrm{val}} (\varepsilon) + \frac{1}{n+1} \leq \alpha$, and accept above it. SelectiveNet and Deep Gamblers instead accept by their learned gate $s(x)$ and abstention score
% $r(x)$, thresholded on the selection fold; Deep Gamblers uses $o = 1.5$.

\subsection{Evaluation Metrics}
\label{subsec_metrics}

We evaluate predictive performance, reliability, and approximation tightness using metrics that directly correspond to the formulation of probabilistic constraints.

\begin{table}[H]
\centering
\caption{Evaluation metrics (predictive performance and primary reliability).  $\mathcal{A}_\theta(\varepsilon)$
  is the acceptance region;
  $\pi$ sorts predictions by ascending score (most to least acceptable).
  $\downarrow$ = lower is better.}
\label{tab_metrics1}
\setlength{\extrarowheight}{3pt}
\resizebox{\textwidth}{!}{
\begin{tabular}{>{\bfseries}p{1.6cm} *{4}{p{4.2cm}}}
\toprule
 & \textbf{Accuracy $\uparrow$} & \textbf{Coverage $\uparrow$} & \textbf{HC Accuracy $\uparrow$} & \textbf{JCW \textnormal{\small(primary)} $\downarrow$} \\
\midrule
Formula
  & $\mathrm{Acc} = \mathbb{P}(\hat{y}_\theta(X) = Y)$
  & $\mathrm{Cov} = \mathbb{P}(X \in \mathcal{A}_\theta(\varepsilon))$
  & $\mathrm{Acc}_{\mathrm{conf}} = \mathbb{P}(\hat{y}=Y \mid X \in \mathcal{A}_\theta(\varepsilon))$
  & $\mathrm{JCW} = \mathbb{P}\left( c_\theta(X) \geq \varepsilon \wedge \hat{y}_\theta(X) \neq Y \right)$ \\[6pt]
Role
  & Overall correctness. Reported for completeness; does not capture confidence or abstention.
  & Fraction of inputs accepted.
  & Accuracy restricted to accepted predictions. Measures decision quality when the model acts.
  & \textbf{Primary reliability metric.} Exact quantity controlled by the probabilistic constraint $\mathrm{JCW} \leq \alpha$. \\
\bottomrule
\end{tabular}
}
\end{table}

\begin{table}[H]
\centering
\caption{Evaluation metrics (certification, calibration, and selective prediction).}
\label{tab_metrics2}
\setlength{\extrarowheight}{3pt}
\resizebox{\textwidth}{!}{
\begin{tabular}{>{\bfseries}p{1.6cm} *{3}{p{5.2cm}}}
\toprule
 & \textbf{Cert. $\uparrow$} & \textbf{ECE $\downarrow$} & \textbf{AURC $\downarrow$} \\
\midrule
Formula
  & $\mathrm{Cert} = \sum_{s=1}^{S} \mathbf{1}\left\{ U_{\mathrm{CP}}^{(s)} (K_m^{(s)},m;\rho)\leq\alpha\right\}.$
  & $\mathrm{ECE} = \sum_{b=1}^{B} \tfrac{|B_b|}{n} \bigl|\mathrm{acc}(B_b) - \mathrm{conf}(B_b)\bigr|$
  & $\mathrm{AURC} = \frac{1}{n} \sum_{k=1}^{n} R_k$, $R_k = \frac{1}{k} \sum_{i=1}^{k} \mathds{1} \left[ \hat y_{\pi(i)} \neq y_{\pi(i)} \right],$ \\[6pt]
Role
  & \textbf{Certification metric.} Counts how often the one-sided Clopper-Pearson upper bound is below $\alpha$ across $S$ seeds.
  & Alignment between confidence and accuracy across $B$ bins. Low ECE indicates well-calibrated predictions.
  & Area under the Risk-Coverage curve. Summarises selective prediction quality across all operating points. \\
\bottomrule
\end{tabular}
}
\end{table}

\section{Experiments}
\label{sec_experiments}

We evaluate ReliableNet on six benchmarks spanning four tabular  and two image  modalities:
a controlled synthetic ambiguity benchmark, UCI Adult, UCI German Credit, the cybersecurity dataset UNSW-NB15, spurious Colored-MNIST-10, and CIFAR-10. The experiments answer three questions. First, does ReliableNet satisfy the joint confident-wrong (JCW) constraint in distribution, and does it do so where standard methods fail? Second, how does it compare with confidence, calibration, and selective-classification baselines across accuracy, coverage, calibration, and ranking quality? Third, how does empirical reliability performance degrade under
demographic shift, covariate noise, spurious-correlation inversion, and novel classes?

\medskip
All experiments use PyTorch on an NVIDIA GeForce RTX 3090, with the same base architecture for all methods within each dataset. Tabular models use a two-hidden-layer MLP with width $128$; Colored-MNIST uses a five-layer Convolutional Neural Network; and CIFAR-10 uses ResNet-18~\citep{he2016deep}. Results are averaged over five seeds $\{0,37,42,123,2026\}$. We set $\alpha=0.05$ for the tabular datasets, $\alpha=0.03$ for CIFAR-10, and $\alpha=0.01$ for
Colored-MNIST, with certification level $\rho=0.05$. ReliableNet begins with a 10-epoch ERM warm-up, after which $\varepsilon^\ast$ is fixed to the 80th percentile of the confidence scores of misclassified selection-fold examples. The subsequent constrained-training procedure follows Algorithm~\ref{alg:inner_cc}. The other methods are trained over 100 epochs for tabular dataset and Colored MNIST and 80 for CIFAR10.

\subsection{Data Description}
\label{subsec_data_description}

We evaluate ReliableNet on six benchmarks covering intrinsic ambiguity, demographic shift, small-sample credit-risk prediction, spurious correlation, image corruption, and novel cyberattacks. Each benchmark is divided into independent training, selection, and certification folds. The selection fold is used for threshold choice, checkpointing, and operating-point selection, while the certification fold remains untouched until the final model is fixed. All preprocessing is fitted on the training fold only. For tabular datasets, numerical variables are median-imputed and standardized, categorical variables are mode-imputed and one-hot encoded, and zero-variance features are removed.

\paragraph{Synthetic Band.}
Synthetic Band is a controlled ambiguity benchmark in ($\mathbb{R}^{10}$). Labels follow a fixed linear boundary but are flipped with
probability (1/2) within a band of half-width ($\gamma$). Outside the band, the task is nearly separable; inside it, uncertainty is irreducible. We use (d=10), $(n_{\mathrm{train}}=5000)$, and $(n_{\mathrm{test}}=3000)$. The in-distribution setting uses $(\gamma=0.5)$, while the ambiguity stress test uses $(\gamma_{\mathrm{test}}=0.7)$. We also monitor the behavior of the JCW over sweeping $\gamma$ values.

\paragraph{UCI Adult.}
The Adult Income dataset~\citep{adult_2} contains $(48,842)$ census records with 14 demographic and socioeconomic features, with the
binary target indicating whether annual income exceeds \$50{,}000. The in-distribution experiment uses a stratified training-selection-certification split. For the stress test, the model is trained on male records and evaluated on female records. The sex variable is used only to define the domains and is excluded from the predictors. Reliability is also assessed across female education subgroups.
\paragraph{UCI German Credit.}
The German Credit dataset~\citep{hofmann1994germancredit} contains $(1000)$  records describing loan applicants with 20
features covering financial history, loan purpose, and personal attributes, with the binary target indicating creditworthiness. The in-distribution experiment uses a stratified training-selection-certification split. For the
stress test, the model is trained on applicants aged (30) and above and evaluated on applicants younger than (30), producing an age-based subgroup and support shift.

\paragraph{Colored-MNIST-10.}
Colored-MNIST \cite{deng2012mnist} is a 10-class benchmark in which digit identity is the target and color is an artificial spurious feature. Each digit receives its associated color with probability $(\rho)$; otherwise, an incorrect color is assigned. Images are weakened through downsampling, noise, and occasional occlusion to encourage reliance on color. The in-distribution folds use the same color generation process. For the stress test, the model is trained with $(\rho_{\mathrm{train}}=0.9)$ and evaluated with $(\rho_{\mathrm{test}}=0.1)$, creating a controlled spurious-correlation shift. We considered monitoring reliability for sweeping $\rho_{\mathrm{test}}$  values.

\paragraph{CIFAR-10.}
CIFAR-10 \cite{krizhevsky2009learning} contains (60{,}000) color images from ten object classes. Training, selection, and certification data are drawn from the clean official training split, while the clean official test set provides an independent in-distribution evaluation. Distributional stress tests apply label-preserving transformations to the same test images, including grayscale conversion, Gaussian noise, and reduced brightness. Images are normalized using the training-set channel statistics. These experiments evaluate reliability under image corruption and covariate shift rather than unknown-class detection.

\paragraph{UNSW-NB15.}
UNSW-NB15 \citep{7348942} is a network-intrusion dataset of 257{,}673 flow records, each described by 42 features (39 numerical and 3 categorical: proto, service, state), containing normal traffic and nine attack families. The binary task distinguishes normal from malicious traffic. The source distribution comprises normal observations together with four seen attack families (Generic, Exploits, Fuzzers, DoS), split into training, selection, and in-distribution certification folds. The stress test comprises disjoint normal observations together with the five held-out families (Reconnaissance, Analysis, Backdoor, Shellcode, Worms), evaluating generalization to novel attacks. Normal observations are partitioned before constructing the source and shifted sets to prevent overlap. Categorical features are one-hot encoded and numerical features standardized, with the preprocessing pipeline fitted only on the training fold.

\subsection{In-distribution certification and comparison} \label{subsec_in_distribution} We first evaluate all methods in distribution using independent training, selection, and certification folds. The training fold is used to fit the model, the selection fold is used to select thresholds, and the certification fold is held out until the final model and operating point are fixed. The main reliability statistic is the joint confident-wrong probability $\mathrm{JCW} = \mathbb{P} \bigl( \widehat{Y}(X) \neq Y,  \mathrm{conf}(X) \geq \varepsilon \bigr)$.  A seed is certified ($Cert$) when the Clopper-Pearson certificate is below the risk budget: $U_{\mathrm{CP}}(K_m,m;\rho)\leq \alpha$ ( $\rho=0.05$). Table~\ref{tab_id_main} compares ReliableNet with ERM, Temperature Scaling, Confidence Thresholding, CRC, SelectiveNet, and DeepGamblers, it first of all reports that ReliableNet is the only method that is  consistently certified across all datasets.

\begin{table*}[h!]
\centering
\caption{In-distribution results across six datasets (mean $\pm$ std over 5 seeds). Coverage is not bolded because it reflects each method's abstention policy. \textit{Cert.} counts seeds certified by the held-out Clopper-Pearson bound; AURC is threshold-free and measured over the full test split, so it reflects confidence-ranking quality independent of the operating point (identical for ERM, ConfThresholding, and CRC, which share the ERM score). Arrows indicate the preferred direction; $\alpha$ is the per-dataset JCW budget.}
\label{tab_id_main}
\resizebox{1.0\textwidth}{!}{%
\begin{tabular}{ll rrrrrrr c}
\toprule
Dataset & Method & Acc $\uparrow$ & Cov & Acc$_{\mathrm{HC}}$ $\uparrow$ & JCW $\downarrow$ & AURC $\downarrow$ & JCW/$\alpha$ $\downarrow$ & ECE $\downarrow$ & Cert. \\
\midrule
GermanCredit & ERM & $0.762${\scriptsize$ \pm 0.029$} & $1.00${\scriptsize$ \pm 0.00$} & $0.762${\scriptsize$ \pm 0.029$} & $0.2379${\scriptsize$ \pm 0.0294$} & $0.2314${\scriptsize$ \pm 0.0064$} & $4.76${\scriptsize$ \pm 0.59$} & $0.215${\scriptsize$ \pm 0.020$} & 0/5 \\
 & TempScaling & $0.762${\scriptsize$ \pm 0.029$} & $0.50${\scriptsize$ \pm 0.07$} & $0.886${\scriptsize$ \pm 0.011$} & $0.0568${\scriptsize$ \pm 0.0094$} & $0.2334${\scriptsize$ \pm 0.0058$} & $1.14${\scriptsize$ \pm 0.19$} & \textbf{0.090}{\scriptsize$ \pm 0.026$}& 1/5 \\
 & ConfThresholding & $0.762${\scriptsize$ \pm 0.029$} & $0.50${\scriptsize$ \pm 0.07$} & $0.886${\scriptsize$ \pm 0.011$} & $0.0568${\scriptsize$ \pm 0.0094$} & $0.2314${\scriptsize$ \pm 0.0064$} & $1.14${\scriptsize$ \pm 0.19$} & $0.215${\scriptsize$ \pm 0.020$} & 1/5 \\
 & CRC & $0.762${\scriptsize$ \pm 0.029$} & $0.44${\scriptsize$ \pm 0.08$} & $0.894${\scriptsize$ \pm 0.019$} & $0.0463${\scriptsize$ \pm 0.0120$} & $0.2314${\scriptsize$ \pm 0.0064$} & $0.93${\scriptsize$ \pm 0.24$} & $0.215${\scriptsize$ \pm 0.020$} & 2/5 \\
 & SelectiveNet & $0.762${\scriptsize$ \pm 0.035$} & $0.47${\scriptsize$ \pm 0.11$} & $0.912${\scriptsize$ \pm 0.047$} & $0.0442${\scriptsize$ \pm 0.0319$} & $0.2298${\scriptsize$ \pm 0.0119$} & $0.88${\scriptsize$ \pm 0.64$} & $0.202${\scriptsize$ \pm 0.025$} & 2/5 \\
 & DeepGamblers & \textbf{0.773}{\scriptsize$ \pm 0.034$} & $0.45${\scriptsize$ \pm 0.07$} & $0.908${\scriptsize$ \pm 0.036$} & $0.0421${\scriptsize$ \pm 0.0211$} & $0.2289${\scriptsize$ \pm 0.0159$} & $0.84${\scriptsize$ \pm 0.42$} & $0.207${\scriptsize$ \pm 0.027$} & 2/5 \\
  & \textbf{ReliableNet (Ours)} & 0.749{\scriptsize$ \pm 0.039$} & 0.39{\scriptsize$ \pm 0.04$} & \textbf{0.928}{\scriptsize$ \pm 0.031$} & \textbf{0.0274}{\scriptsize$ \pm 0.0120$} & \textbf{0.2145}{\scriptsize$ \pm 0.0129$} & \textbf{0.55}{\scriptsize$ \pm 0.24$} & 0.101{\scriptsize$ \pm 0.030$}& \textbf{5/5} \\

\midrule
Adult & ERM & $0.809${\scriptsize$ \pm 0.003$} & $1.00${\scriptsize$ \pm 0.00$} & $0.809${\scriptsize$ \pm 0.003$} & $0.1911${\scriptsize$ \pm 0.0025$} & $0.0211${\scriptsize$ \pm 0.0013$} & $3.82${\scriptsize$ \pm 0.05$} & $0.056${\scriptsize$ \pm 0.004$} & 0/5 \\
 & TempScaling & $0.809${\scriptsize$ \pm 0.003$} & $0.63${\scriptsize$ \pm 0.01$} & $0.917${\scriptsize$ \pm 0.007$} & $0.0520${\scriptsize$ \pm 0.0050$} & $0.0211${\scriptsize$ \pm 0.0013$} & $1.04${\scriptsize$ \pm 0.10$} & $0.018${\scriptsize$ \pm 0.006$} & 2/5 \\
 & ConfThresholding & $0.809${\scriptsize$ \pm 0.003$} & $0.63${\scriptsize$ \pm 0.01$} & $0.917${\scriptsize$ \pm 0.007$} & $0.0520${\scriptsize$ \pm 0.0050$} & $0.0211${\scriptsize$ \pm 0.0013$} & $1.04${\scriptsize$ \pm 0.10$} & $0.056${\scriptsize$ \pm 0.004$} & 2/5 \\
 & CRC & $0.809${\scriptsize$ \pm 0.003$} & $0.62${\scriptsize$ \pm 0.01$} & $0.917${\scriptsize$ \pm 0.007$} & $0.0517${\scriptsize$ \pm 0.0049$} & $0.0211${\scriptsize$ \pm 0.0013$} & $1.03${\scriptsize$ \pm 0.10$} & $0.056${\scriptsize$ \pm 0.004$} & 2/5 \\
 & SelectiveNet & $0.810${\scriptsize$ \pm 0.004$} & $0.62${\scriptsize$ \pm 0.01$} & $0.916${\scriptsize$ \pm 0.005$} & $0.0519${\scriptsize$ \pm 0.0040$} & $0.0299${\scriptsize$ \pm 0.0032$} & $1.04${\scriptsize$ \pm 0.08$} & $0.053${\scriptsize$ \pm 0.008$} & 1/5 \\
 & DeepGamblers & $0.816${\scriptsize$ \pm 0.008$} & $0.60${\scriptsize$ \pm 0.01$} & $0.915${\scriptsize$ \pm 0.007$} & $0.0515${\scriptsize$ \pm 0.0043$} & $0.0284${\scriptsize$ \pm 0.0047$} & $1.03${\scriptsize$ \pm 0.09$} & $0.138${\scriptsize$ \pm 0.010$} & 2/5 \\
  & \textbf{ReliableNet (Ours)} & \textbf{0.821}{\scriptsize$ \pm 0.004$} & 0.52{\scriptsize$ \pm 0.12$} & \textbf{0.952}{\scriptsize$ \pm 0.023$} & \textbf{0.0271}{\scriptsize$ \pm 0.0144$} & \textbf{0.0159}{\scriptsize$ \pm 0.0018$} & \textbf{0.54}{\scriptsize$ \pm 0.29$} & \textbf{0.015}{\scriptsize$ \pm 0.022$}& \textbf{5/5} \\
\midrule

SyntheticBand & ERM & $0.835${\scriptsize$ \pm 0.029$} & $1.00${\scriptsize$ \pm 0.00$} & $0.835${\scriptsize$ \pm 0.029$} & $0.1653${\scriptsize$ \pm 0.0289$} & $0.0721${\scriptsize$ \pm 0.0249$} & $3.31${\scriptsize$ \pm 0.58$} & $0.065${\scriptsize$ \pm 0.011$} & 0/5 \\
 & TempScaling & $0.835${\scriptsize$ \pm 0.029$} & $0.75${\scriptsize$ \pm 0.08$} & $0.932${\scriptsize$ \pm 0.023$} & $0.0515${\scriptsize$ \pm 0.0206$} & $0.0721${\scriptsize$ \pm 0.0249$} & $1.03${\scriptsize$ \pm 0.41$} & $0.036${\scriptsize$ \pm 0.008$} & 3/5 \\
 & ConfThresholding & $0.835${\scriptsize$ \pm 0.029$} & $0.75${\scriptsize$ \pm 0.08$} & $0.932${\scriptsize$ \pm 0.023$} & $0.0515${\scriptsize$ \pm 0.0206$} & $0.0721${\scriptsize$ \pm 0.0249$} & $1.03${\scriptsize$ \pm 0.41$} & $0.065${\scriptsize$ \pm 0.011$} & 3/5 \\
 & CRC & $0.835${\scriptsize$ \pm 0.029$} & $0.74${\scriptsize$ \pm 0.07$} & $0.935${\scriptsize$ \pm 0.017$} & $0.0479${\scriptsize$ \pm 0.0142$} & $0.0721${\scriptsize$ \pm 0.0249$} & $0.96${\scriptsize$ \pm 0.28$} & $0.065${\scriptsize$ \pm 0.011$} & 3/5 \\
 & SelectiveNet & $0.837${\scriptsize$ \pm 0.032$} & $0.76${\scriptsize$ \pm 0.05$} & $0.936${\scriptsize$ \pm 0.022$} & $0.0483${\scriptsize$ \pm 0.0170$} & \textbf{0.0683}{\scriptsize$ \pm 0.0243$}& $0.97${\scriptsize$ \pm 0.34$} & $0.066${\scriptsize$ \pm 0.004$} & 2/5 \\
 & DeepGamblers & $0.835${\scriptsize$ \pm 0.030$} & $0.67${\scriptsize$ \pm 0.14$} & $0.937${\scriptsize$ \pm 0.014$} & $0.0437${\scriptsize$ \pm 0.0160$} & $0.0709${\scriptsize$ \pm 0.0269$}& $0.87${\scriptsize$ \pm 0.32$} & $0.074${\scriptsize$ \pm 0.016$} & 2/5 \\
  & \textbf{ReliableNet (Ours)} & \textbf{0.839}{\scriptsize$ \pm 0.029$} & 0.42{\scriptsize$ \pm 0.12$} & \textbf{0.999}{\scriptsize$ \pm 0.001$} & \textbf{0.0005}{\scriptsize$ \pm 0.0003$} & 0.0712{\scriptsize$ \pm 0.0257$}& \textbf{0.01}{\scriptsize$ \pm 0.01$} & \textbf{0.029}{\scriptsize$ \pm 0.019$}& \textbf{5/5} \\
  
% \midrule
\bottomrule
\end{tabular}
}
\end{table*}

\begin{table}[h!]
\centering
\caption{In-distribution results across six datasets
(Table~\ref{tab_id_main} continued).}
\label{tab_id_main_prime}
\resizebox{1.0\textwidth}{!}{%
\begin{tabular}{ll rrrrrrr c}
\toprule
Dataset & Method & Acc $\uparrow$ & Cov & Acc$_{\mathrm{HC}}$ $\uparrow$ & JCW $\downarrow$ & AURC $\downarrow$ & JCW/$\alpha$ $\downarrow$ & ECE $\downarrow$ & Cert. \\
\midrule
ColoredMNIST & ERM & $0.980${\scriptsize$ \pm 0.016$} & $1.00${\scriptsize$ \pm 0.00$} & $0.980${\scriptsize$ \pm 0.016$} & $0.0198${\scriptsize$ \pm 0.0163$} & $0.0238${\scriptsize$ \pm 0.0200$} & $1.98${\scriptsize$ \pm 1.63$} & $0.004${\scriptsize$ \pm 0.003$} & 1/5 \\
 & TempScaling & $0.980${\scriptsize$ \pm 0.016$} & $0.97${\scriptsize$ \pm 0.05$} & $0.991${\scriptsize$ \pm 0.002$} & $0.0091${\scriptsize$ \pm 0.0022$} & $0.0239${\scriptsize$ \pm 0.0201$} & $0.91${\scriptsize$ \pm 0.22$} & $0.003${\scriptsize$ \pm 0.001$} & 2/5 \\
 & ConfThresholding & $0.980${\scriptsize$ \pm 0.016$} & $0.97${\scriptsize$ \pm 0.05$} & $0.991${\scriptsize$ \pm 0.002$} & $0.0091${\scriptsize$ \pm 0.0022$} & $0.0238${\scriptsize$ \pm 0.0200$} & $0.91${\scriptsize$ \pm 0.22$} & $0.004${\scriptsize$ \pm 0.003$} & 2/5 \\
 & CRC & $0.980${\scriptsize$ \pm 0.016$} & $0.97${\scriptsize$ \pm 0.06$} & $0.991${\scriptsize$ \pm 0.002$} & $0.0089${\scriptsize$ \pm 0.0022$} & $0.0238${\scriptsize$ \pm 0.0200$} & $0.89${\scriptsize$ \pm 0.22$} & $0.004${\scriptsize$ \pm 0.003$} & 3/5 \\
 & SelectiveNet & $0.985${\scriptsize$ \pm 0.014$} & $0.86${\scriptsize$ \pm 0.25$} & $0.989${\scriptsize$ \pm 0.006$} & $0.0085${\scriptsize$ \pm 0.0020$} & $0.009${\scriptsize$ \pm 0.0043$}& $0.85${\scriptsize$ \pm 0.20$} & $0.003${\scriptsize$ \pm 0.002$} & 3/5 \\
 & DeepGamblers & $0.986${\scriptsize$ \pm 0.008$} & $0.99${\scriptsize$ \pm 0.02$} & $0.990${\scriptsize$ \pm 0.003$} & $0.0093${\scriptsize$ \pm 0.0028$} & $0.0105${\scriptsize$ \pm 0.0201$}& $0.93${\scriptsize$ \pm 0.28$} & $0.004${\scriptsize$ \pm 0.003$} & 2/5 \\
  & \textbf{ReliableNet (Ours)} & \textbf{0.992}{\scriptsize$ \pm 0.001$} & 0.98{\scriptsize$ \pm 0.00$} & \textbf{0.998}{\scriptsize$ \pm 0.001$} & \textbf{0.0023}{\scriptsize$ \pm 0.0006$} & \textbf{0.0076}{\scriptsize$ \pm 0.0008$} & \textbf{0.23}{\scriptsize$ \pm 0.06$} & \textbf{0.002}{\scriptsize$ \pm 0.000$} & \textbf{5/5} \\
  
\midrule
CIFAR10 & ERM & $0.831${\scriptsize$ \pm 0.009$} & $1.00${\scriptsize$ \pm 0.00$} & $0.831${\scriptsize$ \pm 0.009$} & $0.1692${\scriptsize$ \pm 0.0094$} & $0.0681${\scriptsize$ \pm 0.0044$} & $5.64${\scriptsize$ \pm 0.31$} & $0.112${\scriptsize$ \pm 0.009$} & 0/5 \\

 & TempScaling & $0.831${\scriptsize$ \pm 0.009$} & $0.68${\scriptsize$ \pm 0.03$} & $0.953${\scriptsize$ \pm 0.003$} & $0.0319${\scriptsize$ \pm 0.0029$} & $0.0677${\scriptsize$ \pm 0.0044$} & $1.06${\scriptsize$ \pm 0.10$} & $0.023${\scriptsize$ \pm 0.003$} & 1/5 \\
 & ConfThresholding & $0.831${\scriptsize$ \pm 0.009$} & $0.68${\scriptsize$ \pm 0.02$} & $0.952${\scriptsize$ \pm 0.003$} & $0.0331${\scriptsize$ \pm 0.0024$} & $0.0681${\scriptsize$ \pm 0.0044$} & $1.10${\scriptsize$ \pm 0.08$} & $0.112${\scriptsize$ \pm 0.009$} & 1/5 \\
 & CRC & $0.831${\scriptsize$ \pm 0.009$} & $0.68${\scriptsize$ \pm 0.02$} & $0.952${\scriptsize$ \pm 0.003$} & $0.0326${\scriptsize$ \pm 0.0021$} & $0.0681${\scriptsize$ \pm 0.0044$} & $1.09${\scriptsize$ \pm 0.07$} & $0.112${\scriptsize$ \pm 0.009$} & 1/5 \\
 & SelectiveNet & $0.822${\scriptsize$ \pm 0.020$} & $0.63${\scriptsize$ \pm 0.06$} & $0.949${\scriptsize$ \pm 0.008$} & $0.0321${\scriptsize$ \pm 0.0038$} & $0.0541${\scriptsize$ \pm 0.0169$}& $1.07${\scriptsize$ \pm 0.13$} & $0.107${\scriptsize$ \pm 0.014$} & 2/5 \\
 & DeepGamblers & $0.820${\scriptsize$ \pm 0.009$} & $0.64${\scriptsize$ \pm 0.03$} & $0.949${\scriptsize$ \pm 0.003$} & $0.0330${\scriptsize$ \pm 0.0033$} & \textbf{0.0319}{\scriptsize$ \pm 0.0029$}& $1.10${\scriptsize$ \pm 0.11$} & $0.130${\scriptsize$ \pm 0.009$} & 1/5 \\
  & \textbf{ReliableNet (Ours)} & \textbf{0.853}{\scriptsize$ \pm 0.011$} & 0.52{\scriptsize$ \pm 0.06$} & \textbf{0.989}{\scriptsize$ \pm 0.003$} & \textbf{0.0061}{\scriptsize$ \pm 0.0021$} & 0.0530{\scriptsize$ \pm 0.0047$}& \textbf{0.20}{\scriptsize$ \pm 0.07$} & \textbf{0.024}{\scriptsize$ \pm 0.012$} & \textbf{5/5} \\
  
\midrule
UNSW-NB15 & ERM & $0.931${\scriptsize$ \pm 0.002$} & $1.00${\scriptsize$ \pm 0.00$} & $0.931${\scriptsize$ \pm 0.002$} & $0.0693${\scriptsize$ \pm 0.0020$} & $0.0141${\scriptsize$ \pm 0.0020$} & $1.39${\scriptsize$ \pm 0.04$} & $0.007${\scriptsize$ \pm 0.001$} & 0/5 \\
 & TempScaling & $0.931${\scriptsize$ \pm 0.002$} & $0.96${\scriptsize$ \pm 0.00$} & $0.948${\scriptsize$ \pm 0.002$} & $0.0497${\scriptsize$ \pm 0.0024$} & $0.0141${\scriptsize$ \pm 0.0020$} & $0.99${\scriptsize$ \pm 0.05$} & $0.007${\scriptsize$ \pm 0.002$} & 2/5 \\
 & ConfThresholding & $0.931${\scriptsize$ \pm 0.002$} & $0.96${\scriptsize$ \pm 0.00$} & $0.948${\scriptsize$ \pm 0.002$} & $0.0497${\scriptsize$ \pm 0.0024$} & $0.0141${\scriptsize$ \pm 0.0020$} & $0.99${\scriptsize$ \pm 0.05$} & $0.007${\scriptsize$ \pm 0.001$} & 2/5 \\
 & CRC & $0.931${\scriptsize$ \pm 0.002$} & $0.96${\scriptsize$ \pm 0.00$} & $0.948${\scriptsize$ \pm 0.002$} & $0.0495${\scriptsize$ \pm 0.0025$} & $0.0141${\scriptsize$ \pm 0.0020$} & $0.99${\scriptsize$ \pm 0.05$} & $0.007${\scriptsize$ \pm 0.001$} & 2/5 \\
 & SelectiveNet & $0.929${\scriptsize$ \pm 0.003$} & $0.94${\scriptsize$ \pm 0.01$} & $0.948${\scriptsize$ \pm 0.002$} & $0.0486${\scriptsize$ \pm 0.0017$} & $0.0134${\scriptsize$ \pm 0.0030$}& $0.97${\scriptsize$ \pm 0.03$} & $0.008${\scriptsize$ \pm 0.001$} & 3/5 \\
 & DeepGamblers & \textbf{0.939}{\scriptsize$ \pm 0.002$}& $0.96${\scriptsize$ \pm 0.01$} & $0.950${\scriptsize$ \pm 0.002$} & $0.0482${\scriptsize$ \pm 0.0024$} & \textbf{0.0133}{\scriptsize$ \pm 0.0012$}& $0.96${\scriptsize$ \pm 0.05$} & $0.042${\scriptsize$ \pm 0.002$} & 3/5 \\
  & \textbf{ReliableNet (Ours)} & 0.932{\scriptsize$ \pm 0.002$}& 0.85{\scriptsize$ \pm 0.02$} & \textbf{0.982}{\scriptsize$ \pm 0.005$} & \textbf{0.0154}{\scriptsize$ \pm 0.0045$} & 0.0139{\scriptsize$ \pm 0.0017$}& \textbf{0.31}{\scriptsize$ \pm 0.09$} & \textbf{0.006}{\scriptsize$ \pm 0.002$} & \textbf{5/5} \\
\bottomrule
\end{tabular}
}
\end{table}
 These in-distribution results show that ReliableNet is not merely a post-hoc thresholding rule. It directly trains the model toward a certificate-compatible JCW constraint. On Synthetic Band and German Credit, ReliableNet obtains the lowest JCW and the highest high-confidence accuracy. On Adult and Colored-MNIST, Temperature Scaling remains a strong calibration baseline, but ReliableNet remains competitive while providing a training-time reliability mechanism and consistent certification behavior.

\subsection{Stress Test Results}
\label{subsec_ood_stress_tests}
This part covers aggregated metrics and selective prediction result under various stress tests.
\subsubsection{Aggregate Reliability Under Fixed Shift}
\label{subsec_ood_aggre_tests}
We evaluate reliability under a single fixed distribution shift per dataset, chosen to represent a realistic deployment mismatch rather than a swept severity. Each dataset  defines a distinct shift type: subpopulation shift in Adult (train on male, test on female  observations) and German Credit (train on individuals aged $\ge 30$, test on those aged  $<30$); novel-class shift in UNSW-NB15 (train on Generic, Exploits, Fuzzers and DoS; test on the five held-out attack families); a fixed ambiguity gap in Synthetic Band  ($\gamma_{\text{train}}=0.5$, $\gamma_{\text{test}}=0.7$); spurious-correlation shift in  Colored~MNIST ($\rho_{\text{train}}=0.90$, $\rho_{\text{test}}=0.10$, fully inverting the  colour cue); and covariate shift in CIFAR-10 (grayscale conversion with additive Gaussian 
noise $\sigma=0.15$ and brightness factor $0.60$). Together, these span common shift  types encountered in deployment.

\medskip

Figure~\ref{fig_summary_metrics} compares ReliableNet with the benchmark methods across the six datasets. 
The main reliability metric is $JCW/\alpha$, where values below one satisfy the target confident-error budget. 
ReliableNet consistently reduces $JCW/\alpha$ relative to ERM and all  baselines, on all six datasets. 
On ColoredMNIST, the severe spurious-correlation shift remains challenging; ReliableNet still lowers the normalized confident-error rate compared with the benchmark methods.
 
\begin{figure}[H]
    \centering
    \begin{subfigure}[b]{0.48\textwidth}
        \centering
        \includegraphics[width=\textwidth]{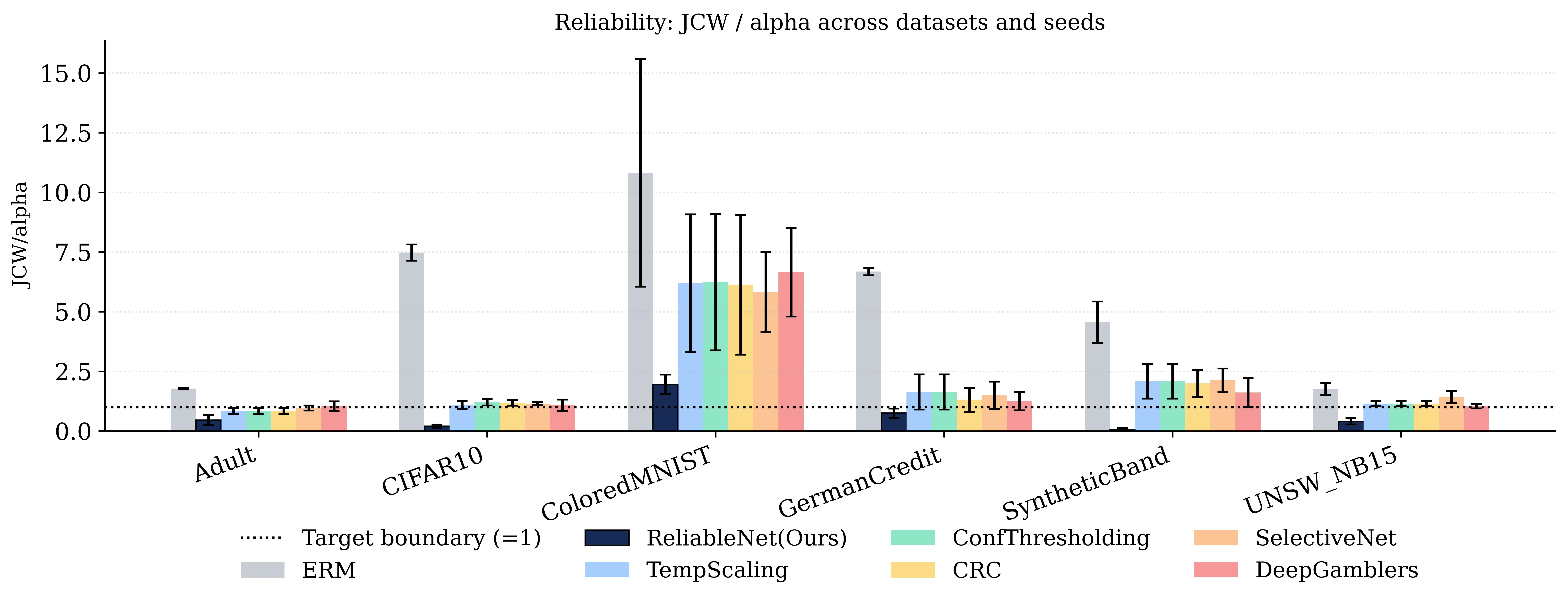}
        \caption{$JCW/\alpha$ across datasets and methods. The dotted line marks the target boundary $JCW/\alpha=1$.}
        \label{fig_summary_jcw_alpha}
    \end{subfigure}
    \hfill
    \begin{subfigure}[b]{0.48\textwidth}
        \centering
        \includegraphics[width=\textwidth]{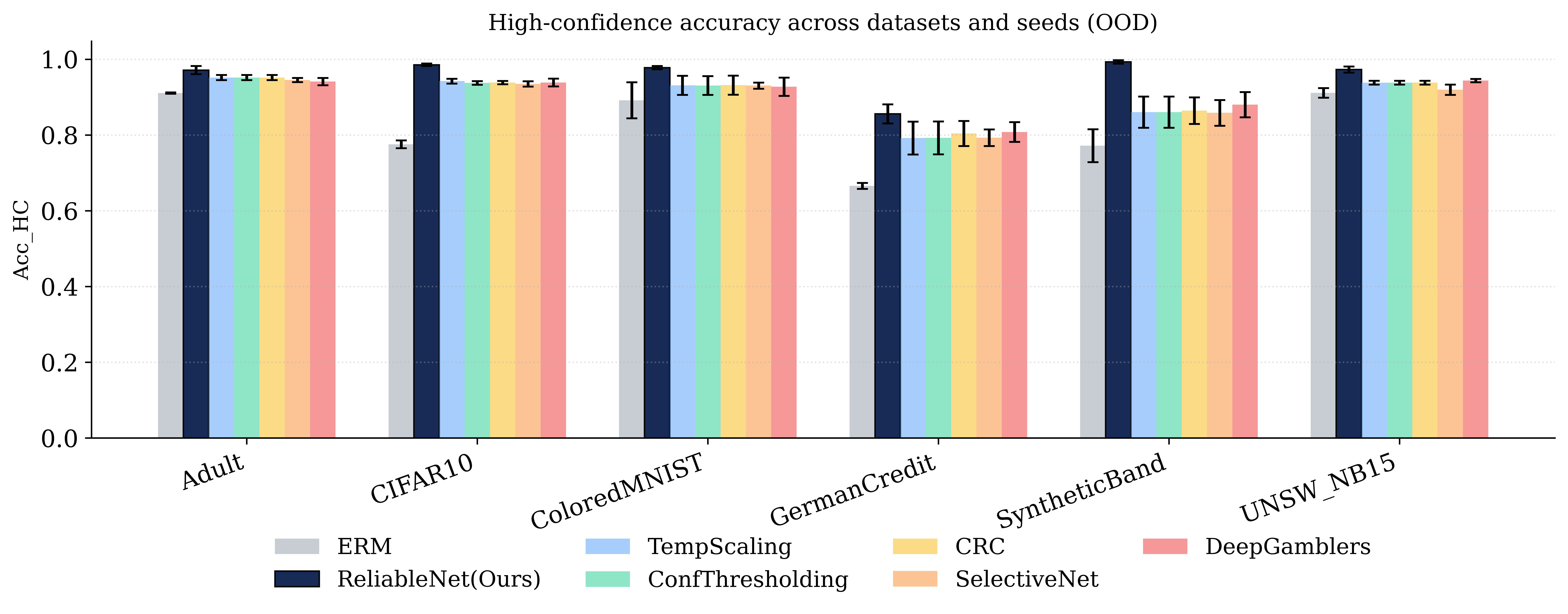}
        \caption{High-confidence accuracy across datasets and methods.}
        \label{fig_summary_acc_hc}
    \end{subfigure}

    \vspace{0.4cm}

    \begin{subfigure}[b]{0.48\textwidth}
        \centering
        \includegraphics[width=\textwidth]{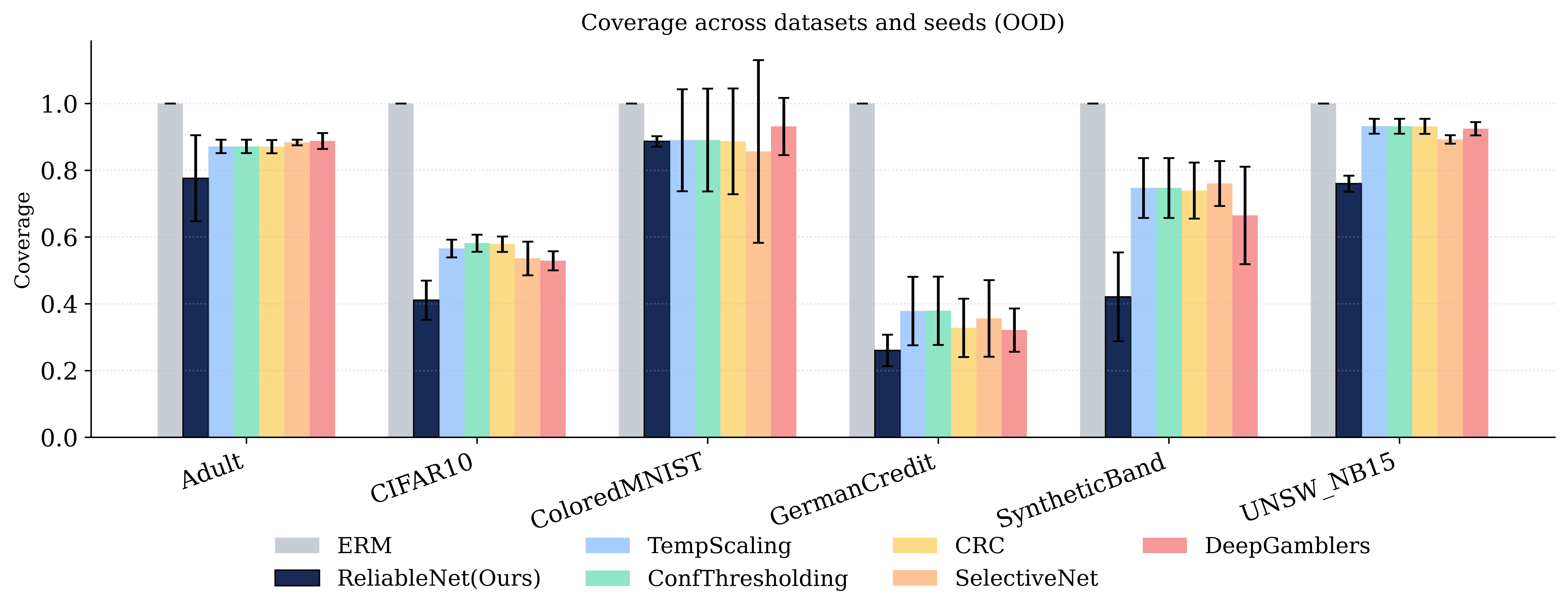}
        \caption{Coverage across datasets and methods.}
        \label{fig_summary_coverage}
    \end{subfigure}
    \hfill
    \begin{subfigure}[b]{0.48\textwidth}
        \centering
        \includegraphics[width=\textwidth]{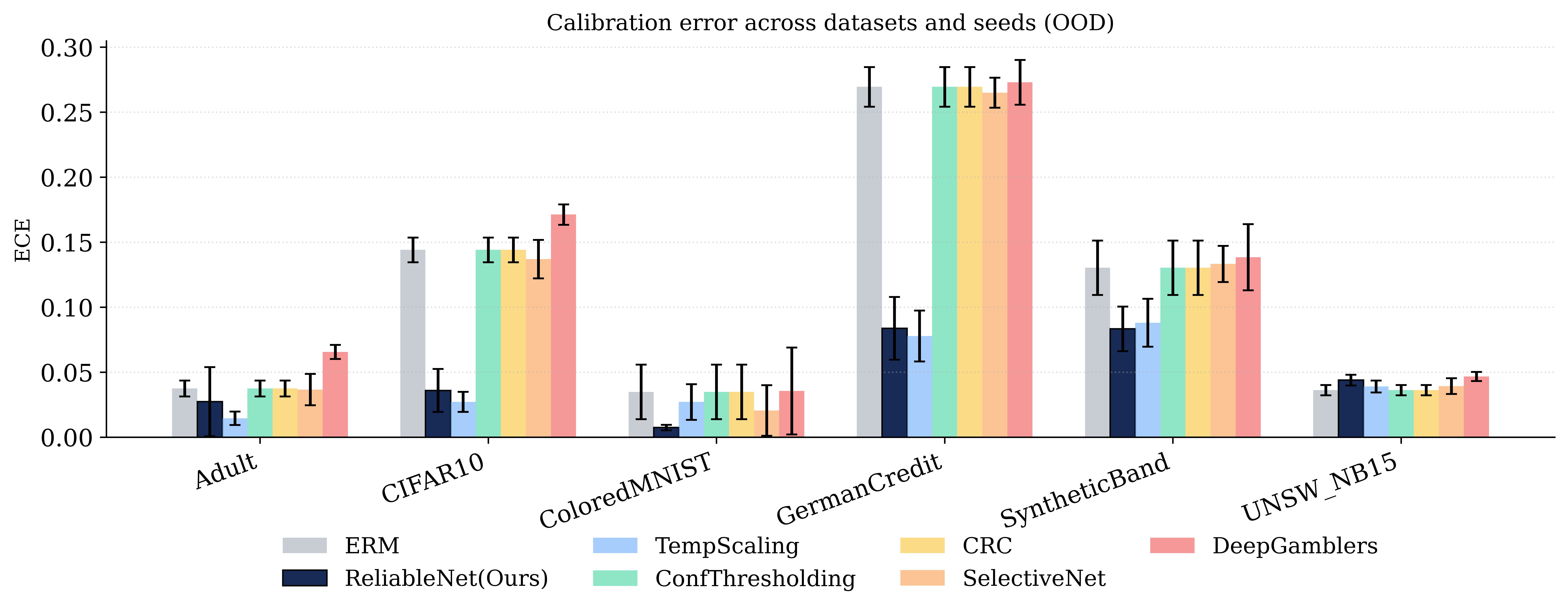}
        \caption{Expected calibration error (ECE) across datasets and methods.}
        \label{fig_summary_ece}
    \end{subfigure}
     \hfill
    \begin{subfigure}[b]{0.5\textwidth}
        \centering
        \includegraphics[width=\textwidth]{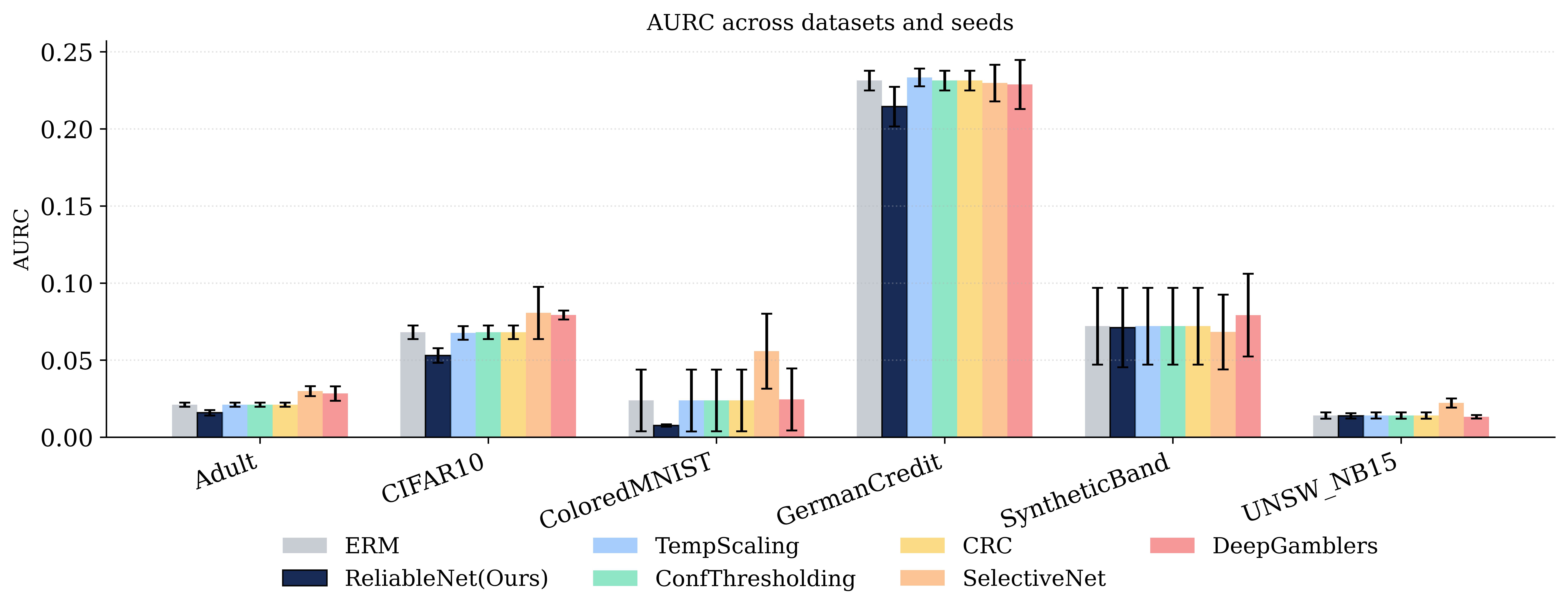}
        \caption{Area under the Risk-Coverage curve (AURC) across datasets and methods.}
        \label{fig_summary_aurd}
    \end{subfigure}

    \caption{
    Summary comparison over five random seeds on Adult, ColoredMNIST, German Credit, and SyntheticBand, CIFAR and UNSW-NB15. 
    Bars show seed averages and error bars show standard deviations.}
    \label{fig_summary_metrics}
\end{figure}
The high-confidence accuracy results show that ReliableNet usually achieves the strongest or near-strongest accuracy among accepted predictions. 
This suggests that the method improves the quality of accepted predictions rather than simply rejecting difficult cases. 
The coverage panel confirms the expected reliability-coverage trade-off: ReliableNet rejects more samples than ERM and other benchmark, but generally preserves meaningful coverage. 
Finally, the ECE results show that Temperature scaling and ReliableNet are  better calibrated than other models. The AURC results show improved ranking quality on four datasets and competitive performance on the remaining two. Overall, the summary metrics indicate that ReliableNet offers a favorable balance between reliability, high-confidence accuracy, coverage, selective prediction and calibration across heterogeneous shifts.

\subsubsection{Selective Prediction Quality: Risk-Coverage Curve During Stress test}
\label{subsec_AURC}
Motivated by Corollary~\ref{cor:selective}, we assess selective risk at matched coverage under distribution shift, where differences in confidence ranking become most informative and cannot be attributed to operating at a lower coverage. We report the area under the risk-coverage curve (AURC), where lower values indicate lower selective risk averaged over all coverage levels. Crucially, a risk-coverage curve evaluates every method over the same coverage grid $[0,1]$: at each coverage $c$ the curve reports the selective risk of the top-ranked fraction $c$ of inputs, so the comparison is coverage-matched by construction and the AURC integrates each method over identical support. Any advantage therefore reflects a genuinely better confidence ordering and cannot be attributed to a method simply operating at lower coverage. Figure~\ref{fig:rc_all} shows mean risk-coverage curves over five seeds under the
stress-test conditions of Section~\ref{subsec_ood_aggre_tests}.

\begin{figure}[htbp]
    \centering
    \begin{subfigure}[b]{0.48\textwidth}
        \includegraphics[width=\textwidth]{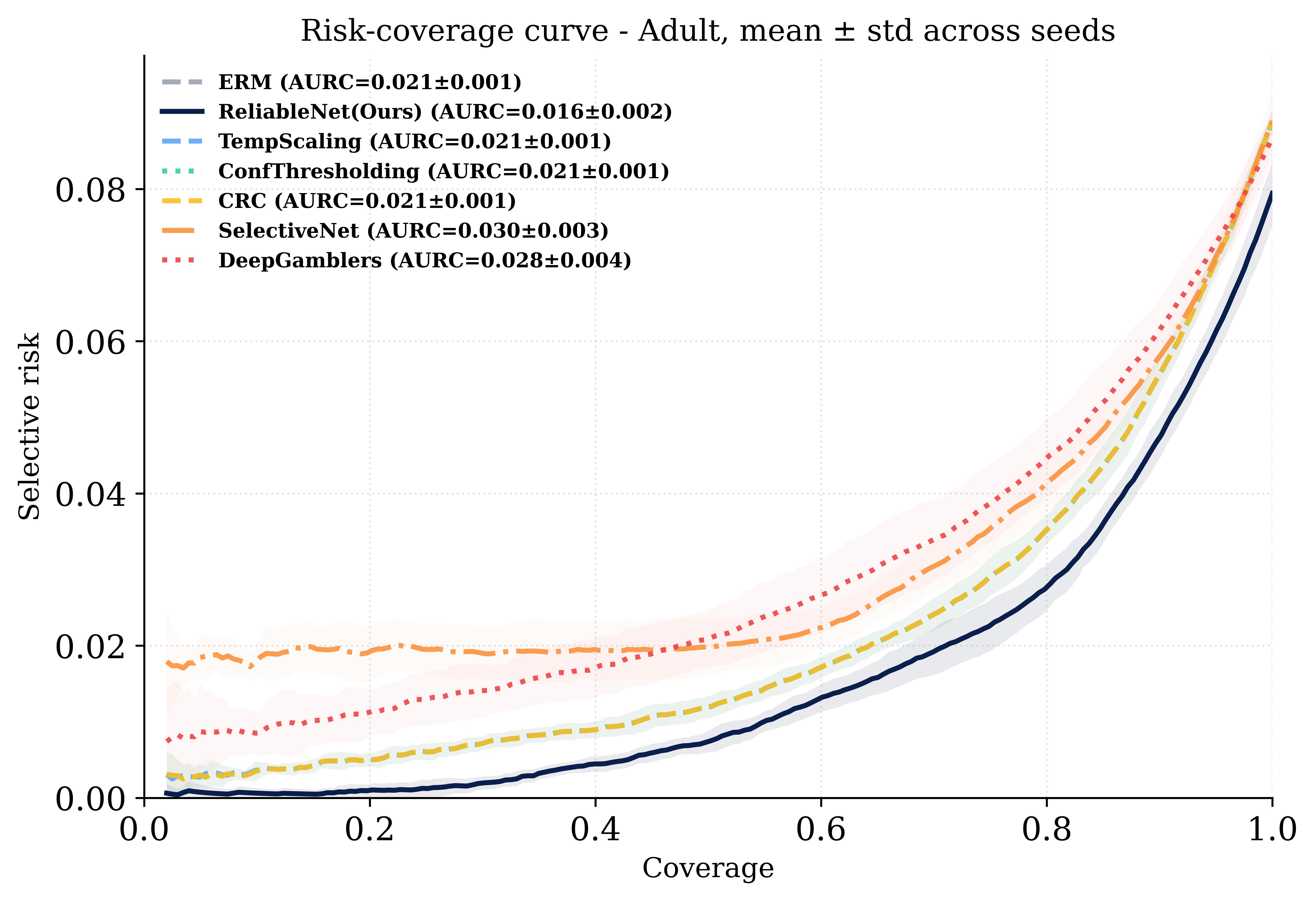}
        \caption{Adult. ReliableNet attains the lowest AURC ($0.016\pm0.002$).}
        \label{fig_rc_adult}
    \end{subfigure}\hfill
    \begin{subfigure}[b]{0.48\textwidth}
        \includegraphics[width=\textwidth]{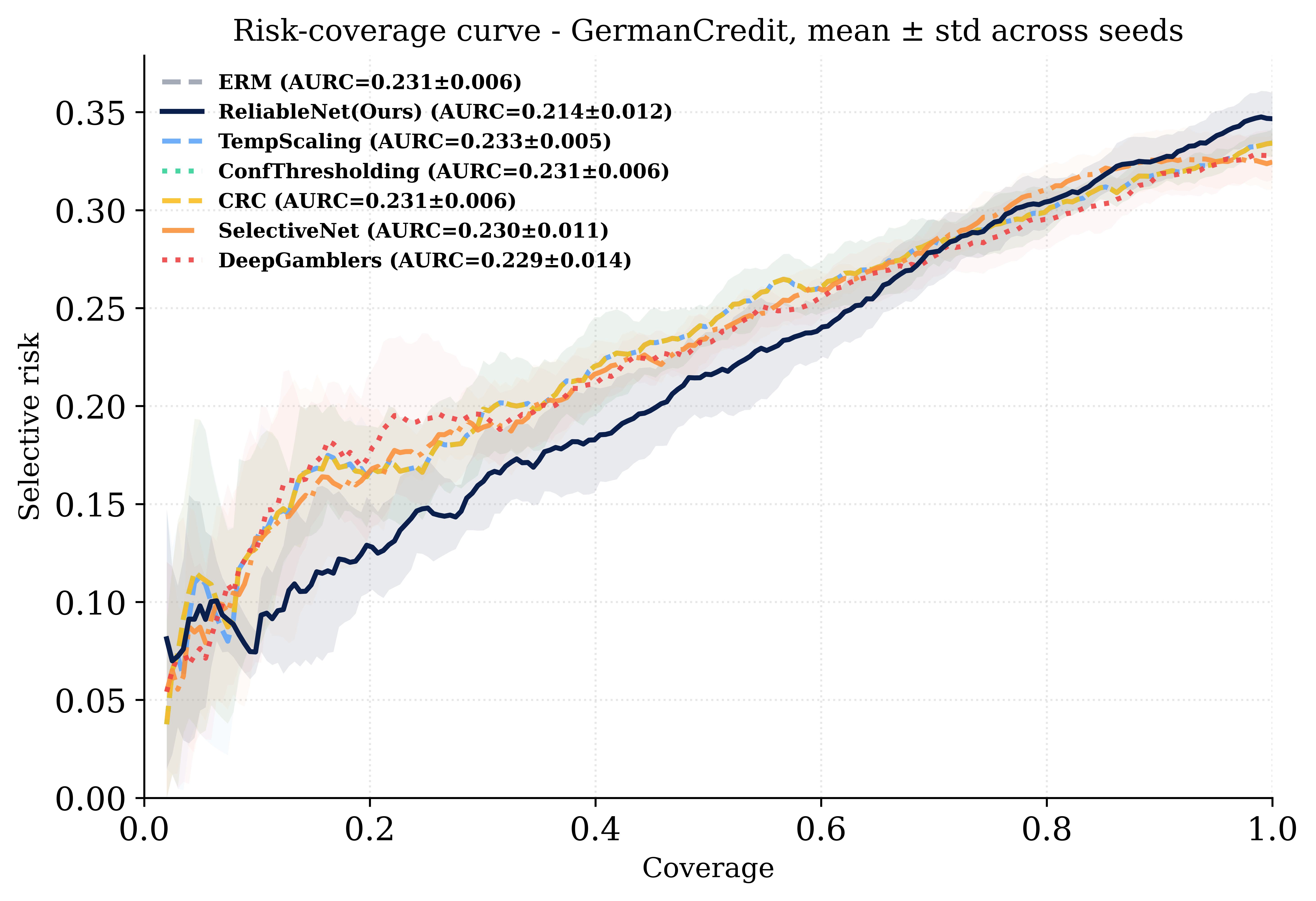}
        \caption{German Credit. ReliableNet attains the lowest AURC ($0.214\pm0.012$) under age-based shift.}
        \label{fig_rc_german}
    \end{subfigure}

    \vspace{0.6em}
    \begin{subfigure}[b]{0.48\textwidth}
        \includegraphics[width=\textwidth]{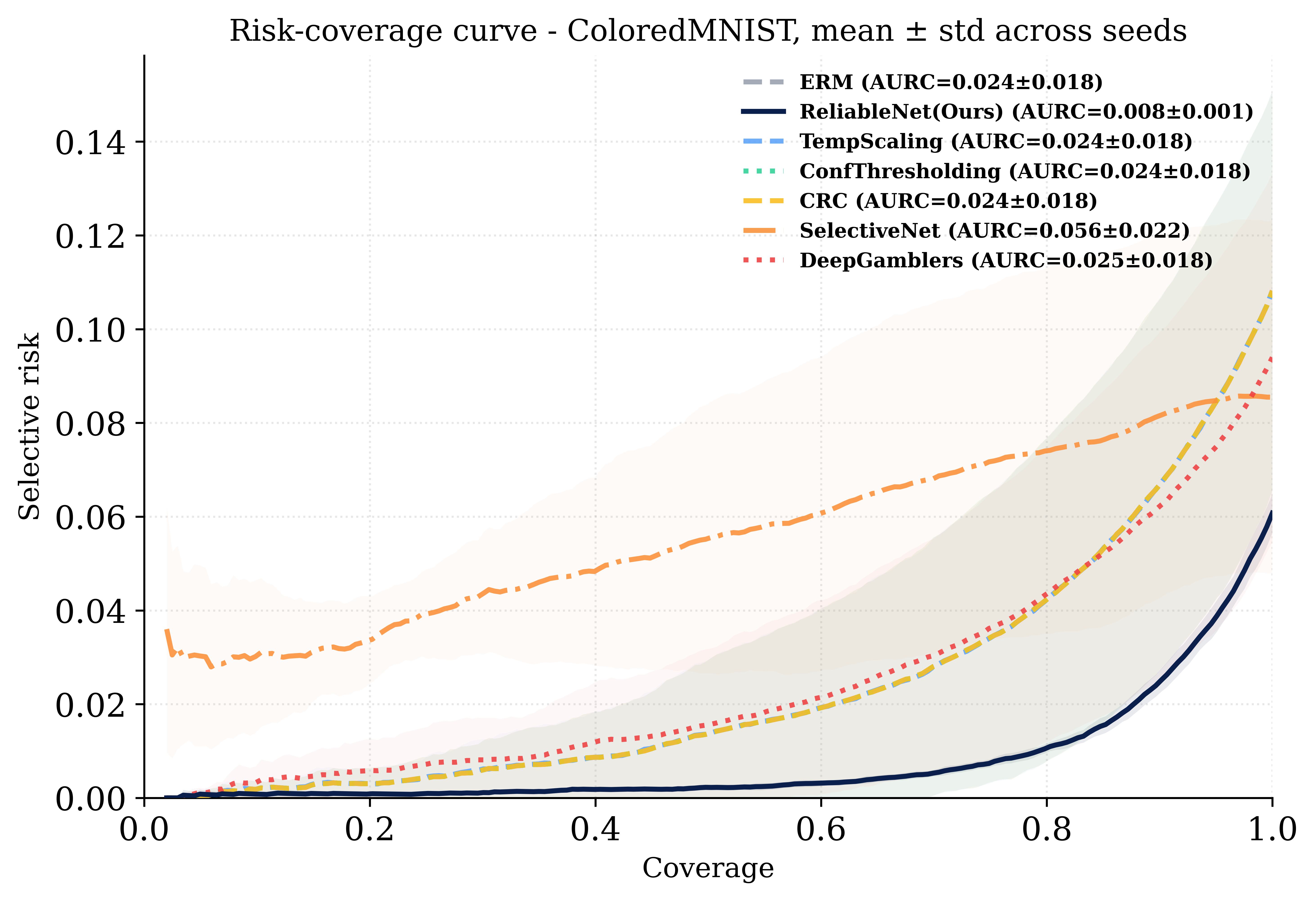}
        \caption{ColoredMNIST. ReliableNet attains the lowest and most stable AURC ($0.008\pm0.001$).}
        \label{fig_rc_mnist}
    \end{subfigure}\hfill
    \begin{subfigure}[b]{0.48\textwidth}
        \includegraphics[width=\textwidth]{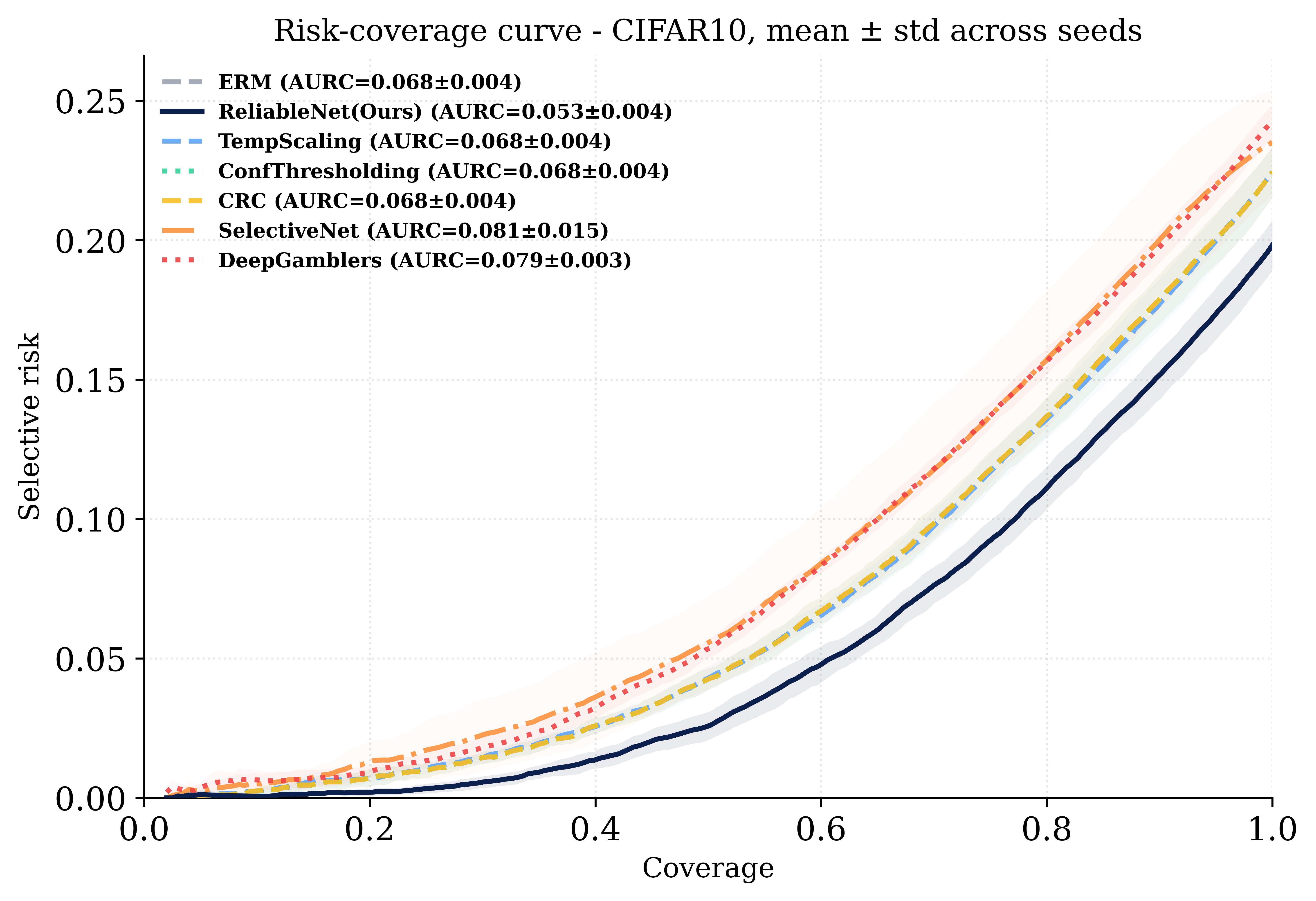}
        \caption{CIFAR-10. ReliableNet attains the lowest AURC ($0.053\pm0.004$).}
        \label{fig_rc_cifar}
    \end{subfigure}

    \vspace{0.6em}
    \begin{subfigure}[b]{0.48\textwidth}
        \includegraphics[width=\textwidth]{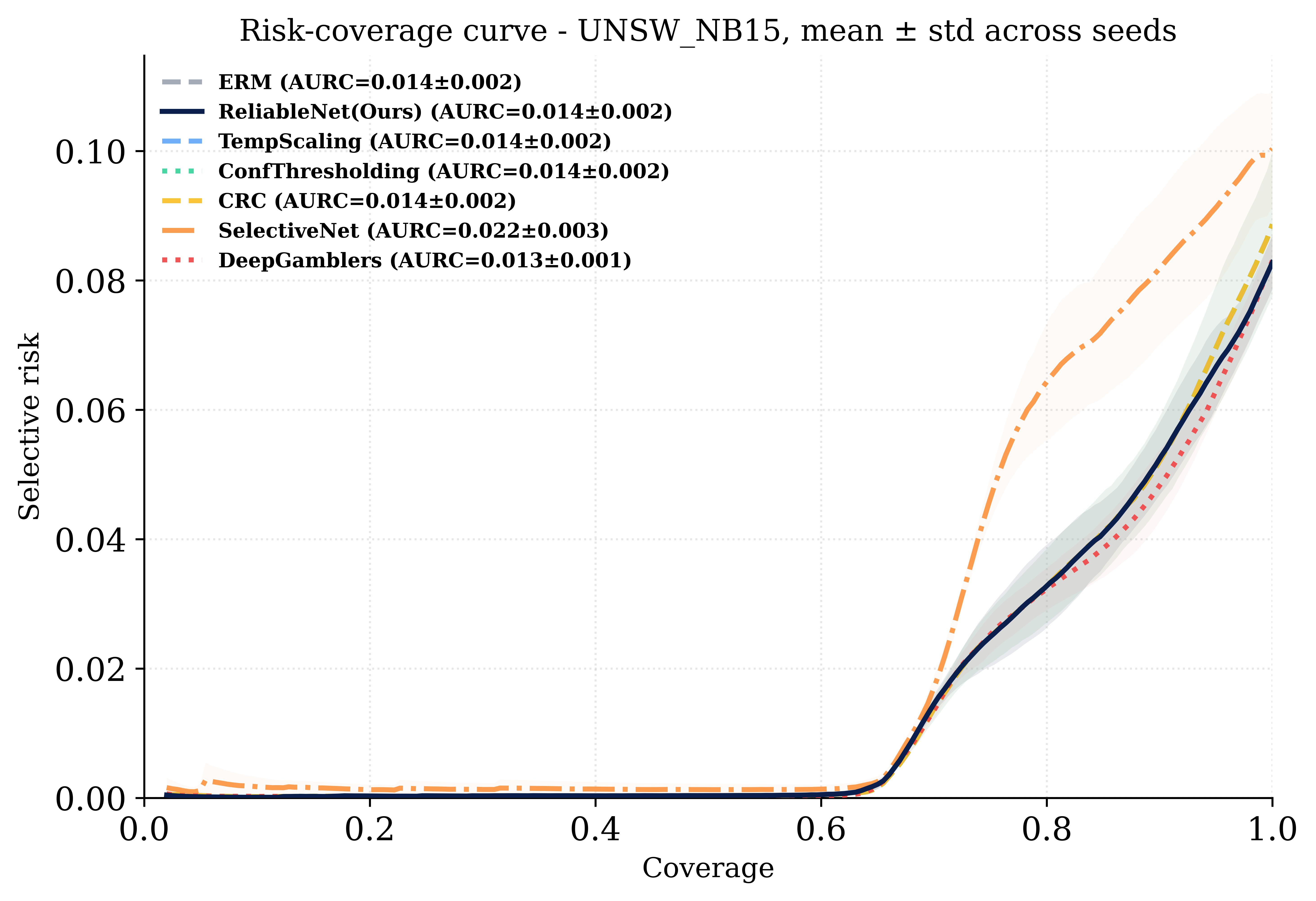}
        \caption{UNSW-NB15. ReliableNet ties the post-hoc baselines ($0.014\pm0.002$); DeepGamblers is marginally lower ($0.013\pm0.001$).}
        \label{fig_rc_unsw}
    \end{subfigure}\hfill
    \begin{subfigure}[b]{0.48\textwidth}
        \includegraphics[width=\textwidth]{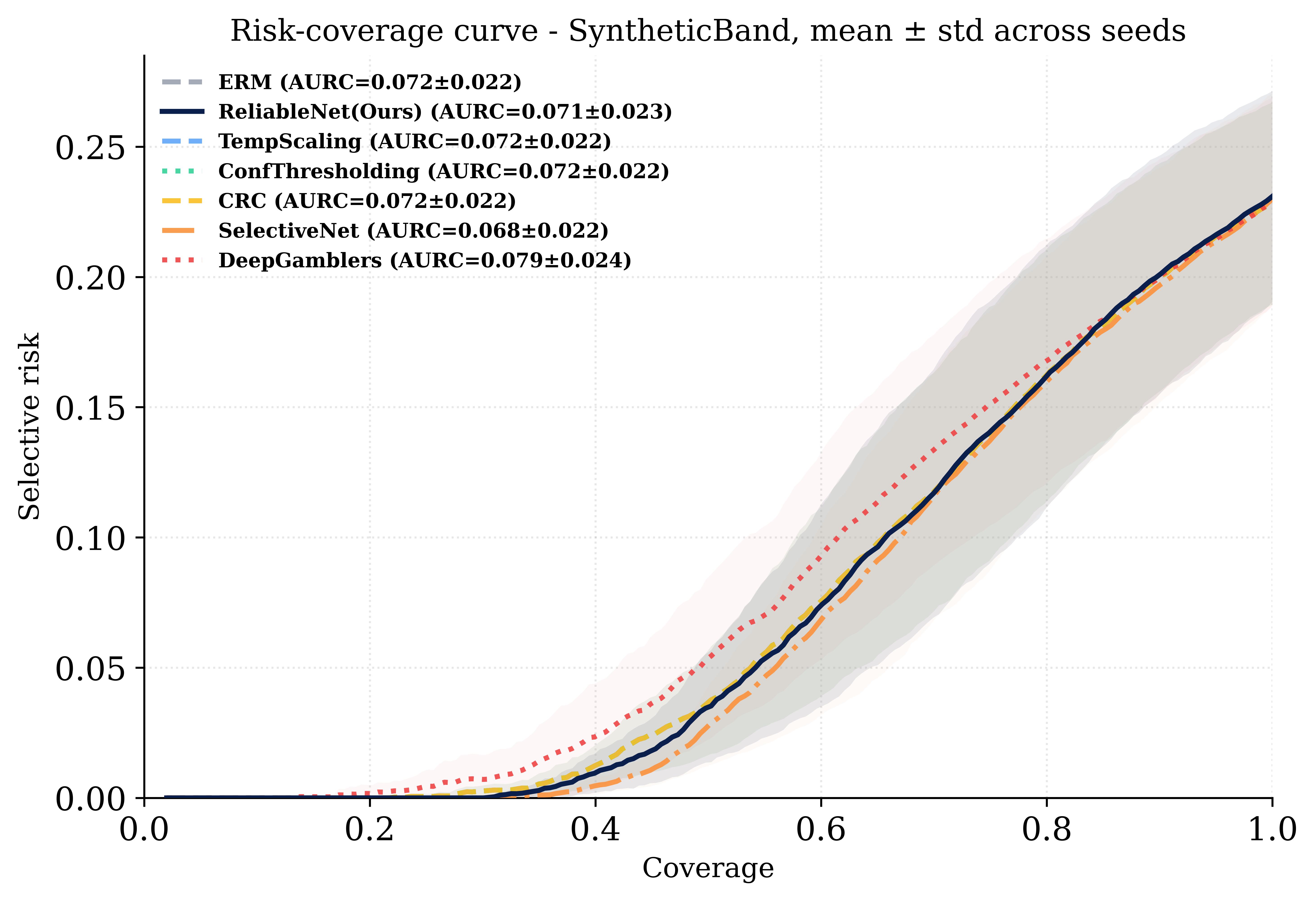}
        \caption{SyntheticBand. Curves overlap within std; ReliableNet ($0.071\pm0.023$) is competitive with the best (SelectiveNet, $0.068\pm0.022$).}
        \label{fig_rc_synthetic}
    \end{subfigure}

    \caption{Risk-coverage curves across four tabular and two image datasets, averaged over five random seeds under the stress-test conditions of Section~\ref{subsec_ood_aggre_tests}. Each curve sweeps coverage over $[0,1]$, so all methods are compared at identical coverage levels; lower curves and lower AURC therefore indicate better confidence ordering independent of any single operating point. ReliableNet improves selective ordering on Adult, ColoredMNIST, German Credit, and CIFAR-10, and remains competitive on SyntheticBand and UNSW-NB15.}
    \label{fig:rc_all}
\end{figure}
At matched coverage, ReliableNet attains the lowest selective risk over most of the coverage range and the lowest AURC on Adult, ColoredMNIST, German Credit, and CIFAR-10, despite never optimizing AURC directly. This indicates that the JCW constraint does more than lower confident-error rate at a single operating point: it reshapes the confidence scores so that correct predictions are ranked above incorrect ones across the whole coverage spectrum. On UNSW-NB15 and SyntheticBand the methods are statistically indistinguishable (curves overlap within one standard deviation), with ReliableNet essentially tied for best. The trained selective baselines, SelectiveNet and DeepGamblers, degrade most under
shift, since their abstention mechanisms are learned on the source distribution and do not transfer. Temperature scaling, confidence thresholding, and CRC track ERM almost exactly, because as post-hoc procedures they recalibrate or threshold the same fixed classifier rather than changing the learned representation.

\newpage

\subsection{Reliability Under Increasing Shift Severity}

Having established aggregate performance, we now examine each dataset's stress test  individually, reporting JCW and coverage as a function of shift severity. The two must be  read together: JCW alone can be lowered simply by abstaining more, so a low confident-error  rate is only meaningful alongside the coverage at which it is achieved. Jointly, the two metrics reveal how each method responds to distribution shift, through increased abstention, retained margin relative to the constraint, or neither, and whether its empirical JCW remains below the risk budget $\alpha$ as shift severity increases beyond the training distribution.

\paragraph{Adult data (Education Shift Analysis).}
Figure~\ref{fig2_} evaluates all methods across female subgroups stratified by education level, ordered from lowest to highest. Panel (a) shows that ReliableNet maintains $\mathrm{JCW} \le 0.05$ across most education levels, except  (Doctorate , Masters, Prof-school) for high-education subgroups where all baselines (except CRC) violate the constraint substantially  at Masters and Doctorate level. This violation pattern is not random: higher education subgroups are better represented in the high-confidence region, making confident errors more likely. Panel (b) shows the cost: ReliableNet's coverage drops sharply for mid-education groups (HS-grad through Bachelors), reflecting increased abstention where the model is uncertain. This suggests that abstention is not uniform but targeted and principled where uncertainty resides the most.
\begin{figure}[H]
    \centering
    \begin{subfigure}[b]{0.4\textwidth}
        \centering
         % \centering
    \includegraphics[width=1.2\linewidth]{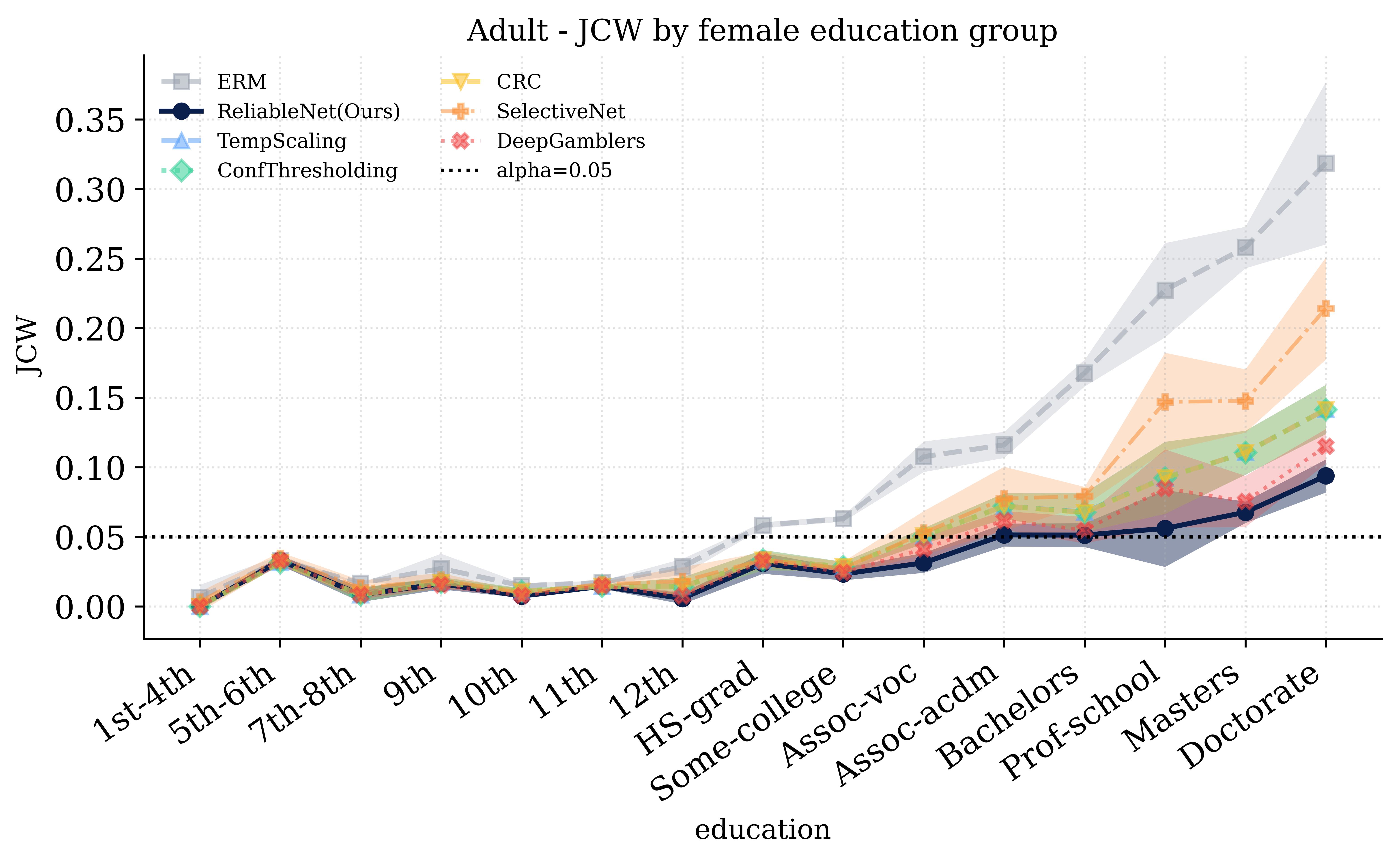}
    \caption{JCW across female subgroups stratified by
    education level.   }
    \end{subfigure}
    \hfill
    \begin{subfigure}[b]{0.55\textwidth}
    \centering
    \includegraphics[width=0.95\linewidth]{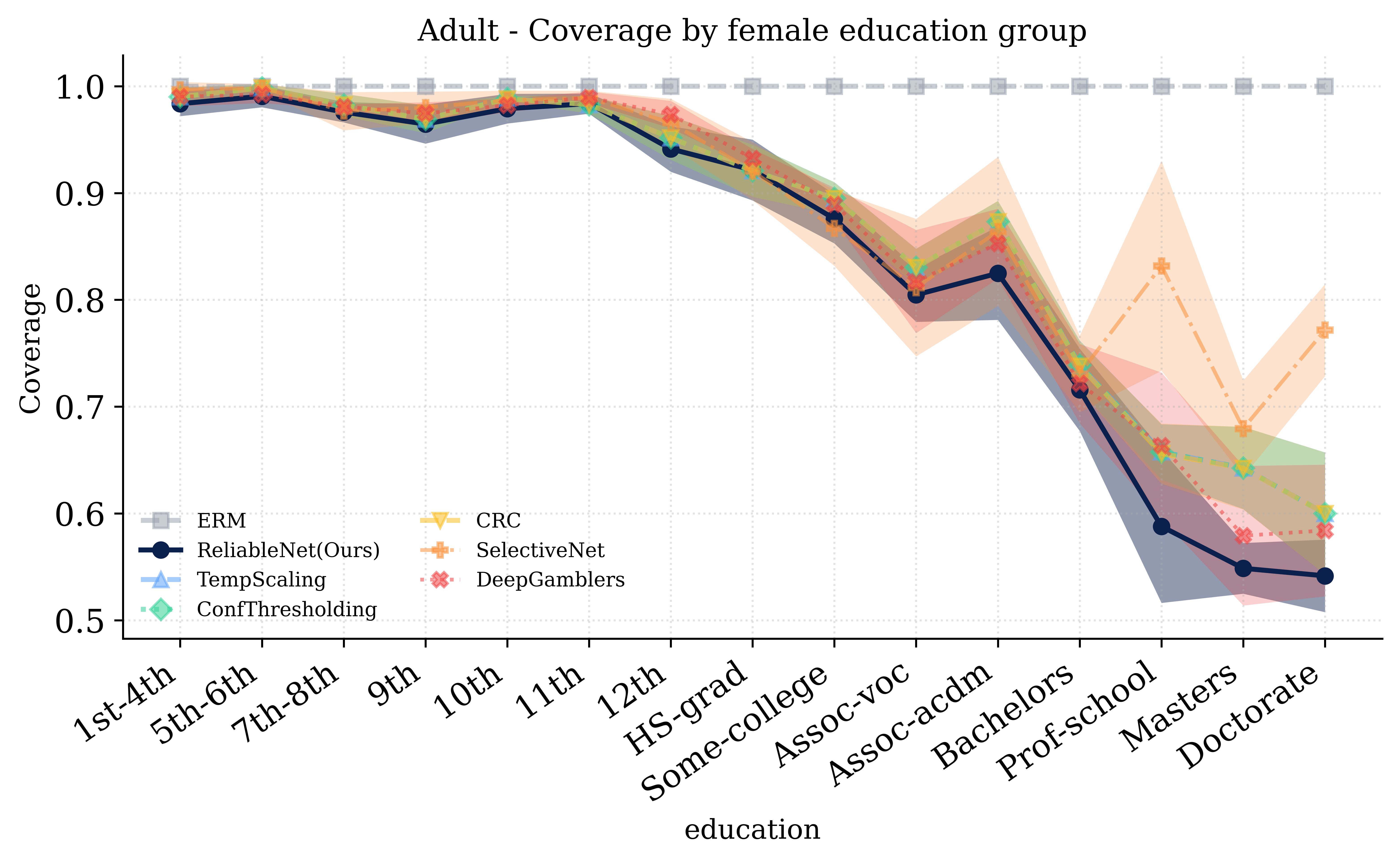}
    \caption{ Coverage across female subgroups stratified by
    education level. }
    \end{subfigure}
    \caption{JCW and Coverage across female subgroups stratified by
    education level}
    \label{fig2_}
\end{figure}
\paragraph{Synthetic Data}
We train and evaluate in $d=10$ with $n_{\mathrm{train}}=5000$, $n_{\mathrm{test}}=3000$ (all examples are $10$-dimensional), sweeping $\gamma\in\{0.1,\dots,1.3\}$ For the geometric visualization (Fig.~\ref{fig7}) we instantiate the identical generator and
training pipeline in $d=2$ with the \emph{same} $s$ and $\gamma$ and its own unit vector $w\in\mathbb{R}^2$ ($n=3000$). The ambiguous band was shaded as the strip between the parallel lines $\langle w,x\rangle=\pm\gamma$, and mark confident errors ($\hat y\neq\tilde y$, confidence $\ge\varepsilon$). \noindent Figures~\ref{fig6} and~\ref{fig7} jointly explain this result. 
At the same threshold $\varepsilon^* = 0.66$, ERM produces 19  confident wrong predictions concentrated inside the ambiguous  band while ReliableNet produces zero, having learned to suppress  confidence near the boundary. This is confirmed  in  Figure~\ref{fig6}: ERM saturates at near-perfect confidence at  distance $\approx 0.7$ from the boundary, while ReliableNet's confidence  rises monotonically with distance as we get away from the ambiguous band. This shows that ReliableNet learns to suppress confidence near the ambiguous region.

\begin{figure}[H]
    \centering
    \begin{subfigure}[b]{0.4\textwidth}
        \centering
        \includegraphics[width=\linewidth]{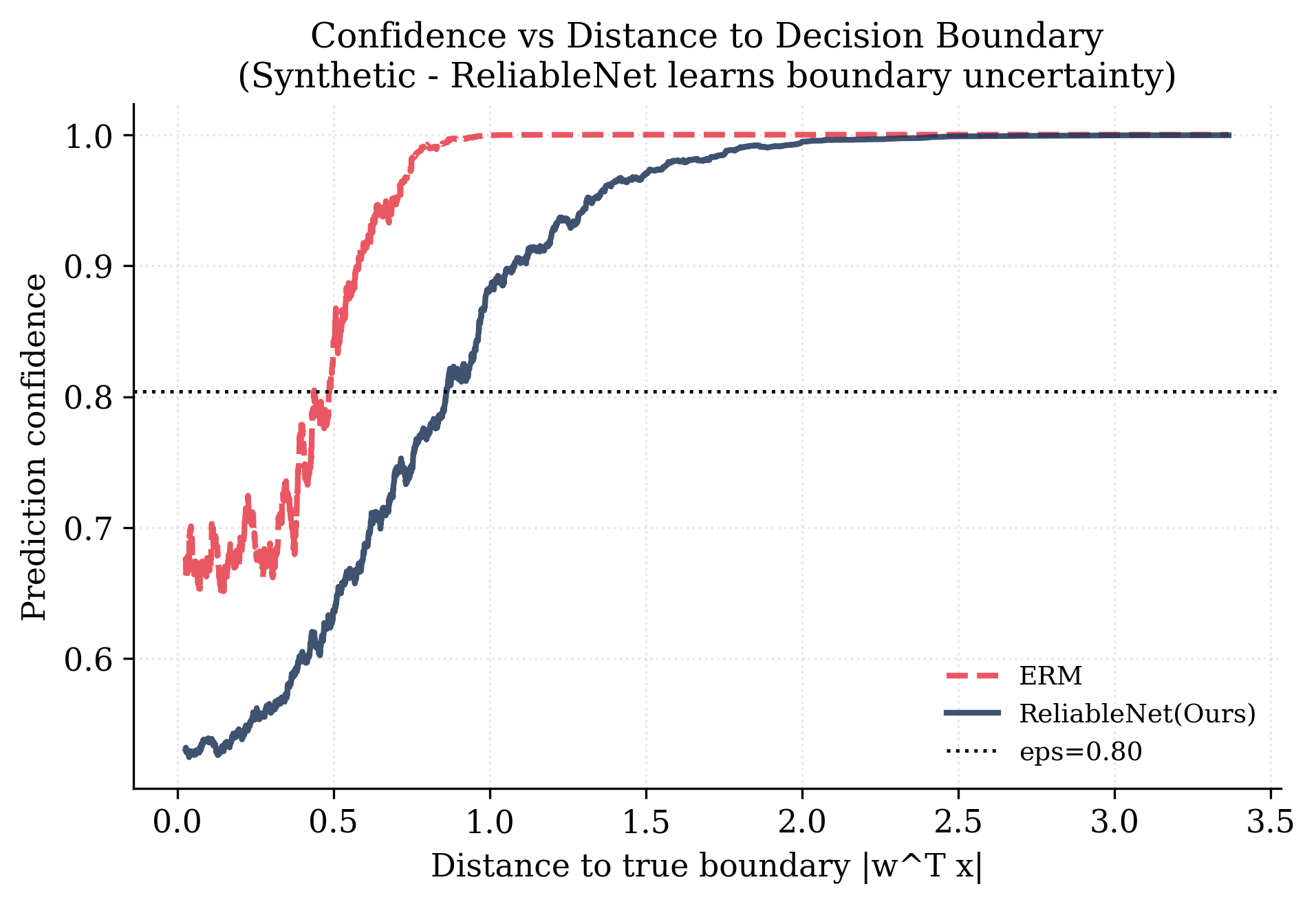}
        \caption{Prediction confidence as a function of distance 
        to the true decision boundary $|w^\top x|$. ERM remains 
        overconfident near the boundary where label noise is 
        irreducible. ReliableNet learns to be uncertain there and 
        recovers high confidence only as distance increases.}
        \label{fig6}
    \end{subfigure}
    \hfill
    \begin{subfigure}[b]{0.55\textwidth}
    \centering
    \includegraphics[width=1.2\linewidth]{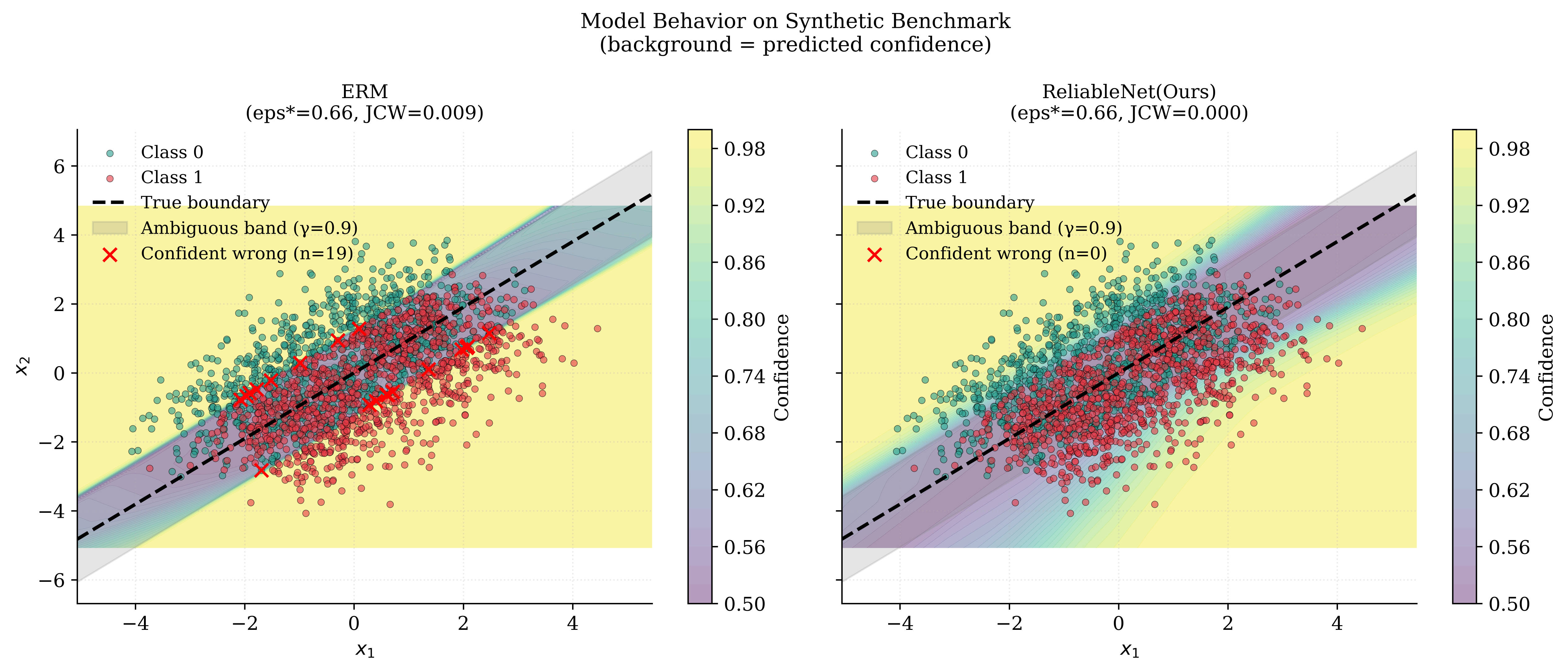}
    \caption{\textbf{ERM vs ReliableNet on the synthetic data on a 2D setting 
    ($\gamma=0.9$, $\varepsilon^*=0.66$).} Both models are evaluated at the same threshold. ERM produces 19 confident wrong predictions 
    (red crosses) concentrated inside the ambiguous band (grey), where   misclassification is irreducible. ReliableNet eliminates all  confident errors by suppressing confidence near the boundary,  visible as the wider low-confidence region (purple) in the right 
    panel.}
    \label{fig7}
    \end{subfigure}
    \caption{Geometric confidence calibration (
    $n_{\mathrm{test}}=3{,}000$, $\alpha=0.05$).}
    \label{fig_synthetic_reliability}
\end{figure}

\begin{figure}[H]
    \centering
    \begin{subfigure}[b]{0.4\textwidth}
        \centering
         % \centering
    \includegraphics[width=1.2\linewidth]{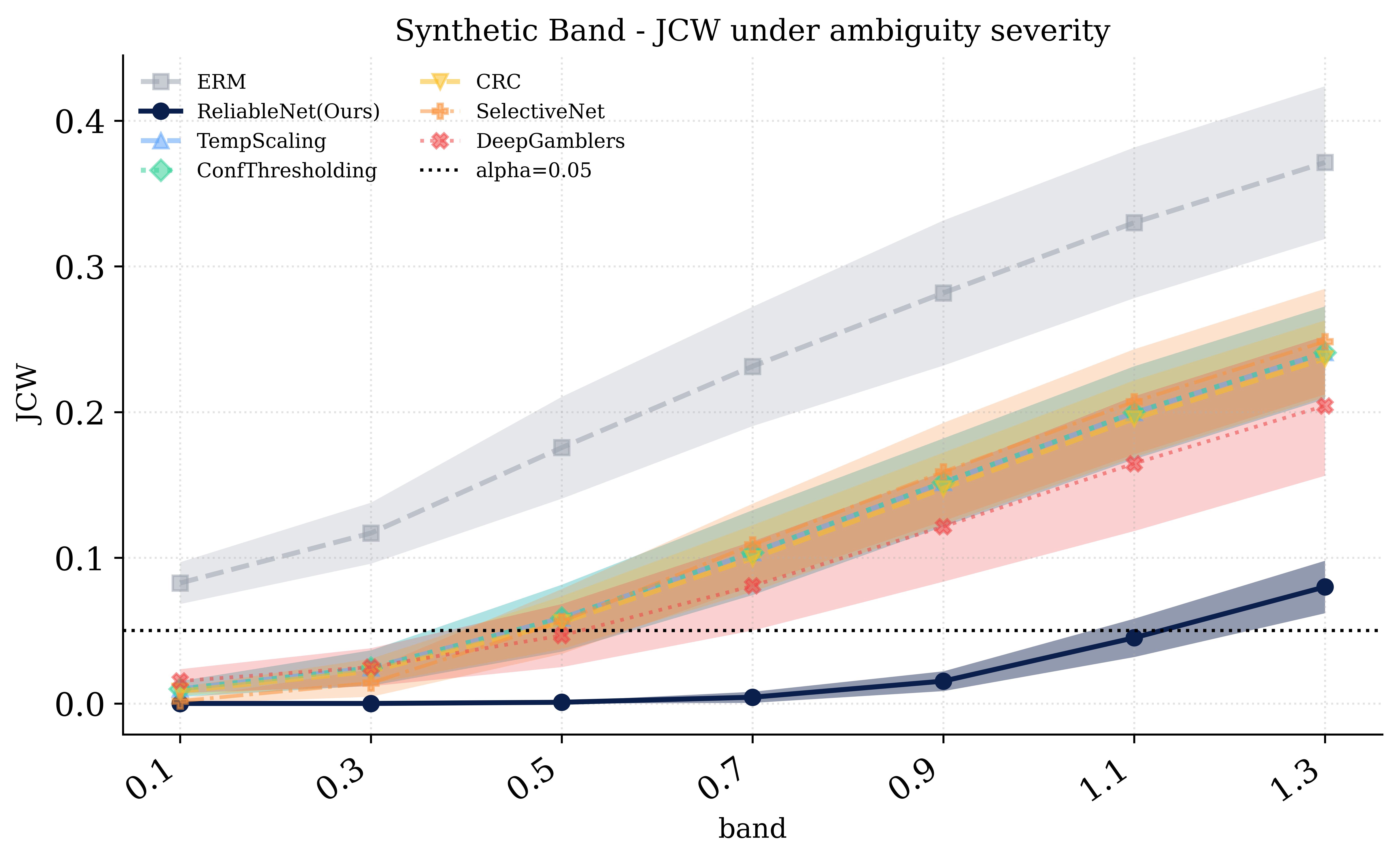}
    \caption{JCW vs ambiguity band width $\gamma \in 
        \{0.1,0.3,0.5,0.7,0.9,1.1,  1.3\}$ for $\alpha = 0.05$. All methods 
        except ReliableNet violate the threshold from 
        $\gamma = 0.5$ onward. ReliableNet maintains the lowest 
        JCW across all severity levels.}
    \end{subfigure}
    \hfill
    \begin{subfigure}[b]{0.55\textwidth}
    \centering
    \includegraphics[width=0.95\linewidth]{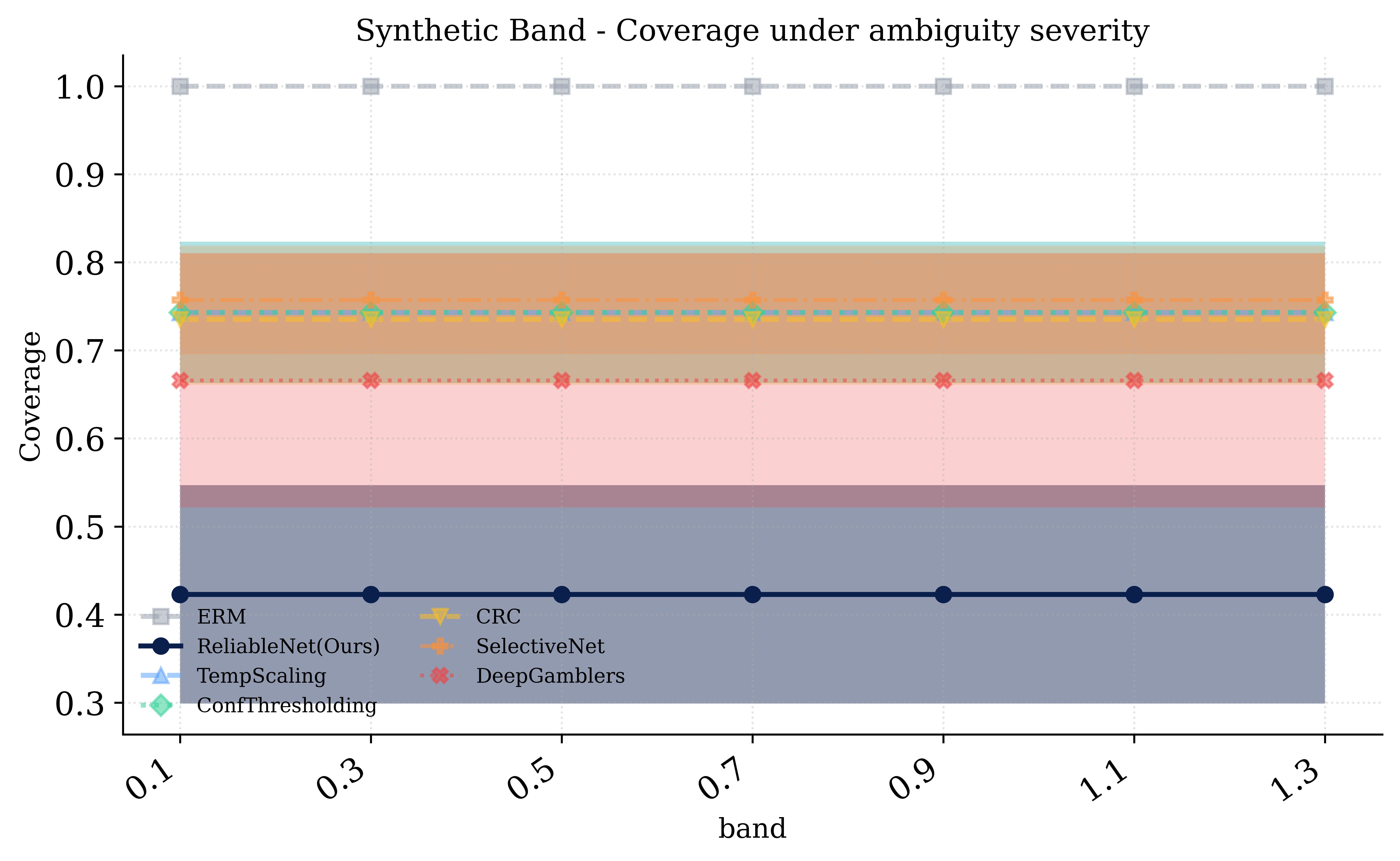}
    \caption{ Coverage vs ambiguity band width $\gamma \in \{0.1,0.3,0.5,0.7,0.9,1.1,  1.3\}$ for $\alpha = 0.05$. }
    \end{subfigure}
    \caption{Synthetic Band ambiguity sweep}
    \label{fig2_synth}
\end{figure}
% \paragraph{Ambiguity severity (Synthetic Band).}
Figures~\ref{fig2_synth} shows that coverage is invariant to the band parameter for all methods.  This is because the sweep perturbs $p(y\mid x)$ while leaving $p(x)$ unchanged: no confidence-thresholding rule can  adapt by abstaining more, so robustness here is determined entirely by the margin a method  holds against the constraint. The post-hoc baselines, which select the most permissive  threshold satisfying $\mathrm{JCW}\le\alpha$ on validation data, sit \emph{on} the  constraint boundary at the training band and consequently violate it under any increase in  severity, reaching $4$-$5\alpha$ at band $1.3$. ReliableNet attains  $\mathrm{JCW}\approx 0.001$ at the same band,   about $2\%$ of the budget,   and remains below the risk budget up to band $\approx 1.1$, roughly twice the severity range tolerated by any  baseline, while degrading gracefully rather than linearly. Constrained training drives the  solution into the interior of the feasible set, and this slack,   obtained at a fixed  coverage cost incurred once at training time,   is what converts into robustness under  shift.

\paragraph{German Age Shift Analysis.}

Figure~\ref{fig_german} evaluates the young-age subpopulations (20-25, 25-30), under-represented at training time. ERM is severely unreliable here (JCW $0.32$-$0.37$), and none of the post-hoc or selective baselines holds the target either: temperature scaling, confidence thresholding, CRC, SelectiveNet, and Deep Gamblers all sit at JCW $0.06$-$0.09$, above $\alpha = 0.05$. ReliableNet is the only method that keeps confident-wrong risk below $\alpha$ on both groups, attaining JCW $\approx 0.03$ at approximately $25\%$ coverage. Confidence Thresholding and CRC retain slightly higher coverage, approximately $30$-$40\%$, but their JCW remains above the target. Thus, ReliableNet achieves the most favorable reliability-coverage trade-off in these subgroups, although part of its lower JCW is associated with its more conservative acceptance rate. Enforcing the constraint during training, rather than selecting a threshold post-hoc, is what attains the target level under shift.
\begin{figure}[H]
    \centering
    \begin{subfigure}[b]{0.48\textwidth}
        \centering
        \includegraphics[width=\linewidth]{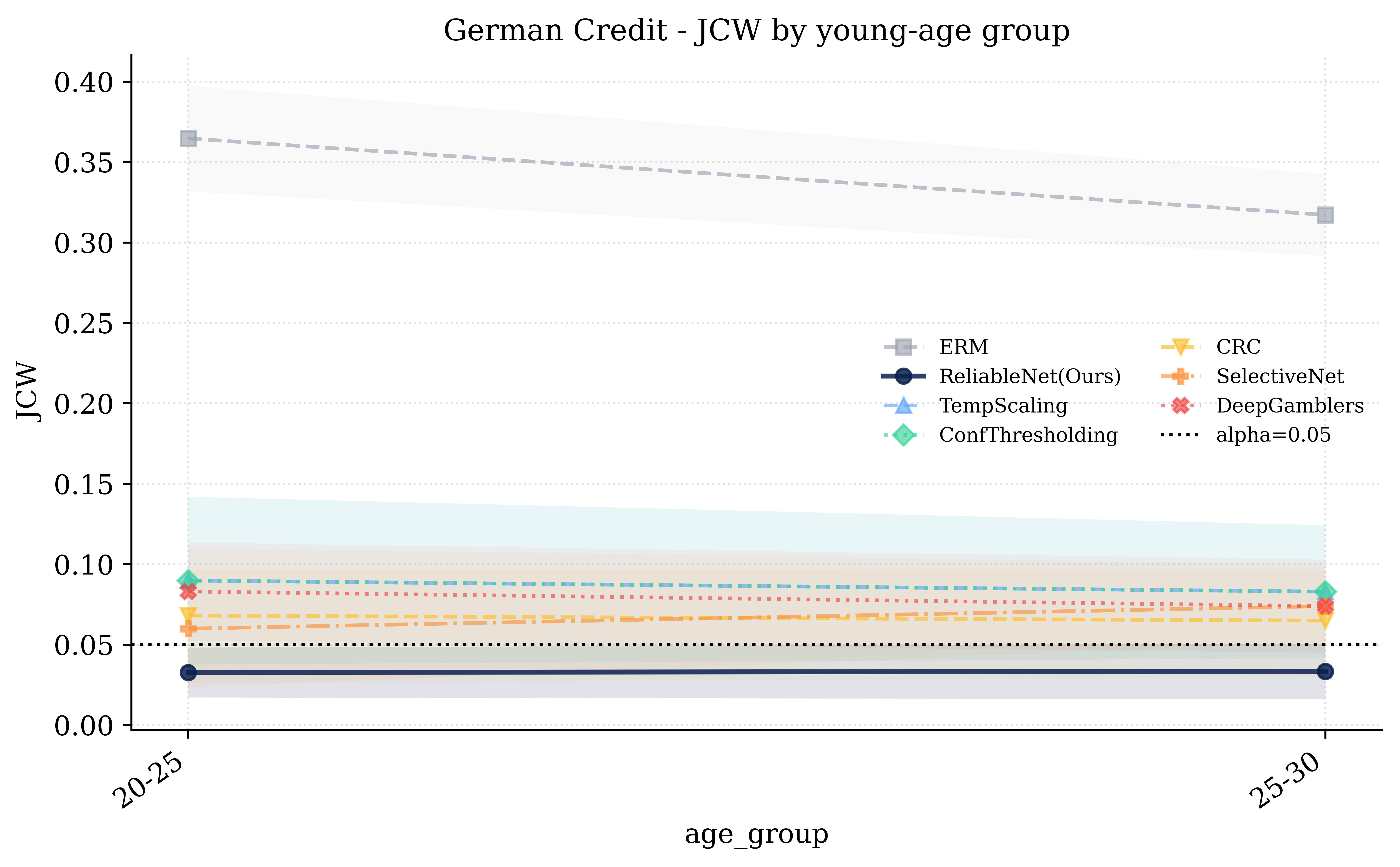}
        \caption{JCW calibration error ($\alpha = 0.05$).}
    \end{subfigure}
    \hfill
    \begin{subfigure}[b]{0.48\textwidth}
        \centering
        \includegraphics[width=\linewidth]{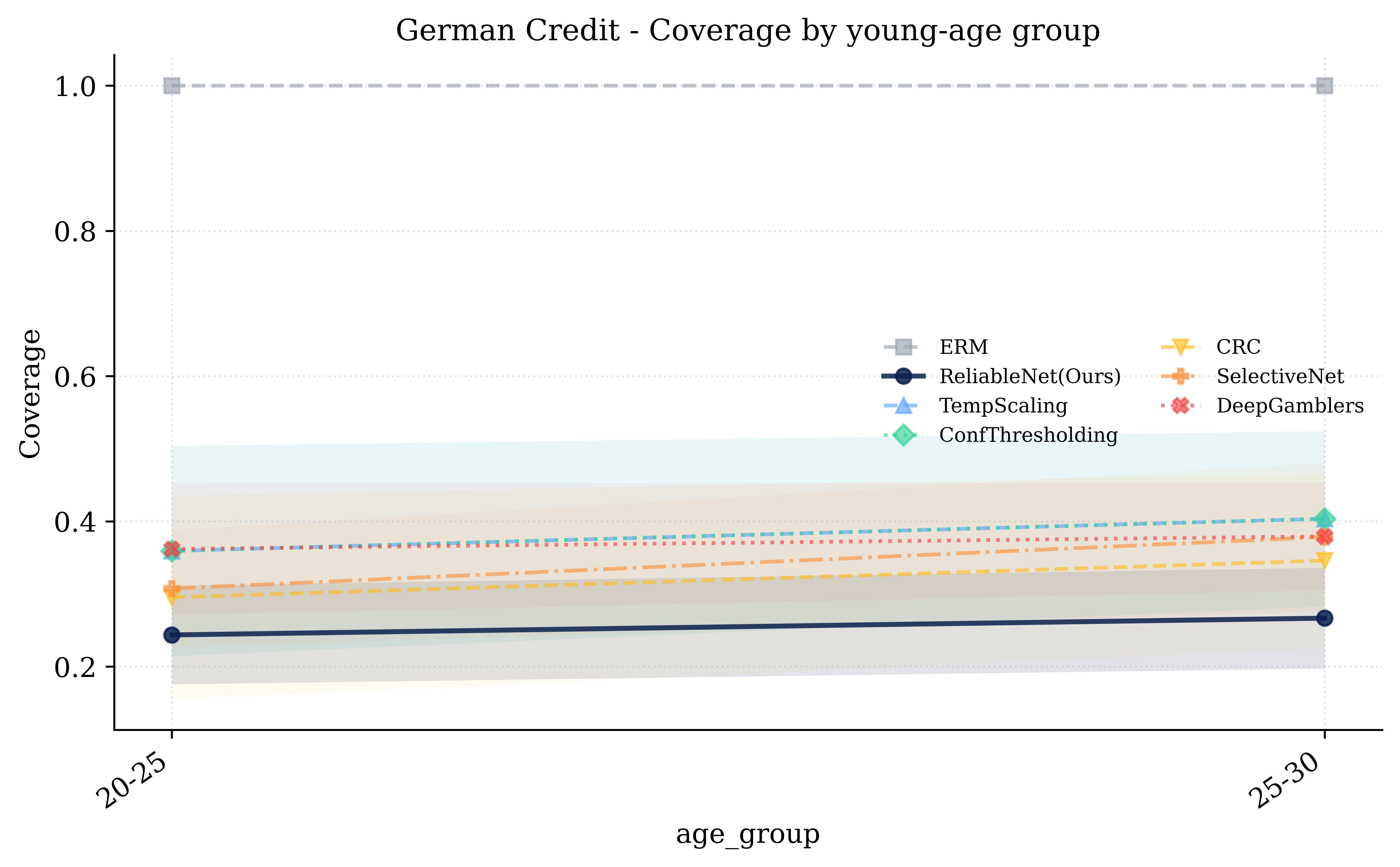}
        \caption{Empirical coverage.}
    \end{subfigure}
    \caption{Evaluation under age subpopulation shifts on German Credit (train: age $\geq 30$).}
    \label{fig_german}
\end{figure}

% Panel
% (c) confirms that this abstention is well-targeted, 
% $\mathrm{Acc_{HC}}$ remains near $1.0$ for ReliableNet across
% all subgroups while baselines degrade significantly at higher
% education levels.

\paragraph{Colored MNIST.}
Figure~\ref{fig_mnist_ind} reports JCW and coverage as the test colour, label correlation $\rho_{\text{test}}$ decreases from $0.9$ to
$0.1$, progressively reversing the spurious colour cue learned during training (lower $\rho_{\text{test}}$ indicates stronger shift). Coverage
declines gradually for the selective methods (ReliableNet: $0.88\rightarrow0.67$), while JCW increases as the colour cue becomes increasingly misleading. Across the full severity range, ReliableNet maintains the lowest JCW, reaching approximately $0.17$ under full inversion, compared with $0.24$-$0.26$ for the strongest baselines and $0.37$ for ERM. Since ReliableNet does not achieve this reduction at higher coverage, the improvement is not explained by accepting more inputs and is consistent with more reliable confidence ordering under spurious-feature shift. At full inversion, no method remains empirically below the JCW budget, although ReliableNet exhibits the slowest
degradation.

\begin{figure}[H]
    \centering
    \begin{subfigure}[b]{0.48\textwidth}
        \centering
        \includegraphics[width=\linewidth]{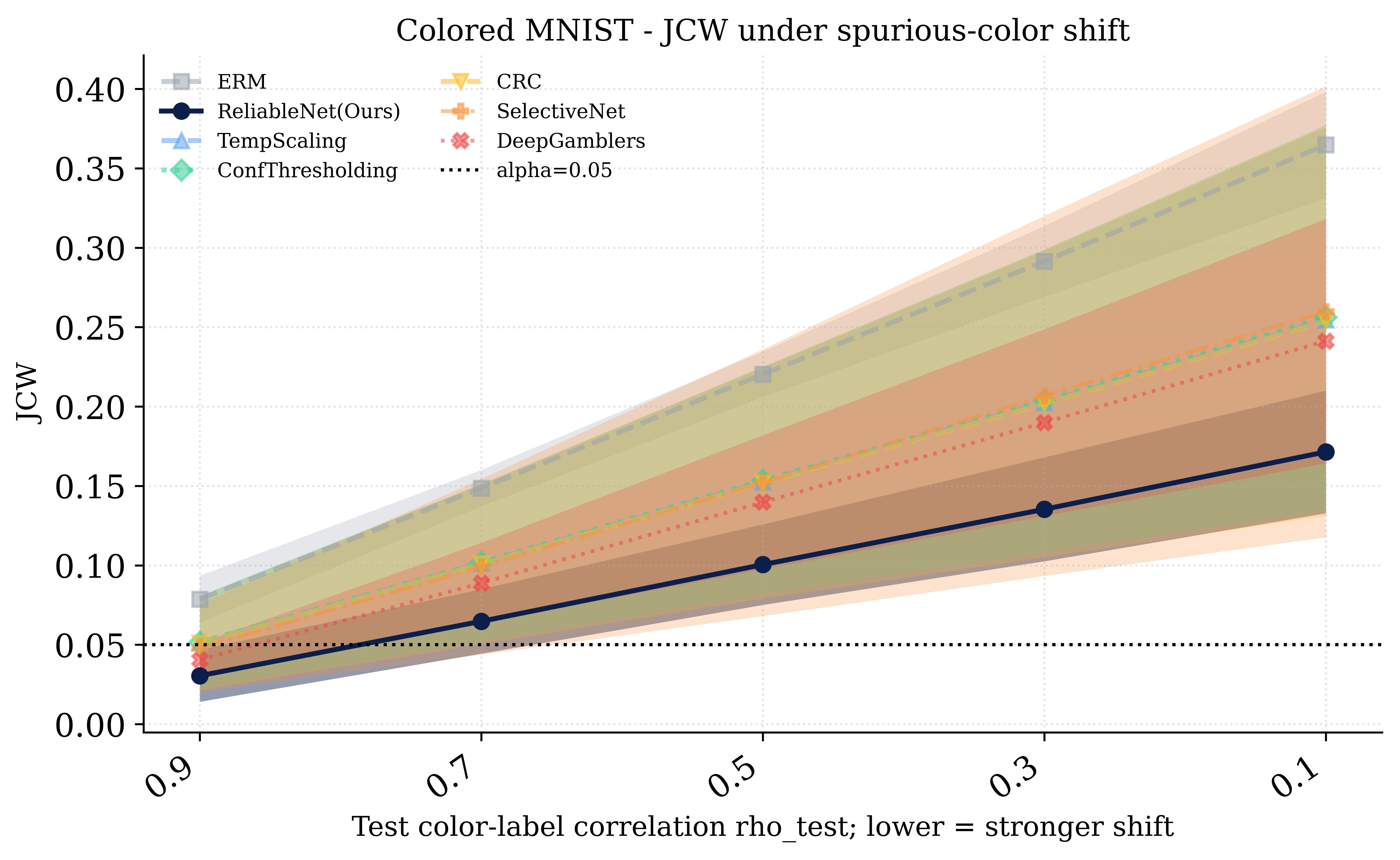}
        \caption{JCW  error at sweeping $\rho_{\text{test}}$.}
    \end{subfigure}
    \hfill
    \begin{subfigure}[b]{0.48\textwidth}
        \centering
        \includegraphics[width=\linewidth]{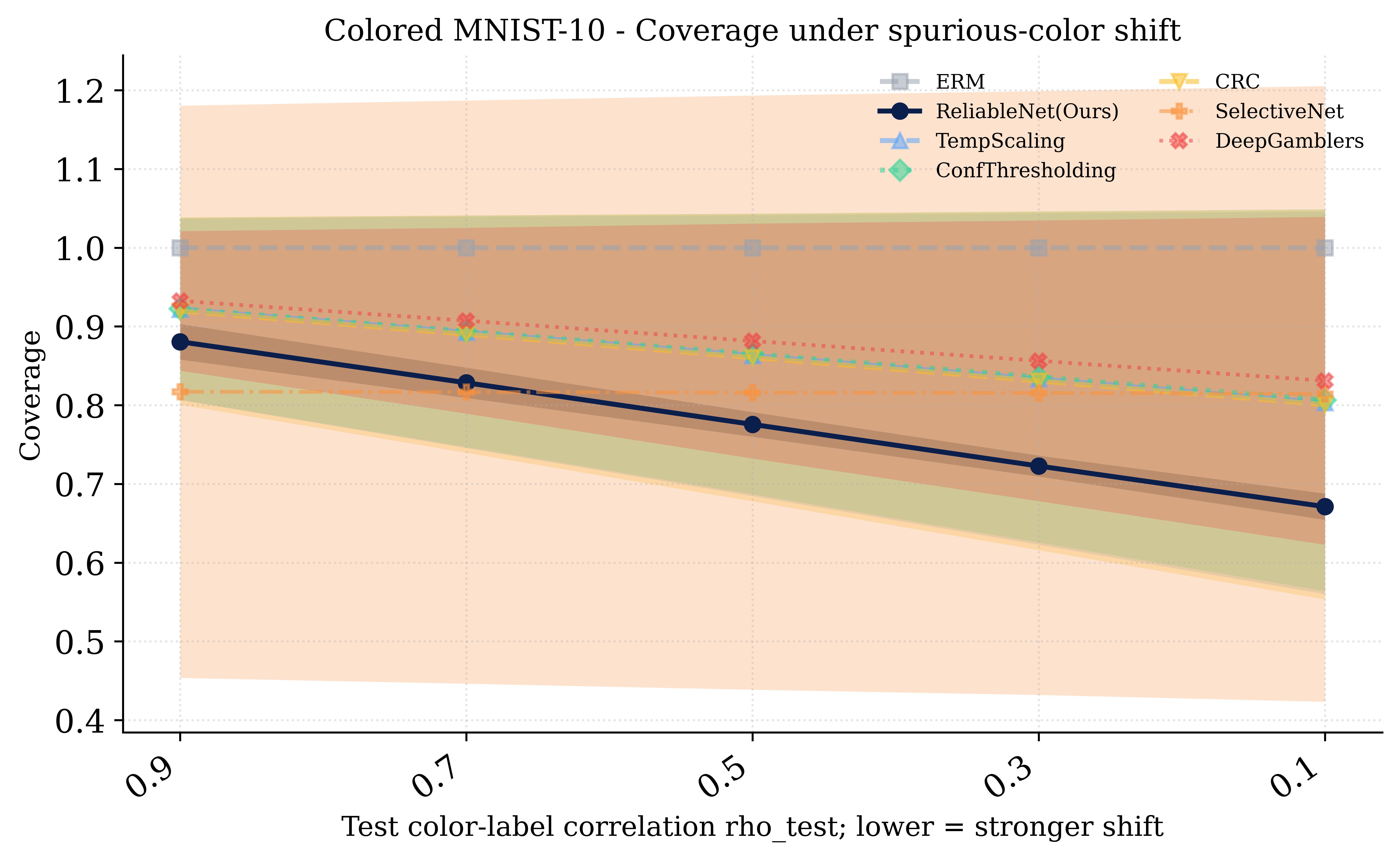}
        \caption{Empirical coverage at sweeping $\rho_{\text{test}}$.}
    \end{subfigure}
    \caption{Evaluation under sweeping $\rho_{\text{test}}$.}
    \label{fig_mnist_ind}
\end{figure}

\paragraph{CIFAR-10 (Noise severity).}
Figure  ~\ref{fig_cifar10} assesses all the methods under input  Gaussian noise  perturbation. 
Here, coverage responds to the  shift: all selective methods reject increasingly many inputs as severity grows. Robustness  here, therefore, combines margin with adaptive rejection. At $\sigma=0$ ReliableNet attains  $\mathrm{JCW}\approx0.006$, a sixth of the budget $\alpha=0.03$, whereas the post-hoc  baselines sit on the constraint boundary by construction; at $\sigma=0.05$ this slack is 
decisive, as ReliableNet remains below the risk budget ($\approx0.022$) while every competing method has  already violated ($2$-$2.7\alpha$, and $0.34$ for ERM). The ordering persists at all  severities, with ReliableNet reducing confident-error rates by $2$-$3\times$ relative to  the strongest baseline. Notably, baseline coverage \emph{increases} between $\sigma=0.1$ and 
$\sigma=0.25$ while their JCW rises,   the signature of overconfidence under corruption,    whereas ReliableNet's coverage remains more stable, indicating that constrained training yields  confidence that degrades more faithfully under shift.

\begin{figure}[H]
    \centering
    \begin{subfigure}[b]{0.48\textwidth}
        \centering
        \includegraphics[width=\linewidth]{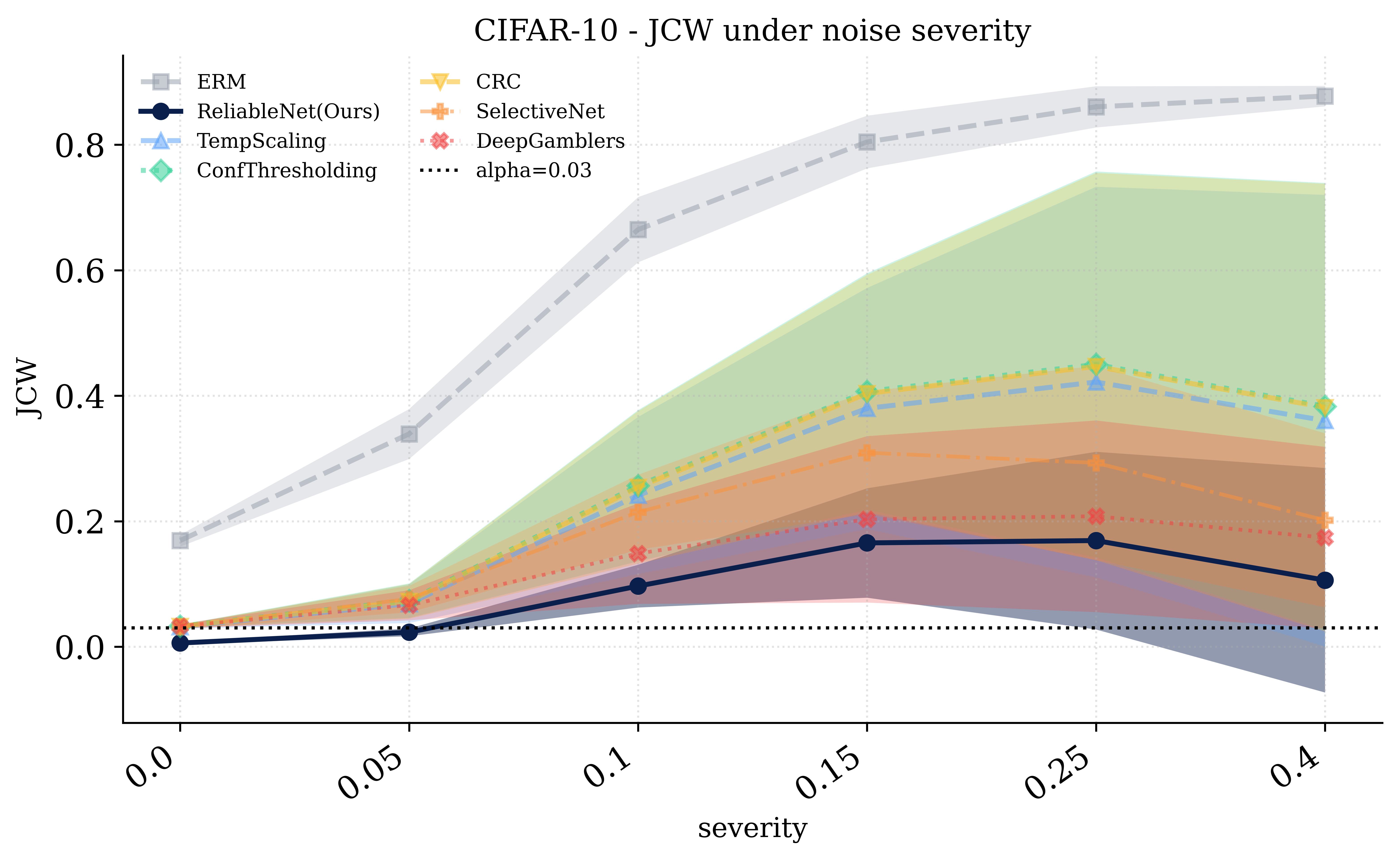}
        \caption{JCW  error ($\alpha = 0.03$).}
    \end{subfigure}
    \hfill
    \begin{subfigure}[b]{0.48\textwidth}
        \centering
        \includegraphics[width=\linewidth]{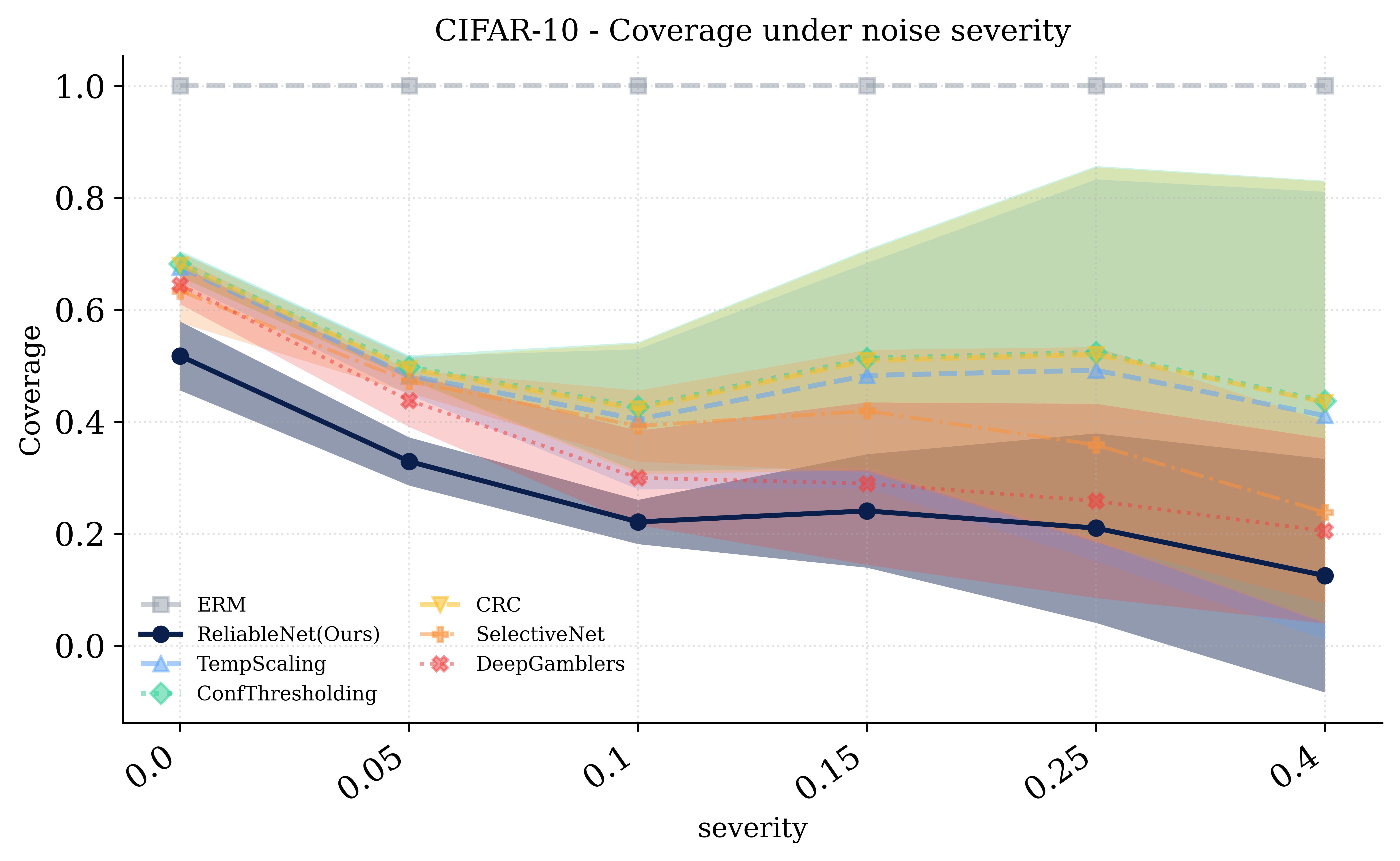}
        \caption{Empirical coverage.}
    \end{subfigure}
    \caption{Evaluation under Gaussian noise injection (varying noise standard deviation $\sigma$)  on CIFAR10 (train ID data).}
    \label{fig_cifar10}
\end{figure}

\paragraph{UNSW-NB15 Unseen-Attack Analysis.}
% In this experiments  we notice that in most unseen attacks ReliableNet manages to keep its JCW below the specified budget ($\alpha=0.05$), except for the Analysis attack.
Figure \ref{fig_un} decomposes the UNSW-NB15 unseen-attack split by family, reporting JCW and coverage  respectively.  Coverage alone suggests excessive conservatism, ReliableNet retains only 45\% of Reconnaissance and 37\% of Shellcode samples versus $78-100\%$ for the baselines, but these are precisely the families where confidence-based baselines fail (JCW up to $2 \alpha$; ERM 0.19 and 0.13), while ReliableNet attains 0.012 and 0.007. Where confidence remains informative (Backdoor, Worms), it retains 90-96\% coverage while providing the lowest JCW. This gives a clear message: abstention is targeted, not uniform.
\begin{figure}[H]
    \centering
    \begin{subfigure}[b]{0.48\textwidth}
        \centering
        \includegraphics[width=\linewidth]{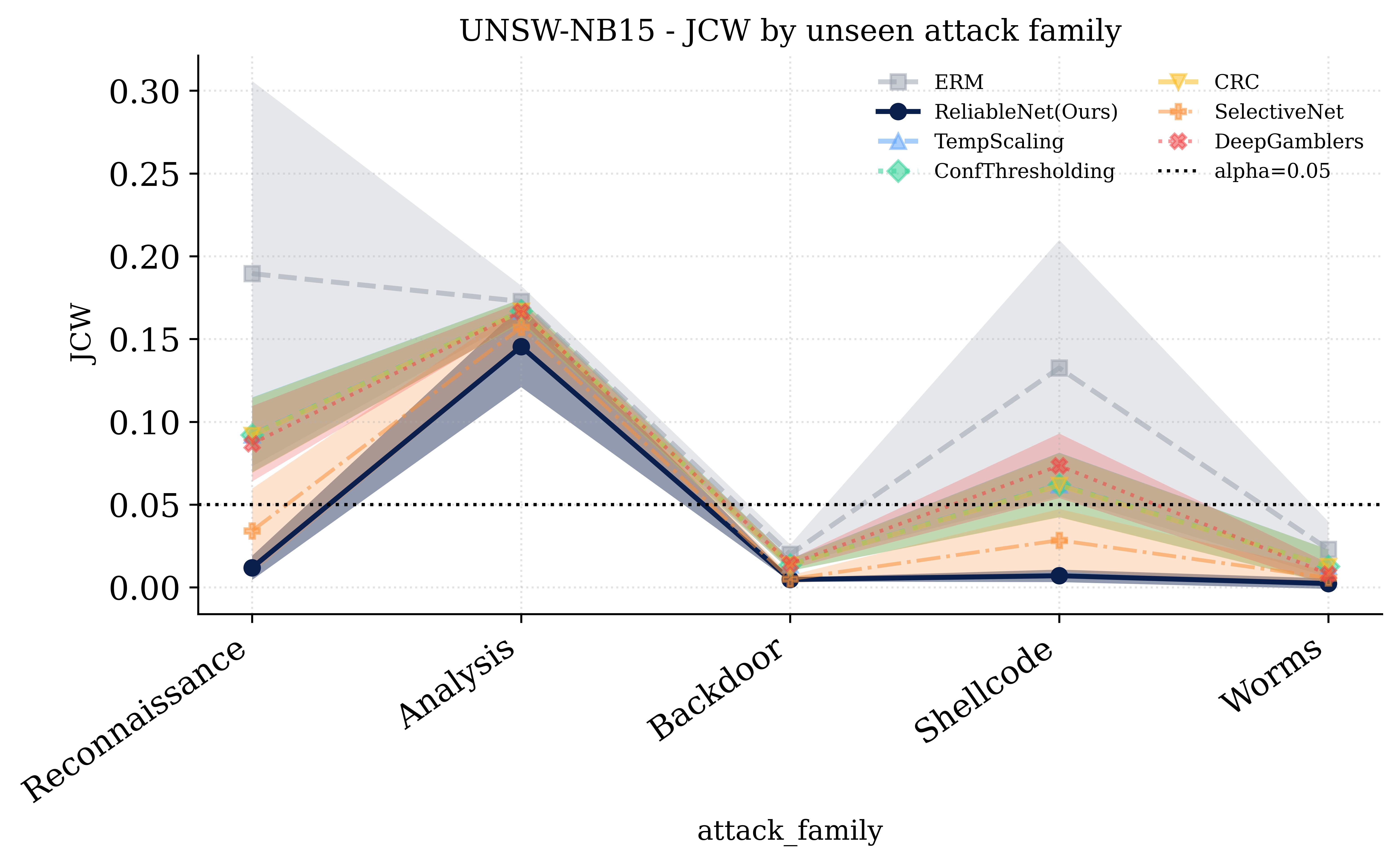}
        \caption{JCW  error ($\alpha = 0.05$).}
    \end{subfigure}
    \hfill
    \begin{subfigure}[b]{0.48\textwidth}
        \centering
        \includegraphics[width=\linewidth]{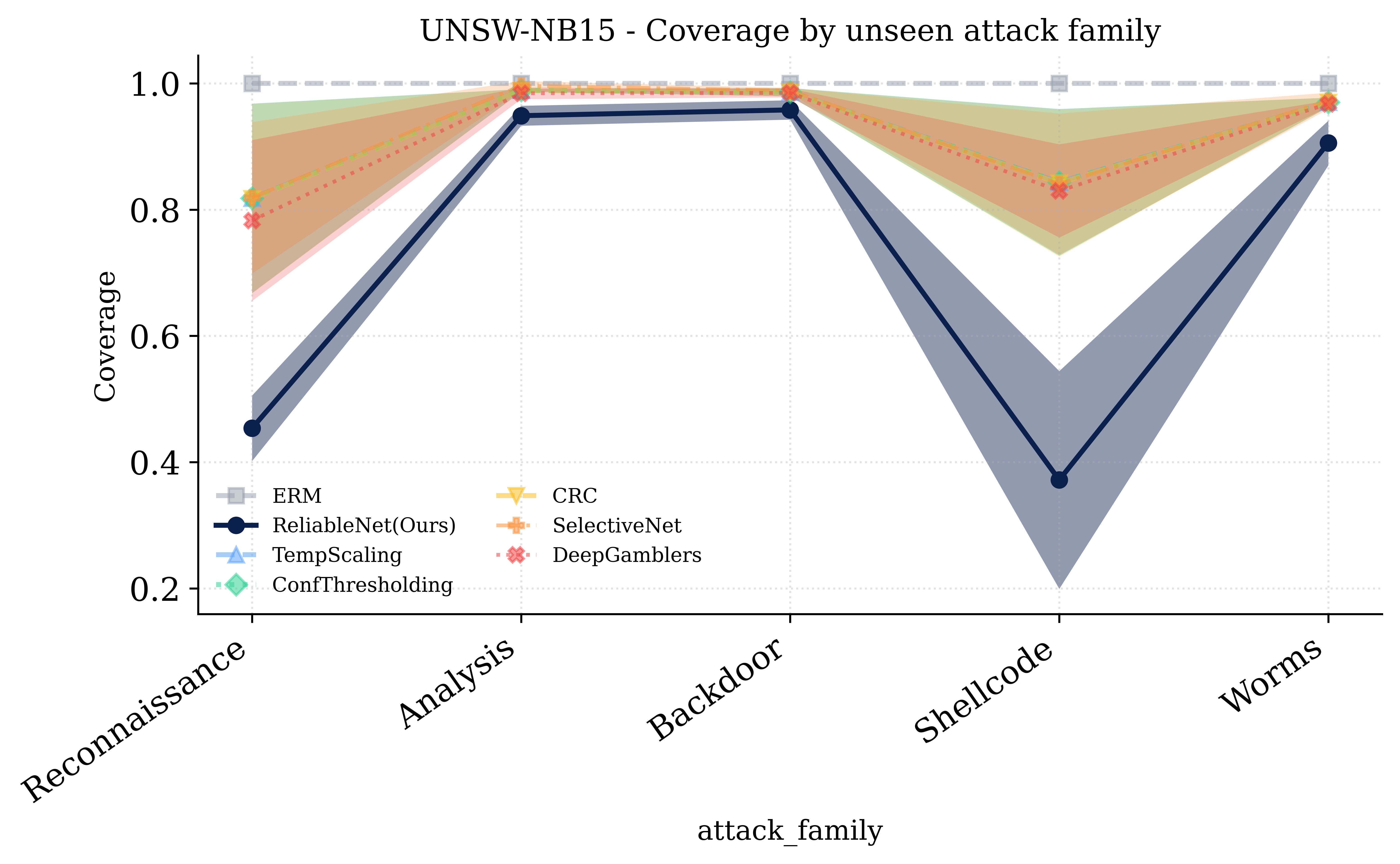}
        \caption{Empirical coverage.}
    \end{subfigure}
    \caption{Evaluation under unseen attacks.}
    \label{fig_un}
\end{figure}

\section{Conclusion}
\label{sec_conclusion}

High-confidence misclassification is not an ordinary prediction error. It is a reliability failure that can suppress abstention, human review, or other corrective mechanisms precisely when the model is wrong. This paper introduced \textbf{ReliableNet} to address this failure mode directly. Rather than treating reliability as a post-hoc calibration problem, we formulate the probability of being simultaneously confident and wrong as a probabilistic constraint and incorporate this constraint into ERM training.
The central idea is that confident-error control admits a clean probabilistic formulation through the Joint Confident-Wrong probability, $\mathrm{JCW}$. By encoding the confident-wrong event through the scalar violation function $g_\theta$, the target reliability requirement becomes a standard probabilistic constraint. We then adapt the inner approximation of probabilistic-constrained optimization to obtain a smooth conservative surrogate that can be optimized by gradient-based training. At the population level, feasibility of the smooth inner surrogate implies feasibility of the original JCW constraint. At the finite-sample level, we further provide certification tools that connect empirical feasibility to population reliability, including a held-out Clopper-Pearson certificate for the hard JCW event.

The experiments support the main claim that confident-error control can be learned during training. Across tabular and image-based benchmarks, ReliableNet consistently reduces high-confidence errors while maintaining competitive accuracy under the evaluated distribution shifts. The in-distribution certification results show that ReliableNet satisfies the held-out JCW certificate on all seeds for all datasets, unlike the benchmark methods. The comparison with ERM, temperature scaling, confidence thresholding, conformal risk control, SelectiveNet, and Deep Gamblers further shows that ReliableNet is not merely a post-hoc thresholding rule: it changes the learned confidence behavior itself. This is especially visible in settings where the accepted region must become more selective to keep confident mistakes under control.

The results also clarify the relationship between ReliableNet, calibration, and selective prediction. ReliableNet does not optimize expected calibration error, coverage, or selective risk directly. Nevertheless, by suppressing confident errors, it often improves calibration in the high-confidence regime and yields strong accepted-sample reliability. The JCW decomposition explains this behavior: controlling $\mathrm{JCW}$ constrains the product of coverage and high-confidence error rate. A reliable model may therefore trade some coverage for substantially safer accepted predictions. This is not a weakness of the constraint but its intended behavior in decision-sensitive applications.

The theoretical analysis complements the empirical findings: the target confident-wrong event up to a null boundary set can be exactly encoded, ensuring that the probabilistic constraint controls the intended failure mode rather than an unrelated surrogate.  ReliableNet induces a selective prediction mechanism. The main ReliableNet tool, JCW control also bounds a weighted high-confidence calibration error under the overconfidence condition, providing a theoretical explanation for the calibration improvements observed in several experiments. The finite-sample feasibility results further distinguish the population inner approximation from the empirical learning problem and justify the use of independent certification folds for post-training reliability claims.

\paragraph{Limitations and future work.}
Some limitations remain. First, ReliableNet is developed here for classification with a scalar confidence threshold. Extending the same probabilistic-constrained principle to regression, structured prediction, and sequential decision-making is an important direction for future work. Second, although the experiments include tabular, synthetic, spurious-correlation, and image benchmarks, larger-scale image and foundation-model settings remain to be studied. Third, the uniform finite-sample surrogate certificate is theoretically useful but can be conservative for high-dimensional neural networks because it depends on parameter dimension and global Lipschitz constants. In practice, the held-out Clopper-Pearson certificate is more directly interpretable, but it requires an independent certification sample from the distribution being certified and can be conservative when the certification fold is small. Finally, the confidence threshold $\varepsilon^\ast$ is selected before constrained 
training, and the final model is chosen by held-out feasibility via a validation  checkpoint; a fully joint treatment of threshold selection, checkpoint selection, and  reliability certification,  ideally on a single certification budget, would further  strengthen the framework. A related open direction is subgroup-conditional control: our per-family analysis on UNSW-NB15 shows that marginal JCW feasibility does not guarantee  feasibility within every subpopulation, motivating group-conditional constraints.

\medskip
\noindent Overall, ReliableNet  provides a principled path toward classifiers that are not only accurate on average, but also safer in the predictions they choose to make confidently.

\bibliography{main} 
\bibliographystyle{unsrtnat}

\appendix
\section{Appendix}
\label{Appendix}
This part presents some useful properties of the Geletu-Hoffman parametric function.

% ================================================
\subsection{Some Properties of the Smoothing Family}
\begin{lemma}[Properties of the smoothing family]
\label{lem:family_props}
Let $h(s):=\mathbf 1\{s\ge0\}$ and, for $0<m_2<\dfrac{m_1}{1+m_1}$ and $\tau\in(0,1)$,
\[ \zeta(\tau,s)=\frac{1+m_1\tau}{1+m_2\tau e^{-s/\tau}},\qquad s\in\mathbb{R}.\]
Then:
\begin{enumerate}
\item[\textnormal{(i)}] \emph{(Boundedness)}  $0<\zeta(\tau,s)\le 1+m_1\tau$ for all $s\in\mathbb{R}$.
\item[\textnormal{(ii)}] \emph{(Conservatism)} $\zeta(\tau,s)\ge h(s)$ for all $s\in\mathbb{R}$.
\item[\textnormal{(iii)}] \emph{(Pointwise limit)} $\displaystyle\lim_{\tau\downarrow0}\zeta(\tau,s)=h(s)$ for all $s\in\mathbb{R}$.
\end{enumerate}
\end{lemma}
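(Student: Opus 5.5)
The plan is to verify the three items directly from the closed form of $\zeta(\tau,s)$, using only elementary monotonicity and limits. Throughout I fix $\tau\in(0,1)$ and write $w(s):=m_2\tau e^{-s/\tau}>0$, so that $\zeta(\tau,s)=(1+m_1\tau)/(1+w(s))$.

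For (i), I will use $m_2>0$ and $\tau>0$, which give $w(s)>0$ and hence a denominator $1+w(s)\in(1,\infty)$. The numerator $1+m_1\tau$ is positive. Therefore $0<\zeta(\tau,s)<1+m_1\tau$, which gives the stated bound.

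For (ii), the first step is monotonicity. Since $w(s)$ is strictly decreasing in $s$, $\zeta(\tau,\cdot)$ is strictly increasing; equivalently, the derivative computed in Lemma~\ref{lem:smoother_conservative} is positive. I then split into two cases.
\begin{itemize}
\item If $s<0$, then $h(s)=0<\zeta(\tau,s)$ by (i).
\item If $s\ge 0$, monotonicity gives $\zeta(\tau,s)\ge\zeta(\tau,0)=(1+m_1\tau)/(1+m_2\tau)$. This is at least $1$ exactly when $m_1\ge m_2$.
\end{itemize}
The hypothesis gives $m_2<m_1/(1+m_1)<m_1$, so $\zeta(\tau,0)>1=h(s)$ on $s\ge0$. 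The boundary point $s=0$ is the binding case, and it is the only place the parameter condition enters. Only $m_2<m_1$ is needed here; the sharper condition $m_2<m_1/(1+m_1)$ is stronger than required for this lemma.

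For (iii), I fix $s$ and let $\tau\downarrow0$, splitting by the sign of $s$. The numerator $1+m_1\tau\to1$ in every case.
\begin{itemize}
\item If $s>0$: $e^{-s/\tau}\to0$ and $\tau\to0$, so $w\to0$ and $\zeta\to1=h(s)$.
\item If $s=0$: $\zeta(\tau,0)=(1+m_1\tau)/(1+m_2\tau)\to1=h(0)$.
\item If $s<0$: $w=m_2\tau e^{|s|/\tau}$, and the exponential growth beats the linear factor $\tau$. Substituting $u=1/\tau\to\infty$, $\tau e^{|s|/\tau}=e^{|s|u}/u\to\infty$. Hence the denominator diverges and $\zeta\to0=h(s)$.
\end{itemize}

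None of these steps is a real obstacle. The only points needing care are the $s=0$ case in (ii), where the inequality $m_1>m_2$ is essential, and the $s<0$ case in (iii), where one must check that the factor $\tau$ does not cancel the exponential blow-up.
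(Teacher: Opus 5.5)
Your proposal is correct and follows essentially the same route as the paper's proof: the substitution $w=m_2\tau e^{-s/\tau}>0$ for boundedness, strict monotonicity in $s$ with the binding case $s=0$ for conservatism, and a case split on the sign of $s$ for the limit $\tau\downarrow0$. Your remark that (ii) needs only $m_2<m_1$ is accurate; note that, exactly like the paper, you tacitly assume $m_1>0$ (for positivity of $1+m_1\tau$ and for $m_1/(1+m_1)<m_1$), and the stated hypothesis does not literally force this, since e.g.\ $m_1=-2$, $m_2=1$ satisfies it while making $1+m_1\tau<0$ for $\tau\in(1/2,1)$, which would break (i) and (ii).
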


\begin{proof}
Write $w:=m_2\tau e^{-s/\tau}>0$, so the denominator is $1+w>1$.

\emph{(i)} Since $1+w>1$ and the numerator $1+m_1\tau>0$, $0<\zeta(\tau,s)=\tfrac{1+m_1\tau}{1+w}<1+m_1\tau$; the upper value $1+m_1\tau$ is approached as $s\to+\infty$ (then $w\to0$).

\emph{(ii)} The map $s\mapsto\zeta(\tau,s)$ is strictly increasing, because as $s$ increases $e^{-s/\tau}$ decreases, hence $w$ decreases and
$\zeta=\tfrac{1+m_1\tau}{1+w}$ increases. For $s<0$, $h(s)=0<\zeta(\tau,s)$ by (i). For $s\ge0$, monotonicity gives $\zeta(\tau,s)\ge\zeta(\tau,0)=\tfrac{1+m_1\tau}{1+m_2\tau}>1=h(s)$. Hence $\zeta(\tau,s)\ge h(s)$ for all $s$.

\emph{(iii)} Fix $s$ and let $\tau\downarrow0$. If $s>0$ then $-s/\tau\to-\infty$, so $e^{-s/\tau}\to0$, the denominator $\to1$, and
$\zeta\to1=h(s)$. If $s<0$ then $-s/\tau\to+\infty$ and $\tau e^{-s/\tau}\to+\infty$ (the exponential dominates the factor $\tau\downarrow0$), so the denominator $\to+\infty$ and $\zeta\to0=h(s)$. If $s=0$ then $\zeta(\tau,0)=\tfrac{1+m_1\tau}{1+m_2\tau}\to1=h(0)$. In all cases $\zeta(\tau,s)\to h(s)$.
\end{proof}

\end{document}